\theoremstyle{definition}
\newtheorem{definition}{Definition}
\newtheorem{theorem}{Theorem}
\newtheorem{lemma}{Lemma}
\title{Invariant Anomaly Detection under \\Distribution Shifts: A Causal Perspective}
\author{%
  João B. S. Carvalho, Mengtao Zhang, Robin Geyer, Carlos Cotrini, Joachim M. Buhmann\\
  Institute for Machine Learning\\
  Department of Computer Science\\
  ETH Z\"urich\\
  \texttt{\{joao.carvalho, mengtao.zhang, robin.geyer, ccarlos, jbuhmann\}@inf.ethz.ch} \\
}
\begin{document}

\maketitle

\begin{abstract}
Anomaly detection (AD) is the machine learning task of identifying highly discrepant abnormal samples by solely relying on the consistency of the normal training samples. Under the constraints of a distribution shift, the assumption that training samples and test samples are drawn from the same distribution breaks down. In this work, by leveraging tools from causal inference we attempt to increase the resilience of anomaly detection models to different kinds of distribution shifts. We begin by elucidating a simple yet necessary statistical property that ensures invariant representations, which is critical for robust AD under both domain and covariate shifts. From this property, we derive a regularization term which, when minimized, leads to partial distribution invariance across environments. 
Through extensive experimental evaluation on both synthetic and real-world tasks, covering a range of six different AD methods, we demonstrated significant improvements in out-of-distribution performance. Under both covariate and domain shift, models regularized with our proposed term showed marked increased robustness. Code is available at: \url{https://github.com/JoaoCarv/invariant-anomaly-detection}.
\end{abstract}

\section{Introduction}
\emph{Anomaly detectors} are the subject of increased interest in fields such as finance (\cite{ahmed2016survey,hilal2022financial}), medicine (\cite{schlegl2019f}), and security (\cite{mothukuri2021federated,siddiqui2019detecting,hosseinzadeh2021improving}). Having been trained on a sample from an unknown distribution, these models are capable of identifying abnormal objects unlikely to come from the original distribution~(\cite{bishop2006pattern}). Anomaly detection (AD) stands apart from supervised classification as it does not involve anomalies during training, making it challenging to articulate a model that depicts the class of objects deemed as normal.

AD as a field boasts a plethora of diverse methodologies (\cite{ruff2021unifying}). Current detectors have demonstrated the advantage of approaches based on representation learning (\cite{reiss2021mean, Deng_2022_CVPR}). In this context, an \emph{encoder} maps objects to \emph{representations} which capture the most distinctive features of an object. In addition, it strives to map the class of normal objects onto a subset characterized by a more regular shape, thereby rendering representations from abnormal samples easily identifiable by comparison. Central to this second goal is a notable vulnerability of representation learning-based methods: they hinge on the assumption of independent and identically distributed (i.i.d.) training and test data. This implies that normal samples in the training data are expected to be sampled identically in the test data, thereby being mapped to the same vicinity in the representation space - an assumption that is frequently violated in real-world scenarios (\cite{koh2021wilds}). \looseness=-1

Indeed, distribution shifts in the context of AD present a unique challenge because it involves discerning two types of distribution shifts targeting the distribution of the normal objects, $p_n$. Anomalies can be conceptualized as samples from a different distribution $p_a$, which scarcely overlaps with the mass of $p_n$. This distribution $p_a$ can be interpreted as a shifted variant of $p_n$, and the task of anomaly detection becomes identifying this shift. Changes in the environment can also produce distribution shifts. When such a distribution shift transpires from the training data to the test data, $p_n$ is transformed into $\hat{p}_n$. If anomalies are naively perceived as merely samples from the original normal samples that are sufficiently different, then all samples from $\hat{p}_n$ could be wrongly classified as abnormal samples. Therefore, it is crucial to note that anomaly detectors need to identify shifted versions of $p_n$ that generate anomalies while being resilient to those originating from any other $\hat{p}_n$.

Additionally, anomaly detectors are particularly susceptible to changes in features that are incidentally correlated with being normal. This is because the training data consists only of normal objects, which in conjunction with consistent exposure to any recurrent features present in the environment, inherently facilitates the emergence of shortcut learning (\cite{geirhos2020shortcut}). These effects can significantly impact the reliability of crucial anomaly detectors. For example, consider an anomaly detector designed for monitoring bank transactions, predominantly occurring during business hours. Such a detector could mistakenly hinge on the transaction time to determine anomalies. This approach could lead to fraudulent transactions executed at noon going undetected, while regular transactions made by individuals active at night might erroneously be flagged as suspicious. Thus, the effective detection of anomalies requires the model to go beyond such superficial correlations and instead focus on more substantive, causally relevant features.

In this work, we formalize the problem of anomaly detection in the presence of domain and covariate shifts using causality and information theory (\cite{wang2022unified,wang2022out,jiang2022invariant,liu2021learning,linsker1988self,stone2004independent,hjelm2018learning}). We view the generation of the dataset as a causal graph and distinguish between the environment that generates the dataset, the content features, the style features, and representations that are computed from these features. We note that to mitigate vulnerabilities to distribution shifts,  we need to produce representations that are invariant to the environment where the object comes from (\cite{veitch2021counterfactual,wang2022out,wang2022unified}). That is, the representation is not causally influenced by the environment. Through this formalization, we derive and formally justify a necessary condition for learning invariant representations within anomaly detection. To impose this condition, we introduce the partial conditional invariant regularization (PCIR) term for learning invariant representations in this setting. This term minimizes discrepancies between representations from different environments by minimizing discrepancies like the maximum mean discrepancy (MMD) (\cite{gretton2012kernel}). Similiar approaches relying on MMD regularization for domain generalization in different tasks have proven to be valuable (\cite{Kang_2019_CVPR, Long_2013_ICCV}). However, to our knowledge, all methods relying on conditional invariance have been applied under the assumption of access to all classes during training. This is not the case in the AD setting.

\paragraph{Contributions:} \textbf{(1)} We conduct a series of experiments to illustrate the limitations of current AD methods in handling various distribution shifts.
\textbf{(2)} Through causal inference we propose a novel formalization of the dual requirements of informativeness and invariance for robust AD models, leading to a new insight, as described in Theorem~\ref{theor:main_theorem}.
\textbf{(3)} We introduce a  regularization term to induce \emph{partial conditional distribution invariance}, which serves as a strategic measure to ensure robust AD amid distribution shifts.
\textbf{(4)} Our empirical findings demonstrate the effectiveness of our regularization term; all tested models exhibited an enhancement in performance under both covariate and domain shifts when augmented with \textit{partial conditional invariant regularization} (PCIR).

\begin{figure}[ht]
    \centering
    \includegraphics[width=1\textwidth]{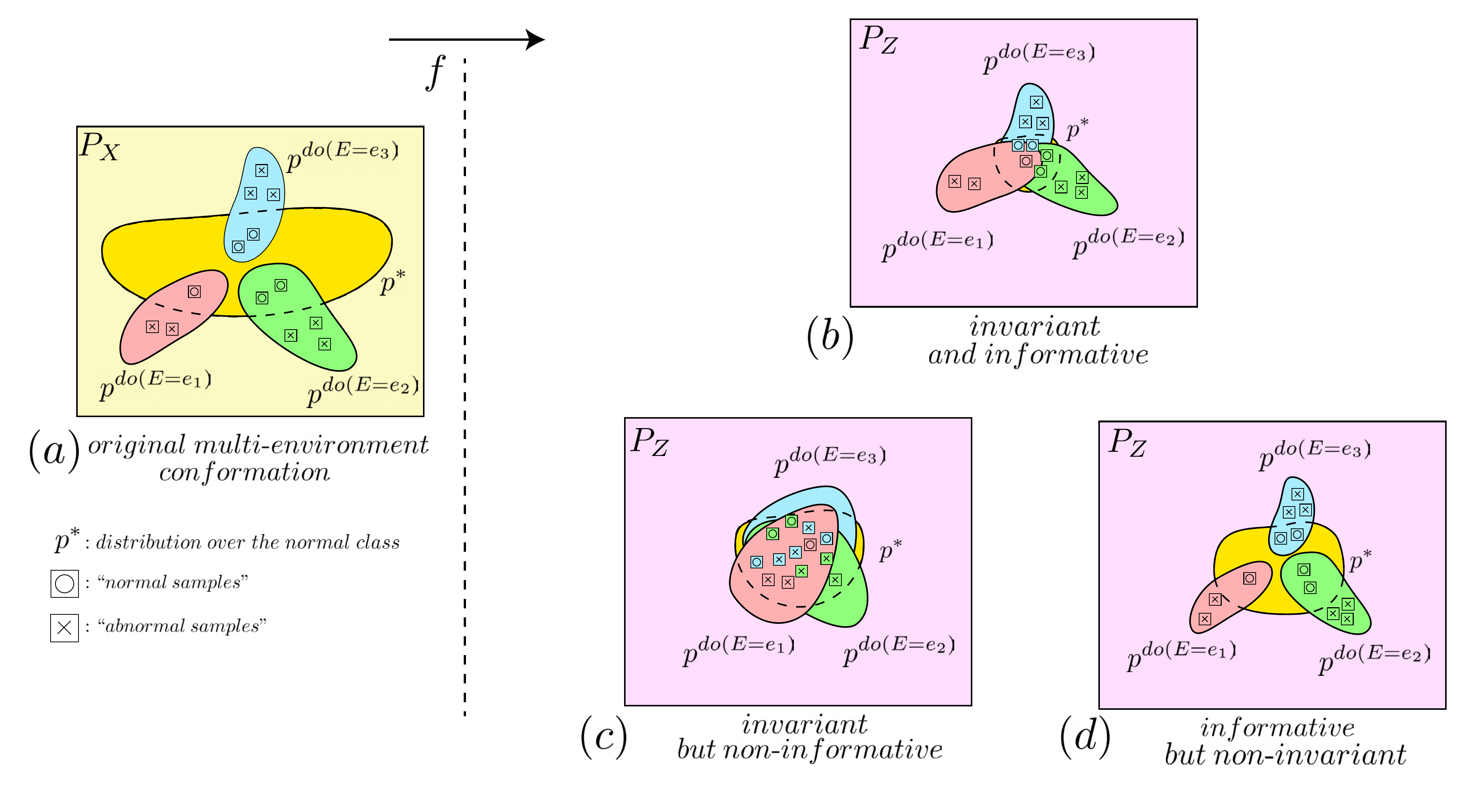}
    \caption{Comparative analysis of various mapping strategies from $\mathcal{X}$ to $\mathcal{Z}$ in the context of AD under distribution shifts. \textbf{(a)} Anomaly detection setting with multiple environments in $\mathcal{X}$. \textbf{(b)} Ideal scenario where the mapping is both invariant and informative; normal samples from different environments converge to the same subset of $\mathcal{Z}$, maintaining the distinction between normal and abnormal samples. \textbf{(c)} Mapping is invariant but not informative, resulting in both normal and abnormal representations collapsing to the same subset of $\mathcal{Z}$. \textbf{(d)} Mapping is informative but not invariant; although the mappings of all environments remain distinct, this makes the encoder vulnerable to distribution shifts.}
    \label{fig:setup}
\end{figure}

\section{Related Works}

\subsection{Distribution Invariant Representation Learning}
The relationship between invariance and robustness to shifts in data distribution has been extensively explored, notably within the field of causal inference.

A range of studies has pursued the development of invariant representation learning, achieving notable success (\cite{Long_2013_ICCV,Kang_2019_CVPR, mitrovic2020representation,Lv_2022_CVPR,nguyen_domaindensity}). Moreover, a detailed examination of various forms of invariance-based methods derived from underlying causal graphs has been provided by \citet{veitch2021counterfactual} and \citet{wang2022unified}.

Causal inference has also been shown to foster distributional robustness, as documented in \citet{meinshausen2018causality}. Building on this notion, \textit{Invariant Risk Minimization} (IRM) (\cite{arjovsky2019invariant}) introduced a novel optimization principle hinged on maintaining invariance across diverse environments. However, \citet{koh2021wilds} demonstrated that this approach tends to falter when subjected to real-world distribution shifts. \citet{yao2022improving} has introduced selective data augmentations to induce distribution shift robustness. However, due to its reliance on class labels, it is not directly applicable to AD.

\subsection{Deep Anomaly Detection}
A common strategy in AD involves the deployment of autoencoders. These methods operate by learning normal patterns within a dataset and then attempting to \emph{reconstruct} new samples. The premise is that anomalous samples, due to their distribution differences, will yield higher reconstruction errors. Several studies have adopted this method, demonstrating its efficacy in a range of contexts (\cite{gong2019memorizing},\cite{park2020learning},\cite{Yan_Zhang_Xu_Hu_Heng_2021}).

In the wake of recent advancements in self-supervised representation learning (\cite{he2019moco,chen2020simple}), \emph{contrastive learning} has emerged as a viable method for AD. Approaches such as \textit{PANDA} (\cite{reiss2021panda}), and the more recent \textit{MeanShift} (\cite{reiss2021mean}) and \textit{CFA} (\cite{lee2022cfa}) harness these strategies to finetune pre-trained encoders to the AD setting. Conversely, \textit{CSI} (\cite{tack2020csi}) and \textit{DROC} (\cite{goyal2020drocc}) use contrastive learning to learn a representation of the normal data from scratch. 
\emph{Knowledge distillation}-based methods also hold a significant place in AD research. A student-teacher framework, which incorporates discriminative latent embeddings, was first introduced in \cite{bergmann2020uninformed}. The most recent iterations, \textit{Reverse Distillation} (\cite{Deng_2022_CVPR}) and \textit{STFPM} (\cite{wang2021student_teacher}) have attained state-of-the-art performance across numerous AD tasks. Recently, \textit{Red Panda} (\cite{cohen2023red}), has adapted \textit{PANDA} by adding an additional disentanglement term to the loss derived from \cite{kahana2022contrastive}, with the intent of inducing robustness to previously identified attributes.  

Despite the effectiveness of these methods, it is important to note that they primarily focus on AD within datasets where the training and test data distributions are identical. As we will show, its performance tends to degrade when confronted with data that exhibit different kinds of distribution shifts. 

One way to improve robustness to distribution shifts entails pretraining the encoder using an invariance-inducing method (for example, IRM or LISA), applied to an unrelated pretext task using the same data (\cite{smeu2022envaware}). Unfortunately, this requires the existence of additional task labels, which is uncommon for AD tasks. Moreover, as we experimentally demonstrate, this approach achieves limited performance when paired with state-of-the-art AD methods. 

\section{Formalization}

\subsection{Background}
Our formulation operates under the assumption of a given set of \emph{environments}, denoted as $\mathcal{E}$, and a sample space, $\mathcal{X}$. We have a sample $D = \{x_1, \ldots, x_n\}$ drawn from an unknown distribution $p_n$, whose mass is mostly concentrated on a subset $N \subseteq \mathcal{X}$. This subset, $N$, defines the \emph{normal class}. Given that $X \sim p_n$, we consider $X$ as a pair of random variables $X_a$ and $X_e$, such that $X = (X_a, X_e)$. Here, the environment $E$ only influences $X_e$. Conceptually, $X_a$ represents the component of $X$ that determines whether $X$ is an anomaly, while $X_e$ comprises style features. These style features, while unaffected by the anomaly status of an object, are influenced by the environment. Note that this assumption implies that $\mathcal{X} = \mathcal{X}_a \times \mathcal{X}_e$, for some appropriate sets $\mathcal{X}_a$ and $\mathcal{X}_e$. Fig.~\ref{fig:setup} (a) illustrates our setting under three different environments.

Our main goal in this section is to elicit the requirements that representations should fulfil in order to lead to effective anomaly detectors. More precisely, we argue that representations must simultaneously (i) maximize the information they contain about the original objects and (ii) they must be invariant to the environment.

\subsection{Requirements for Robust Encoders}

We illustrate the necessity of invariance and informativeness for effective AD using an example with random variables $X_1, X_2, X_3$. We assume that they are distributed over the set $N$ of all points $(x_1, x_2, x_3) \in \mathbb{R}^3$ such that $0 \leq x_1 = x_2 \leq 1$. We assume that $x_3$ denotes the environment. Our dataset $D$ consists of points in $N$ where $x_3 = 0$.

Consider now the following encoders $f_1$, $f_2$, and $f_3$, where $f_1(x_1, x_2, x_3) = x_1$, $f_1(x_1, x_2, x_3) = (x_1, x_2)$, and $f_1(x_1, x_2, x_3) = (x_1, x_2, x_3)$. Note that encoder $f_1$ is invariant but lacks the necessary information to detect anomalies, as it can map different points to the same representation. Indeed, the normal point $(1, 1, 1)$ and the anomaly point $(1, 2, 1)$ would be mapped to the same representation, the number $1$. Hence, any anomaly detector based on this encoder can be easily evaded. Encoder $f_3$ captures all information but is not invariant, which leads to a potential evasion of AD. Encoder $f_2$, on the other hand, is both invariant and informative, correctly capturing the necessary information to detect anomalies while ignoring the environment variable $x_3$.

This simplified scenario emphasizes the dual necessity of invariance and informativeness for the development of robust anomaly detectors. Regrettably, as we will demonstrate, contemporary AD methods tend to excel in informativeness while significantly lagging in invariance. For a graphical illustration of this effect, please see Fig.~\ref{fig:setup} (b-d).

We now formalize the desiderata for representations computed by an encoder for AD.

\subsection{Invariant Representations}
Let $f$ be an encoder mapping a $\mathcal{X}$ to a representation space $\mathcal{Z}$. We also let $Z = f(X)$ denote the \emph{representation} of $X$.

\begin{definition}
We say that $Z$ is an \emph{invariant representation} of $X$ under \emph{domain shifts} if
\begin{equation}
    p^{\textit{do}(E=e)}(Z=\cdot) = p^{\textit{do}(E=e')}(Z=\cdot),  \quad \text{for any $e, e' \in \mathcal{E}$.}
\end{equation}
\end{definition}

\begin{definition}
We say that $Z$ is an \emph{invariant representation} of $X$ under \emph{covariate shifts} if
\begin{equation}
    p^{\textit{do}(X_e=x)}(Z=\cdot) = p^{\textit{do}(X_e=x')}(Z=\cdot), \quad \text{for any $x,x' \in \mathcal{X}_e$.}
\end{equation}
\end{definition}

We simply say that $Z$ is an invariant representation of $X$ if it remains unaltered under both domain shifts and covariate shifts. Note that domain shifts are stronger than covariate shifts. Specifically, domain shifts entail assessing data from diverse sources with disparate parameters, whereas covariate shifts pertain to subtle alterations in the existing data. Consequently, the broader changes inherent to domain shifts usually exert a more substantial impact on data representation stability and are harder to tackle.

Invariant representations prevent anomaly detectors from incorrectly classifying objects that result from covariate and domain shifts. These representations are, by definition, resistant to changes in $E$ or $X_e$. As a result, adversaries cannot expect to alter the detection of an anomaly by merely using a different environment or changing style features of the anomaly.

However, solely enforcing invariant representations can be insufficient, and potentially lead to a collapse of the representations (see Fig.~\ref{fig:setup} (c)). For this reason, we argue that representations shall store as much information about the original object as possible. This principle is in line with the design of many popular encoders (\cite{linsker1988self,stone2004independent,hjelm2018learning}) and can be formalized using information theory.

\subsection{Mutual Information between Representations and Objects}

We can use the mutual information between two random variables as our metric to evaluate how much information about $X$ is captured by the representation $Z$.

\begin{definition}
The \emph{mutual information} $I(X; Z)$ between $X$ and $Z$ is defined as
\begin{equation}
    I(X; Z) = \int \int \frac{p_{X, Z}(x, z)}{p_X(x) p_Z(z)} dx \, dz.
\end{equation}
Here, $p_{X, Z}$ is the joint pdf of $(X, Z)$ and $p_X$ and $p_Z$ are the marginal pdfs of $X$ and $Z$, respectively.
\end{definition}

Intuitively, $I(X; Z)$ indicates how much information learning about one of $X$ or $Z$ reveals about the other. We argue then that encoders shall compute representations that keep as much information as possible from the original objects.

\section{Learning Invariant and Informative Representations}

Having outlined the prerequisites for anomaly detection representations, the next task is to devise strategies that enable encoders to learn both invariant and informative representations. Encouragingly, most contemporary AD methods inherently employ strategies that maximize the mutual information between the observation and the representation (\cite{tishby2015deep}). This is often achieved either via a reconstruction loss term (\cite{kong2019mutual}) or a contrastive loss term (\cite{sordoni2021decomposed,wu2020mutual,oord2018representation}). 

We dedicate the rest of this section to developing a strategy suitable for invariant representation learning under the scope AD.

\subsection{Learning Invariant Representations}
We start by introducing a causal graph for anomaly detection in Fig.~\ref{fig:causal_graph_ad}. In these graphs, $E$, $X_a$, and $X_s$ denote the environment, the relevant features, and the style features. We introduce two new random variables $W$ and $U$. $W$ is a binary variable indicating if the object is normal ($W = 0$) or an anomaly ($W = 1$). $U$ denotes all confounding factors. That is, factors that produce correlations between $X_a$ and $X_e$ during the sample generation process.

\begin{figure}
\centering
\begin{tikzpicture}[node distance=1.3cm and 1.3cm]

\begin{scope}[yshift=4cm]
\node[latent] (U1) {$U$};
\node[latent, below left=of U1] (W1) {$W$};
\node[latent, below right=of U1] (E1) {$E$};
\node[inner sep=0pt, right=0.1cm of E1] (image) 
{\includegraphics[width=0.5cm]{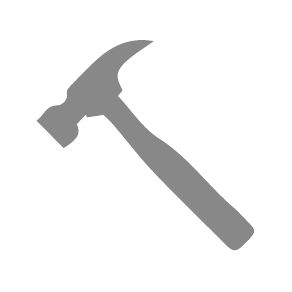}};
\node[latent, below=of E1] (X1) {$X_e$};
\node[inner sep=0pt, right=0.1cm of X1] (image) 
{\includegraphics[width=0.5cm]{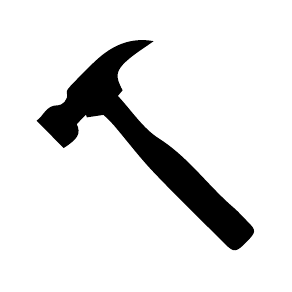}};
\node[latent, below=of W1] (C1) {$X_a$};
\node[latent, below right=of C1] (Z1) {$Z$};
\end{scope}

\draw[->] (E1) -- (X1);
\draw[->] (W1) -- (C1);
\draw[->] (W1) -- (X1);
\draw[->] (C1) -- (Z1);
\draw[->] (X1) -- (Z1);

\begin{scope}[xshift=7cm, yshift=4cm]
\node[latent] (U2) {$U$};
\node[latent, below left=of U2] (W2) {$W$};
\node[latent, below right=of U2] (E2) {$E$};
\node[inner sep=0pt, right=0.1cm of E2] (image) 
{\includegraphics[width=0.5cm]{gray_hammer.pdf}};
\node[latent, below=of E2] (X2) {$X_e$};
\node[inner sep=0pt, right=0.1cm of X2] (image) 
{\includegraphics[width=0.5cm]{hammer.pdf}};
\node[latent, below=of W2] (C2) {$X_a$};
\node[latent, below right=of C2] (Z2) {$Z$};
\end{scope}

\draw[->] (U2) -- (W2);
\draw[->] (U2) -- (E2);
\draw[->] (E2) -- (X2);
\draw[->] (W2) -- (C2);
\draw[->] (W2) -- (X2);
\draw[->] (C2) -- (Z2);
\draw[->] (X2) -- (Z2);

\end{tikzpicture}
\caption{Causal graphs for anomaly detection. 
The left figure shows the case of no confounding. The right figure shows the case of confounding. An intervention at the $E$ variable induces a domain shift (gray hammer), whereas an intervention at the $X_e$ variable induces a covariate shift (black hammer).}\label{fig:causal_graph_ad}
\end{figure}

We now recall that an encoder $f$ that attains invariant representations is $X_a$-measurable (\cite{veitch2021counterfactual}). The following result, which follows from a theorem from~\citet{veitch2021counterfactual}, argues that we can ensure invariant representations by enforcing a possibly conditional statistical independence between $Z$ and $E$.

\begin{theorem}
Suppose that $f$ learns invariant representations. If $W$ and $E$ are confounded, then $Z \, \bot \, E \mid W$. Otherwise, $Z \, \bot \, E$.
\label{theor:main_theorem1}
\end{theorem}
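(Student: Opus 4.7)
The plan is to reduce the statement to a d-separation check in the two causal graphs of Fig.~\ref{fig:causal_graph_ad}, after first collapsing the representation $Z$ to a function of $X_a$ alone using the cited result of \citet{veitch2021counterfactual}.

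First, I would invoke the fact (stated just before the theorem) that an encoder producing invariant representations must be $X_a$-measurable. Concretely, this means that, despite the graph drawing an arrow $X_e \to Z$, the invariance constraint forces $Z = g(X_a)$ for some measurable $g$. Consequently, for any sigma-algebra $\mathcal{F}$, $X_a \perp E \mid \mathcal{F}$ immediately implies $Z \perp E \mid \mathcal{F}$, so it suffices to establish the corresponding independence between $X_a$ and $E$. I would also note explicitly that, because $Z$ is effectively disconnected from $X_e$ once we impose invariance, we may ignore the $X_e \to Z$ edge when doing d-separation, which removes the main source of open paths between $E$ and $Z$.

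Next, I would enumerate all paths from $E$ to $X_a$ in each graph and check which are blocked.

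\begin{itemize}
    \item \textbf{Unconfounded case (no $U$-edges to $W$ and $E$).} The only path between $E$ and $X_a$ is $E \to X_e \leftarrow W \to X_a$. Here $X_e$ is a collider that we do not condition on, so this path is blocked. Hence $X_a \perp E$, and by the reduction above $Z \perp E$.
    \item \textbf{Confounded case.} We now have additional paths through $U$, namely $E \leftarrow U \to W \to X_a$ and $E \to X_e \leftarrow W \leftarrow U \to \cdots$, etc. Conditioning on $W$ blocks every such path: the chain $U \to W \to X_a$ is blocked at the non-collider $W$, while the collider $X_e$ remains blocked because we do not condition on it or any of its descendants. Therefore $X_a \perp E \mid W$, and again by the reduction $Z \perp E \mid W$.
\end{itemize}

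The routine-but-delicate step is the second bullet: one must verify that no descendant of the collider $X_e$ is being conditioned on, and in particular that conditioning on $W$ does not inadvertently open $X_e$ as a collider (it does not, since $W$ is a parent, not a descendant, of $X_e$). This is the main place where care is required, because otherwise the confounding path $E \to X_e \leftarrow W \leftarrow U \to \cdots$ could become active. Once the d-separation bookkeeping is done, the theorem follows by combining the $X_a$-measurability of $Z$ with the two d-separation statements.
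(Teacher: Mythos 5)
Your proof takes essentially the same route as the paper: reduce the claim to the (conditional) independence of $X_a$ and $E$ via the $X_a$-measurability of $Z$, then verify that independence by enumerating and blocking paths between $X_a$ and $E$ with d-separation in the two causal graphs. The only cosmetic difference is that you dispose of the $X_a \to Z \leftarrow X_e$ path by deleting the $X_e \to Z$ edge once invariance is imposed, whereas the paper keeps that edge and simply observes that the path is already blocked because $Z$ is an unobserved collider.
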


\begin{proof}
We sketch a proof here and provide a more rigorous proof in the appendix. Since $f$ learns invariant representations and is, therefore, measurable, we can say that $Z$ is $X_a$-measurable. This means that any probability of an event involving $Z$ can be rewritten as a probability of an event involving $X_a$. As a result, it suffices to show that $X_a \, \bot \, E \mid W$, when $W$ and $E$ are confounded and $X_a \, \bot \, E$, otherwise. These claims can be shown using d-separation. For example, when $W$ and $E$ are not confounded, there are only two paths from $X_a$ to $E$ and they are blocked. Hence, $X_a \bot \, E$ by d-separation. 
\end{proof}

The fundamental question we must address is how to ensure statistical independence between $Z$ and $E$. If such independence holds, we find that for all $e, e' \in \mathcal{E}$, it implies that $p^{do(E=e)}(Z = \cdot) = p^{do(E=e')}(Z = \cdot)$. In general, we cannot access counterfactual examples, so enforcing counterfactual invariance becomes impossible. However, it is still possible to induce a counterfactual invariance signature by imposing appropriate conditional independence conditions. In practice, this condition can be set as $\text{MMD}\left(p(Z = \cdot \mid E=e), p(Z = \cdot \mid E=e')\right) = 0$, where MMD stands for maximum mean discrepancy (\cite{gretton2006kernel,gretton2012kernel}), and measures the distance between two distributions using empirical samples by calculating the divergence between the means of these sample sets once they have been projected into a reproducing kernel Hilbert space (RKHS). 

\subsection{Partial Conditional Invariant Regularization (PCIR)}

Based on the previous insights, we propose using the MMD between two distributions as the driver for the invariance-inducing regularization term. Taking into account our previous formulation depicted in Fig.~\ref{fig:causal_graph_ad}, we are now in a position to derive a novel regularization term specifically designed for invariant AD:
\begin{equation}
\Omega_{\text{PCIR}} = \sum_{\substack{e,e'\in \mathcal{E}\\e\neq e'}} \text{MMD}\left(p(Z = \cdot \mid W = 0, E = e), p(Z = \cdot \mid W = 0, E = e')\right).
\label{eq:partial_condition}
\end{equation}

We call this approach \emph{partial conditional invariant regularization} (PCIR), as it induces conditional distribution invariance over only one instantiation of $W$.

This partial conditional regularization aligns with other regularization terms proposed in preceding research~(\cite{veitch2021counterfactual, li2018domain}). It is 'conditional' because, in the event of confounding, we must condition on $W$ when computing the MMD. It is 'partial' due to the fact that the training dataset only contains samples where $W = 0$.

\section{Experiments}

\subsection{Experimental Setup}

Our solutions are validated under two distinct settings: domain generalization across domain shifts and domain generalization across covariate shifts. All experiments described were the result of 5 repetitions over different seeds. For additional details on the experimental design please refer to the supplementary material.

\paragraph{Real-world domain shift} For a realistic anomaly detection scenario, we considered the task of identifying tumorous tissue from images of histological cuts, using the Camelyon17~(\cite{koh2021wilds,bandi2018detection}) dataset. This dataset consists of five different subsets of images arising from five different hospitals, with domain shifts occurring due to differences in slide staining, image acquisition protocol, and patient cohorts. This presents a challenging anomaly detection task, as the alterations that differentiate normal from abnormal samples are often subtle and may correlate with unknown features that are domain-dependent. In our experimental design, we motivate an environment per domain and set up two sets of experiments: one using three environments for training and another using two environments for training. 

\paragraph{Real-world shortcut} We have also evaluated our method in the Waterbird dataset (\cite{sagawa2019distributionally}). This is a real-world natural image dataset where the distribution shift occurs as the natural habitat depicted in the background changes between an aquatic and land setting. From the two kinds of birds in the dataset (water and land birds), water birds were assigned to training data and land birds were set as an anomaly. To make this a more challenging setup, we have defined a highly unbalanced distribution of the environments, with 184 images in training data with a water background and 3498 with a land background. The evaluation of the methods was performed in images with only a water background for out-of-distribution and only land background for in-distribution.

\paragraph{Synthetic covariate shifts} To evaluate the robustness against covariate shifts, we use the DiaViB-6 dataset~\cite{eulig2021diagvib} for our experiments. This dataset comprises modified and upsampled images from MNIST~\cite{deng2012mnist}, where the generative factors of the image can be altered, leading to changes in texture, hue, lightness, position, and scale.

The training data is comprised of two distinct environments, generated by manipulating original instantiations of each factor, and ensuring all factors differed between the two environments. For the test data, we defined five distinct environments denoted as $e_0, e_1, ..., e_4$. The environment $e_0$ corresponds to images from MNIST for a specific digit under an original instantiation of each factor. All images of that digit are labeled as normal, while images of any other digit are considered anomalies. For $0 < i \leq 4$, each environment $e_i$ corresponds to images where $i$ factors have been modified. In $e_i$, all factors modified in $e_{i-1}$, plus an additional unique factor, are altered. The task is to classify an image of a handwritten digit as either normal or an anomaly.

We also adapted this setup to include the Fashion-MNIST dataset, a more challenging synthetic benchmark. The same cumulative covariate shifts were induced from an original in-distribution environment, $e_0$, to subsequent environments $e_{1-4}$. Two specific classes were chosen to generate normal and abnormal samples.

\paragraph{Anomaly detection methods tested} We undertook a comprehensive evaluation of our proposed regularization term, PCIR, and integrated it into six diverse methodologies for deep AD. These include: \textit{STFM}~\cite{wang2021student_teacher}, \textit{Reverse Distillation}~\cite{Deng_2022_CVPR}, \textit{CFA}~\cite{lee2022cfa}, \textit{MeanShifted}~\cite{reiss2021mean}, \textit{CSI}~\cite{tack2020csi}, and \textit~{Red PANDA}~\cite{cohen2023red}. For a detailed exposition of these methods, refer to the supplementary material.

\subsection{Results}

\paragraph{Real-world anomaly detection under domain shift} Notably, even with the complexity and real-world variability introduced by the domain shift, the incorporation of partial conditional distribution invariance still resulted in notable improvements for both training setups, in particular with \textit{MeanShift} and \textit{Red PANDA} retrieving almost in-distribution performance (see Fig.~\ref{fig:camelyonwaterbirds}). This suggests that our regularization term is robust to more substantial changes associated with domain shifts.

Focusing on the out-of-distribution setting, our results highlight a consistent pattern of performance enhancement when applying the partial conditional invariance regularization term, ranging from 2\% to 8\% increase in the AUROC when comparing regularized and un-regularized methods. The consistency of this performance boost across different models and with both two and three in-distribution training environemnts underscores the potential of PCIR as a beneficial regularization technique in out-of-distribution settings.

\begin{figure}[t]
    \centering
    \begin{subfigure}{0.32\textwidth}
        \includegraphics[width=\textwidth]{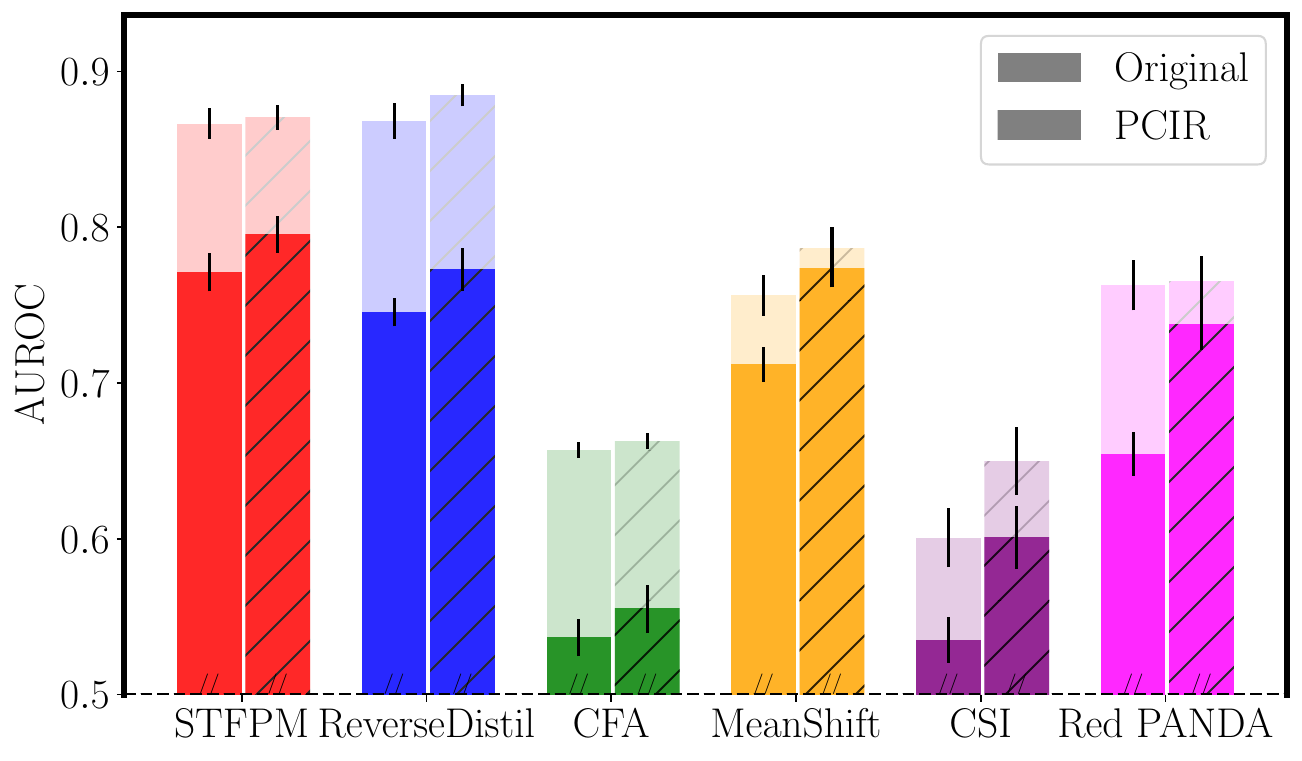}
        \caption{Camelyon17: 2 environments}
    \end{subfigure}
    \hfill
    \begin{subfigure}{0.32\textwidth}
        \includegraphics[width=\textwidth]{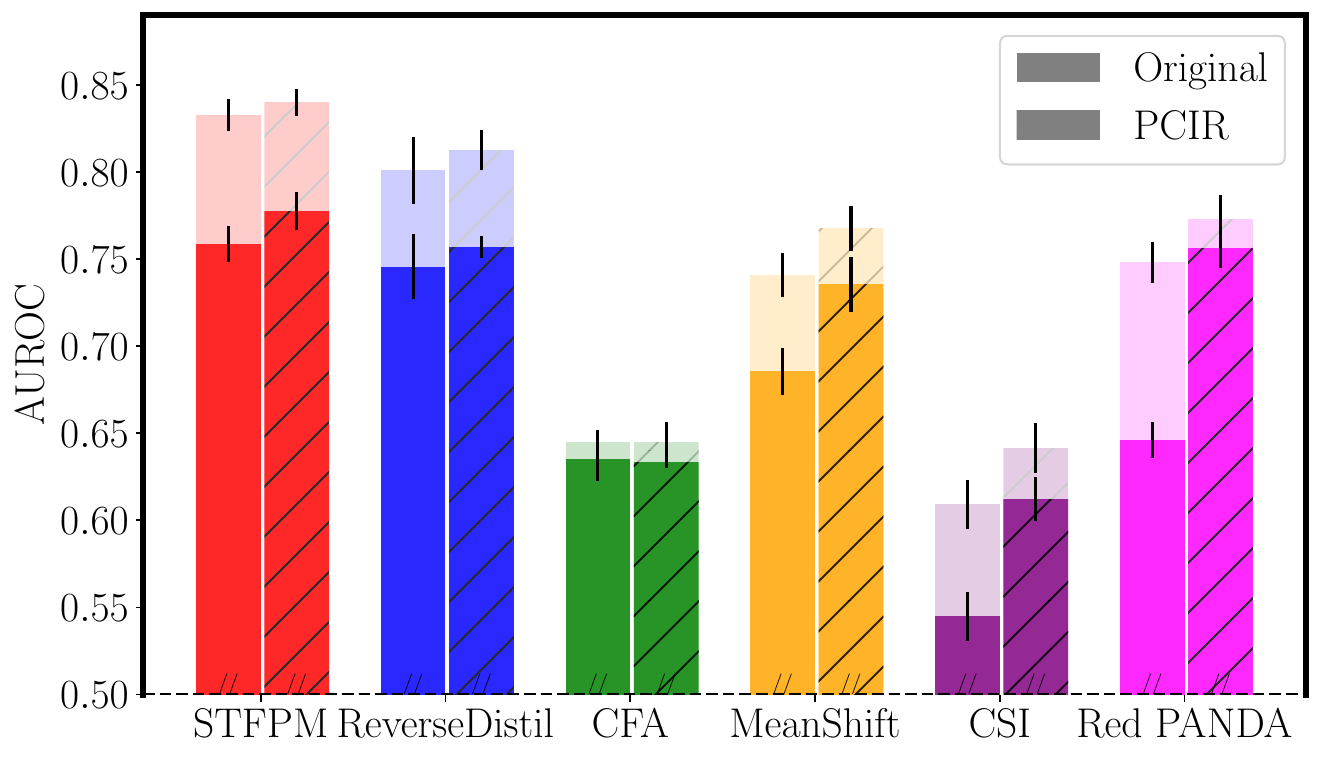}
        \caption{Camelyon17: 3 environments}
    \end{subfigure}
    \hfill
    \begin{subfigure}{0.32\textwidth}
        \includegraphics[width=\textwidth]{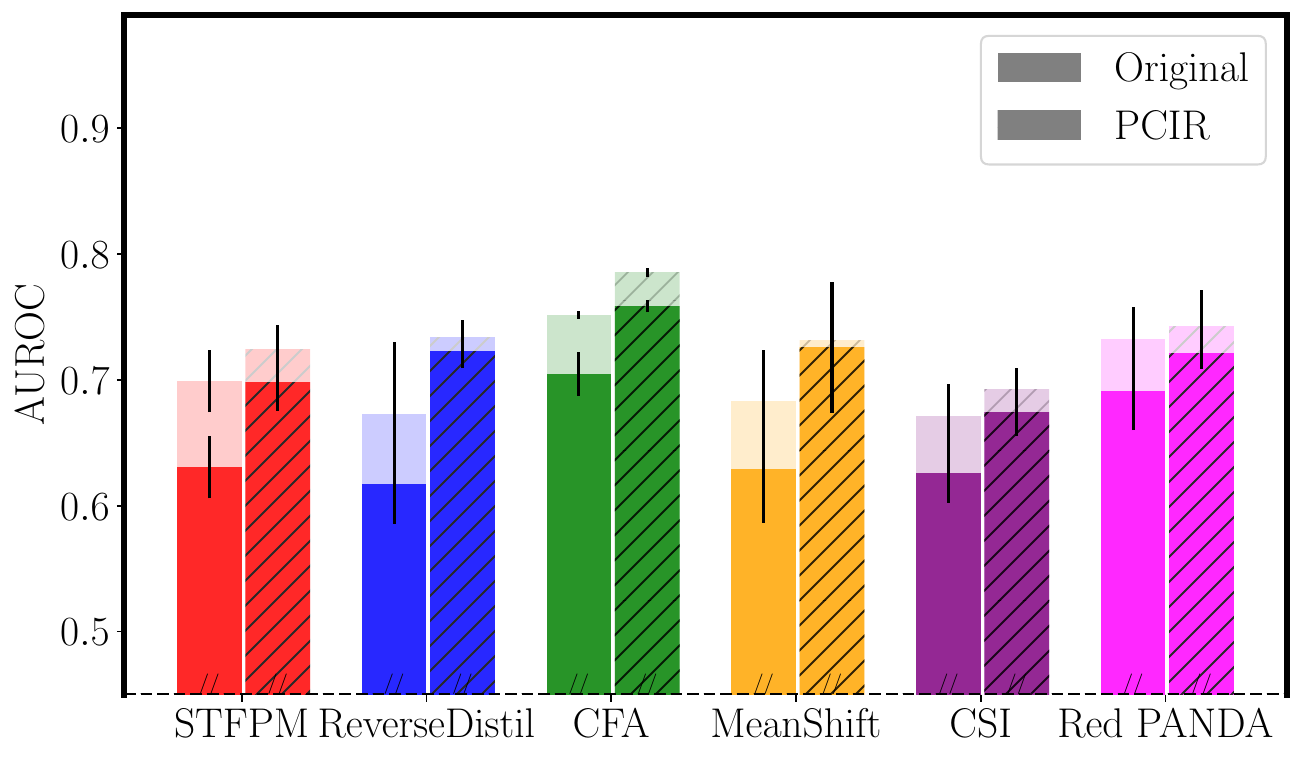}
        \caption{Waterbirds}
    \end{subfigure}
    \caption{Results on realistic domain shift Camelyon17 and realist shortcut learning in Waterbirds. (\textbf{background transparent bar-plots:} in-distribution evaluation; \textbf{foreground opaque bar-plots:} out-of-distribution evaluation). \textbf{(a)} Camelyon17: Setup with two in-distribution training environments and three out-of-distribution test environments. \textbf{(b)} Camelyon17:  Setup with three in-distribution training environments and two out-of-distribution test environments \textbf{(c)} Waterbirds: Shortcut learning setup.}
    \label{fig:camelyonwaterbirds}
\end{figure}

\paragraph{Realist shortcut learning} We observe that in all original unregularized methods evaluated, there is a noticeable drop in performance when comparing in-distribution (background transparent plots) to out-of-distribution (foreground opaque plots) that is attributed to the models effectively capturing the exposed shortcut. Models with PCIR nearly recover the in-distribution performance, showing the effectiveness of this approach to ignore uninformative environment features even when exposed through a shortcut. Furthermore, in all models tested, there's a consistent observation that the model performance not only stabilizes but also increases when adding the regularization term even for in-distribution scenarios. The increase in model performance, attributed to the addition of PCIR to each method, exhibits an improvement range from 5\% to 15\% AUROC.

\paragraph{MMNIST and Fashion-MNIST under covariate shift} Models that absorb shortcut features have been observed to be especially susceptible to covariate shifts (\cite{eulig2021diagvib,geirhos2020shortcut}). Therefore, if a model abuses a shortcut, then inducing a covariate shift in the shortcut feature often significantly deteriorates performance. This effect is discernible in Fig.~\ref{fig:mnist}, where performance drops progressively increase for all models as more covariate shifts are induced in the images, and thus more shortcut features deviate from their original form in the training data.

However, note that by adopting regularization based on partial conditional invariance, we have been able to construct anomaly detectors that consistently exhibit enhanced robustness against induced covariate shifts. In some cases, even in-distribution performance increases through partial conditional invariance. These findings corroborate our hypothesis that partial conditional distribution invariance serves as a sufficient prior for robustness to covariate shifts in AD.  For a more comprehensive analysis of performance improvements achieved refer to the supplementary material.

\begin{figure}[h!]
    \centering
    \begin{subfigure}{0.32\textwidth}
        \includegraphics[width=\textwidth]{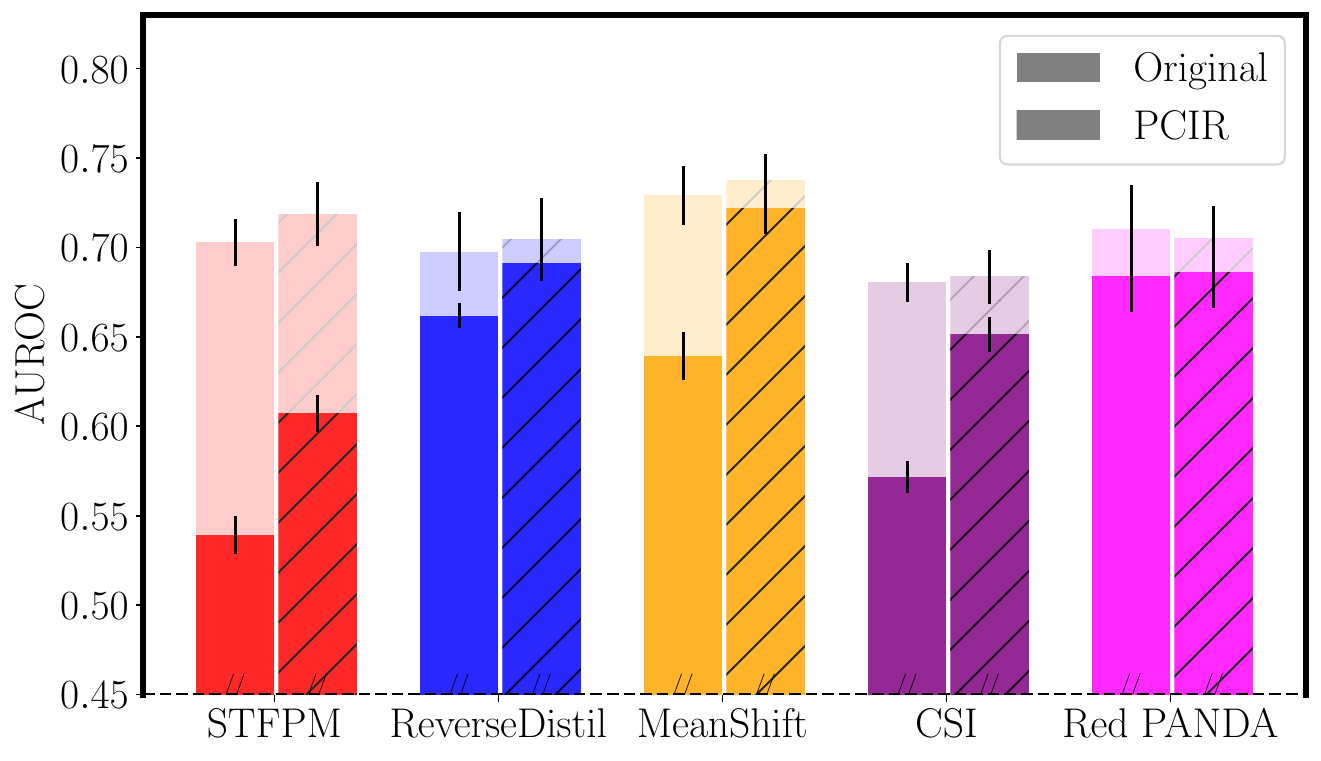}
        \caption{$e_1$: MNIST s.t. single covariate shift}
    \end{subfigure}
    \hfill
    \begin{subfigure}{0.32\textwidth}
        \includegraphics[width=\textwidth]{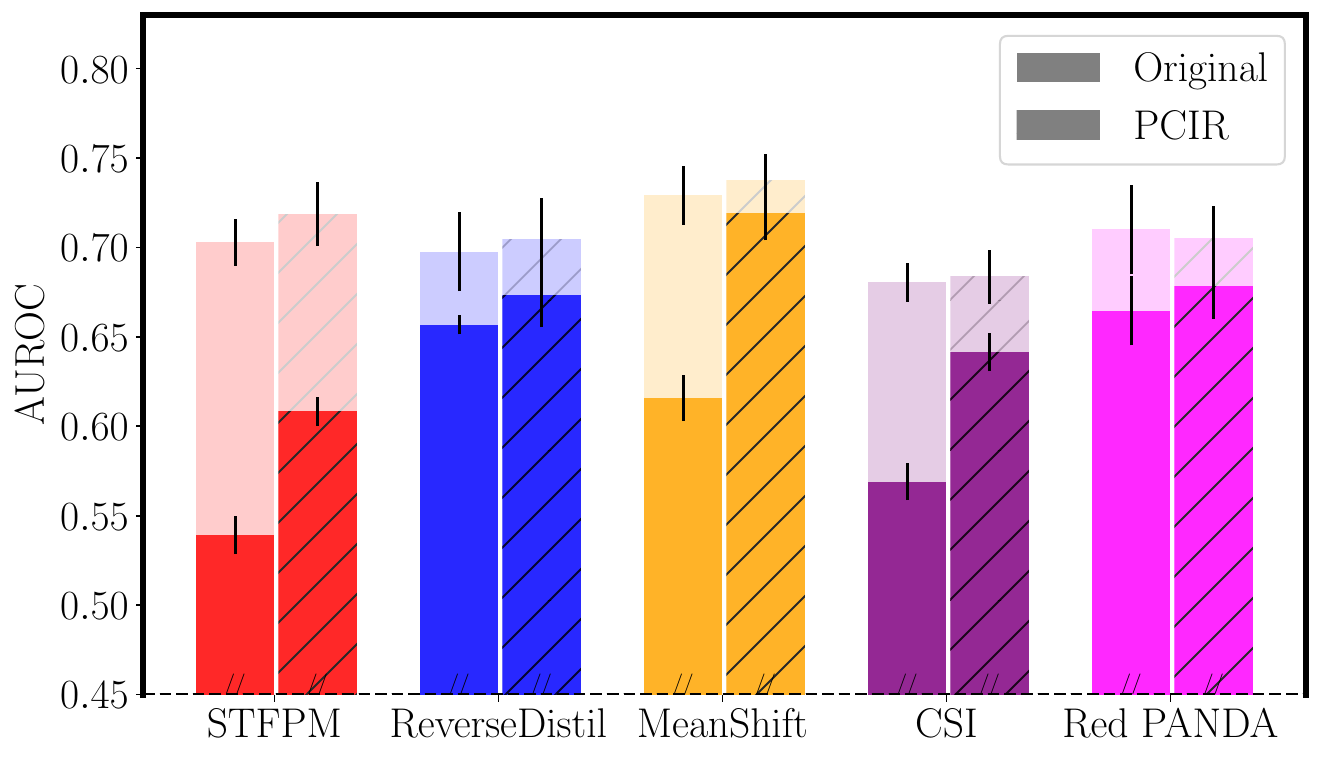}
        \caption{$e_2$: MNIST s.t. double covariate shift}
    \end{subfigure}
    \hfill
    \begin{subfigure}{0.32\textwidth}
        \includegraphics[width=\textwidth]{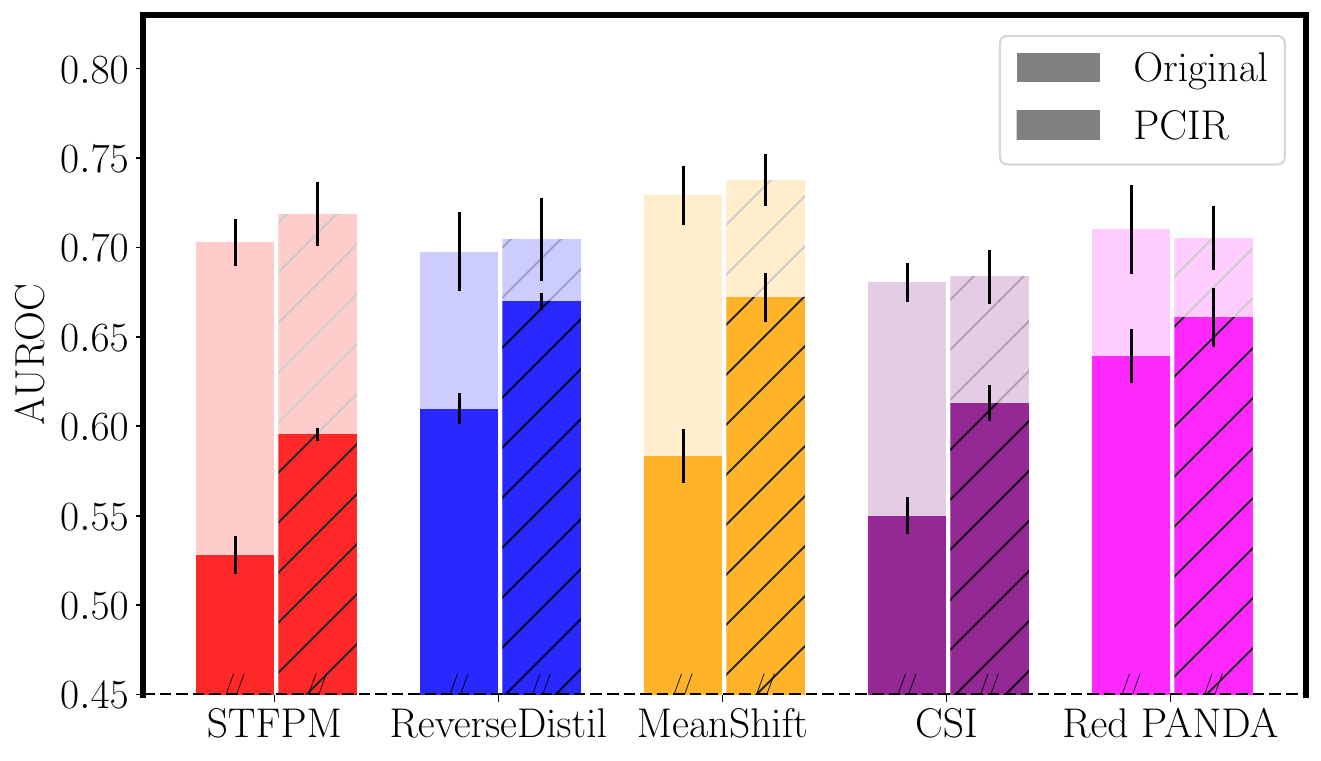}
        \caption{$e_3$: MNIST s.t triple covariate shift}
    \end{subfigure}
    
    \begin{subfigure}{0.32\textwidth}
        \includegraphics[width=\textwidth]{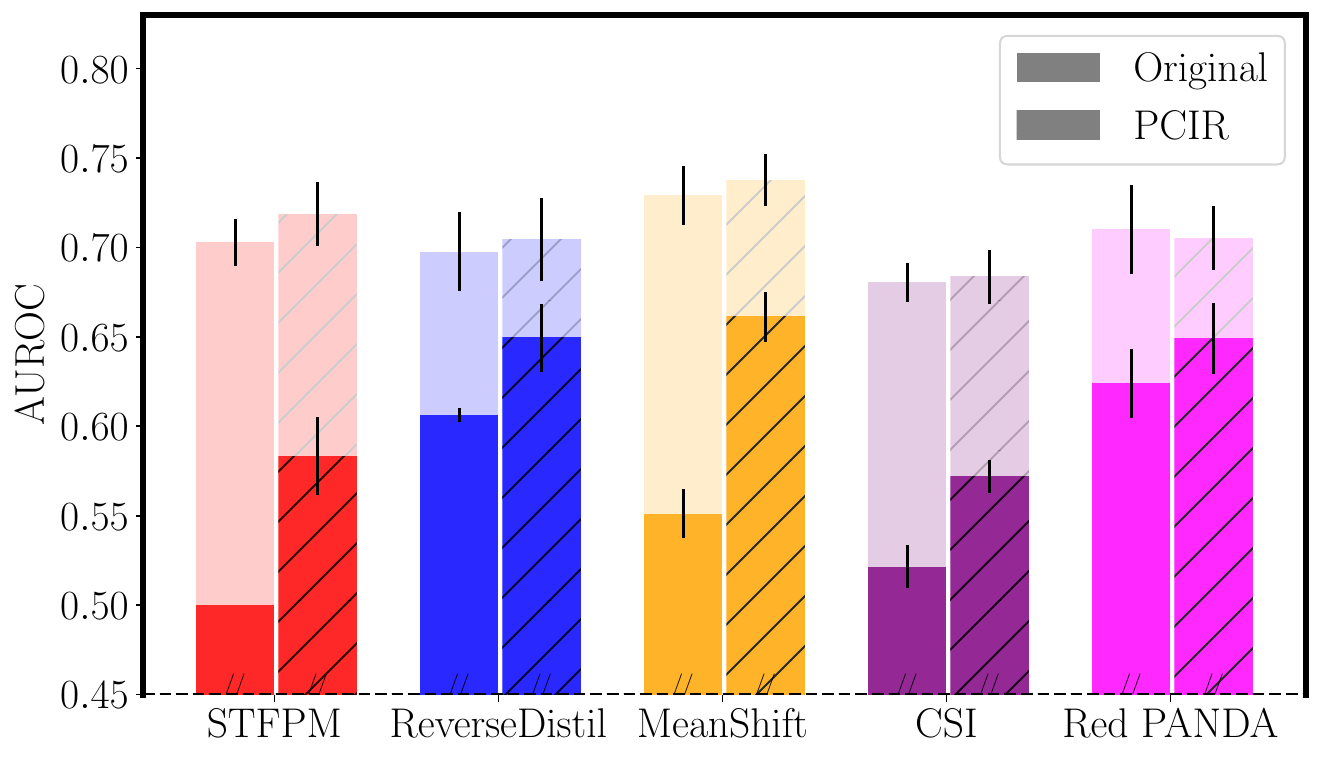}
        \caption{$e_4$: MNIST s.t. 4-fold covariate shift}
    \end{subfigure}
    \hfill
    \begin{subfigure}{0.32\textwidth}
        \includegraphics[width=\textwidth]{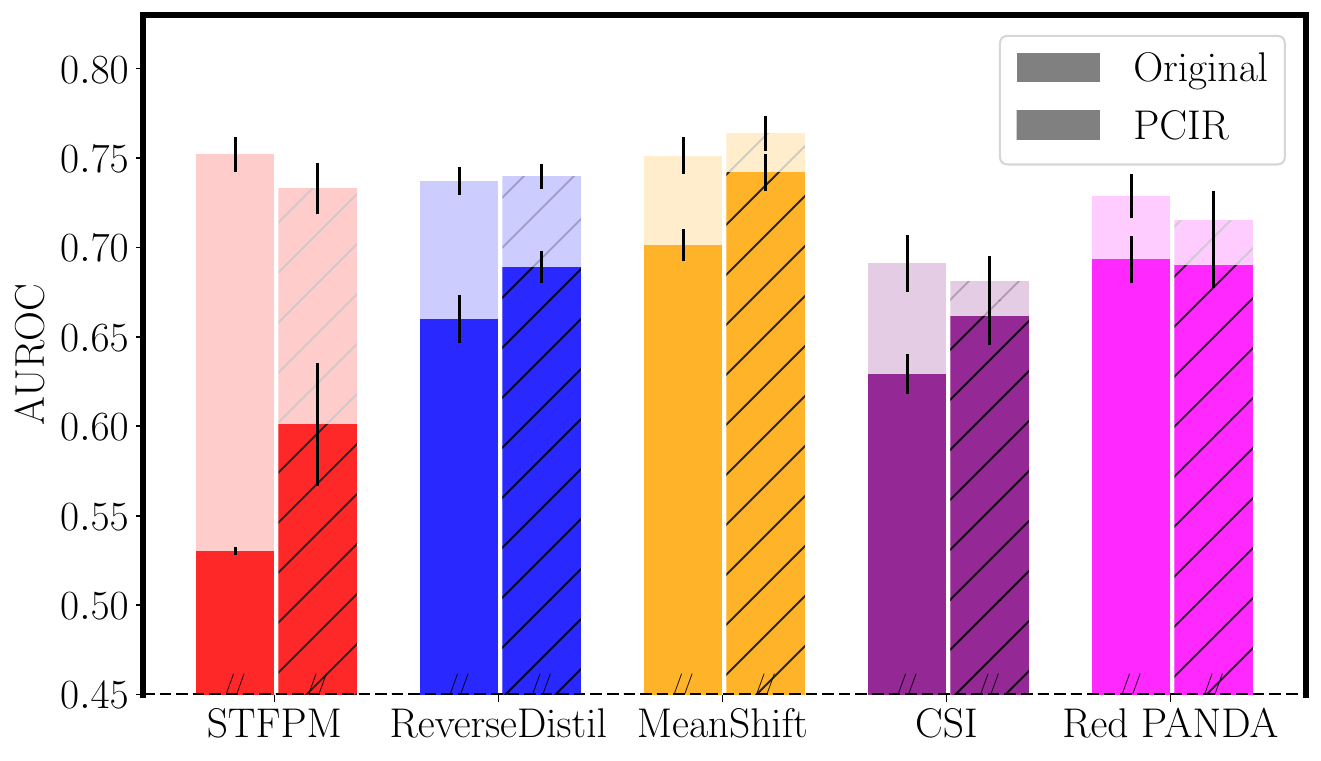}
        \caption{$e_1$: Fashion-MNIST s.t. single covariate shift}
    \end{subfigure}
    \hfill
    \begin{subfigure}{0.32\textwidth}
        \includegraphics[width=\textwidth]{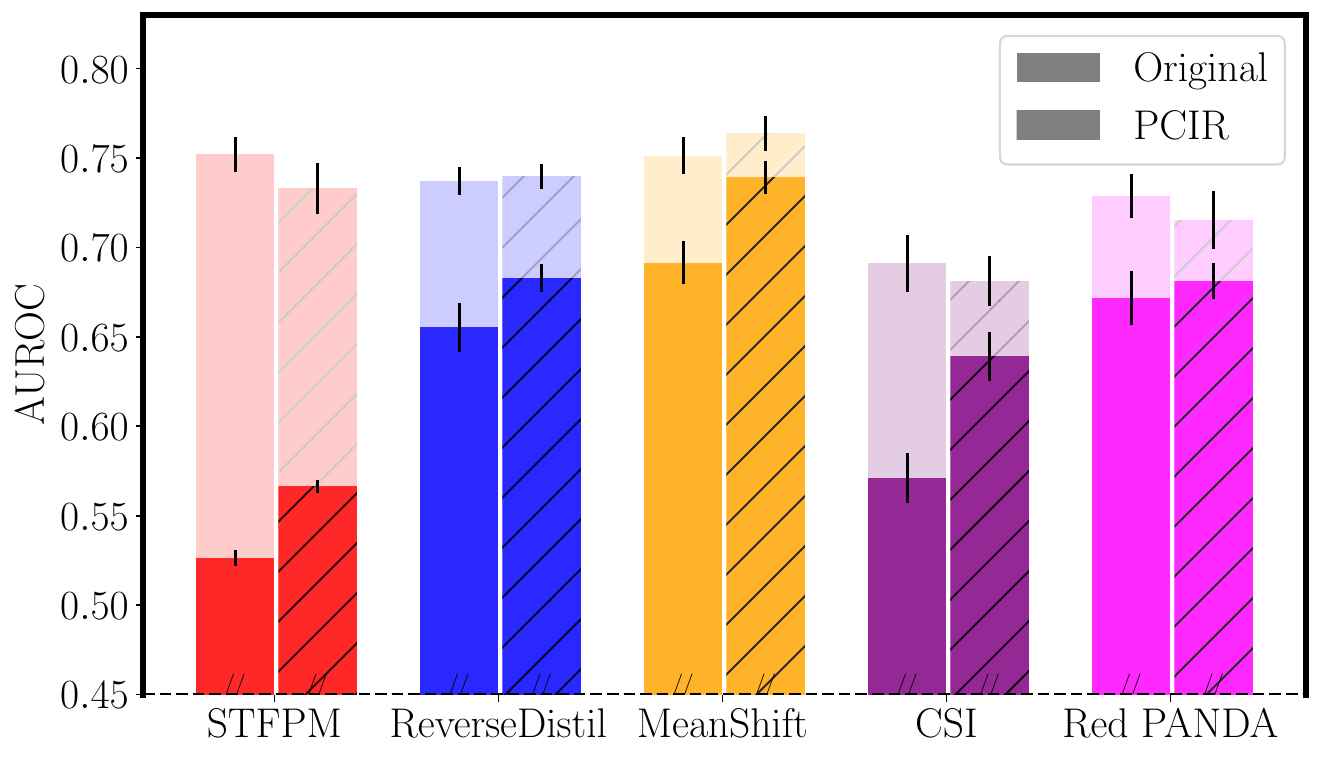}
        \caption{$e_2$: Fashion-MNIST s.t. double covariate shift}
    \end{subfigure}
    
    \begin{subfigure}{0.32\textwidth}
        \includegraphics[width=\textwidth]{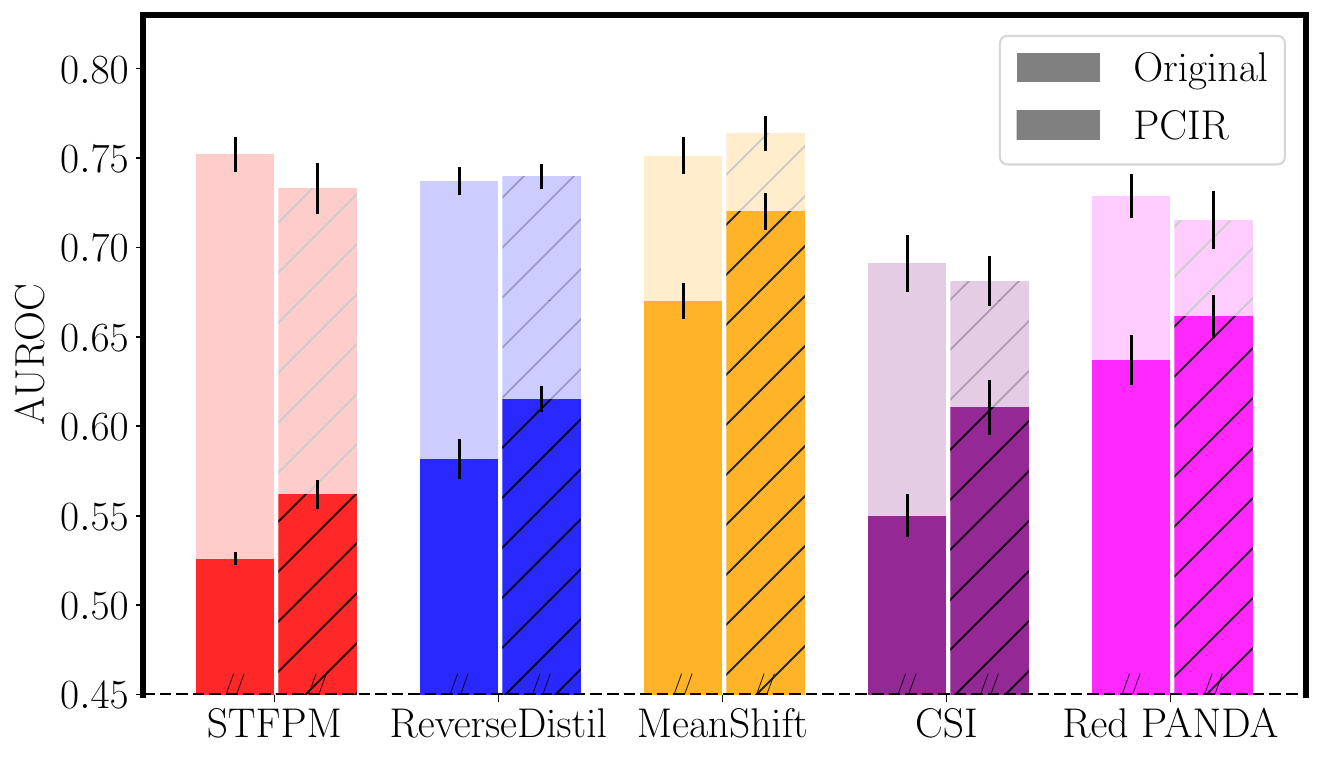}
        \caption{$e_3$: Fashion-MNIST s.t. triple covariate shift}
    \end{subfigure}
    \begin{subfigure}{0.32\textwidth}
        \includegraphics[width=\textwidth]{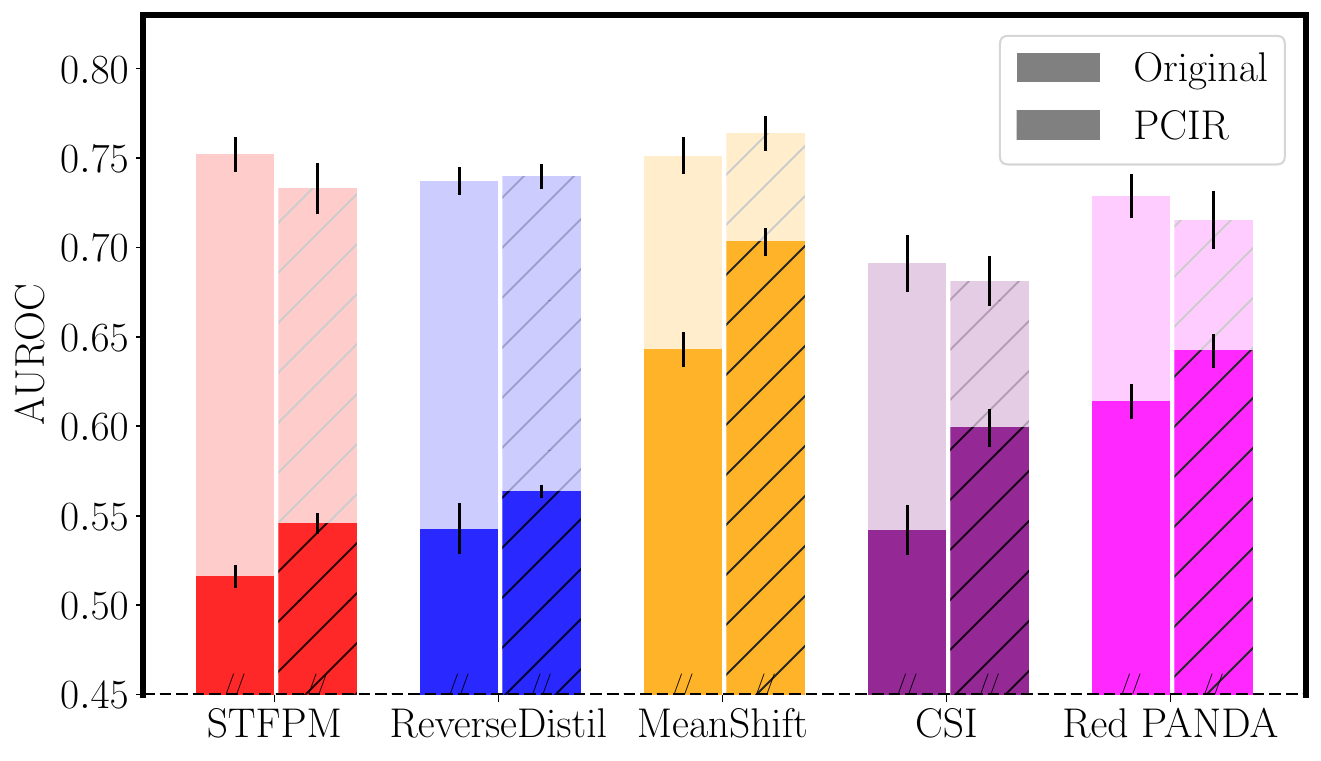}
        \caption{$e_4$: Fashion-MNIST s.t. 4-fold covariate shift}
    \end{subfigure}
    
    \caption{Experimental results on synthetic covariate shift (\textbf{background transparent bar-plots:} in-distribution evaluation; \textbf{foreground opaque bar-plots:} out-of-distribution evaluation) \textbf{(a-d)} Results in MNIST. \textbf{(e-h)} Results in Fashion-MNIST.}
    \label{fig:mnist}
\end{figure}

\paragraph{Ablation study} In our ablation study, we held all method parameters constant to exclusively examine the impact of the weight assigned to the partial conditional invariance regularization term.

Across all tested methods, the relationship between regularization weight and performance exhibited a concave shape. This pattern suggests a simple linear search could be sufficient to identify an optimal weight for the regularization term.  This observed behaviour aligns intuitively with the nature of invariance regularization. Over-emphasizing the regularization term could potentially cause the model's encoder to generate non-discriminative and non-informative representations. In an extreme case, the model could collapse all inputs into a single value. Interestingly, the behaviour of the regularization term remained consistent across both datasets used in our study. This reinforces the robustness of our approach across diverse datasets. 

\begin{figure}[h!]
    \centering
    \begin{subfigure}{0.45\textwidth}
        \includegraphics[width=\textwidth]{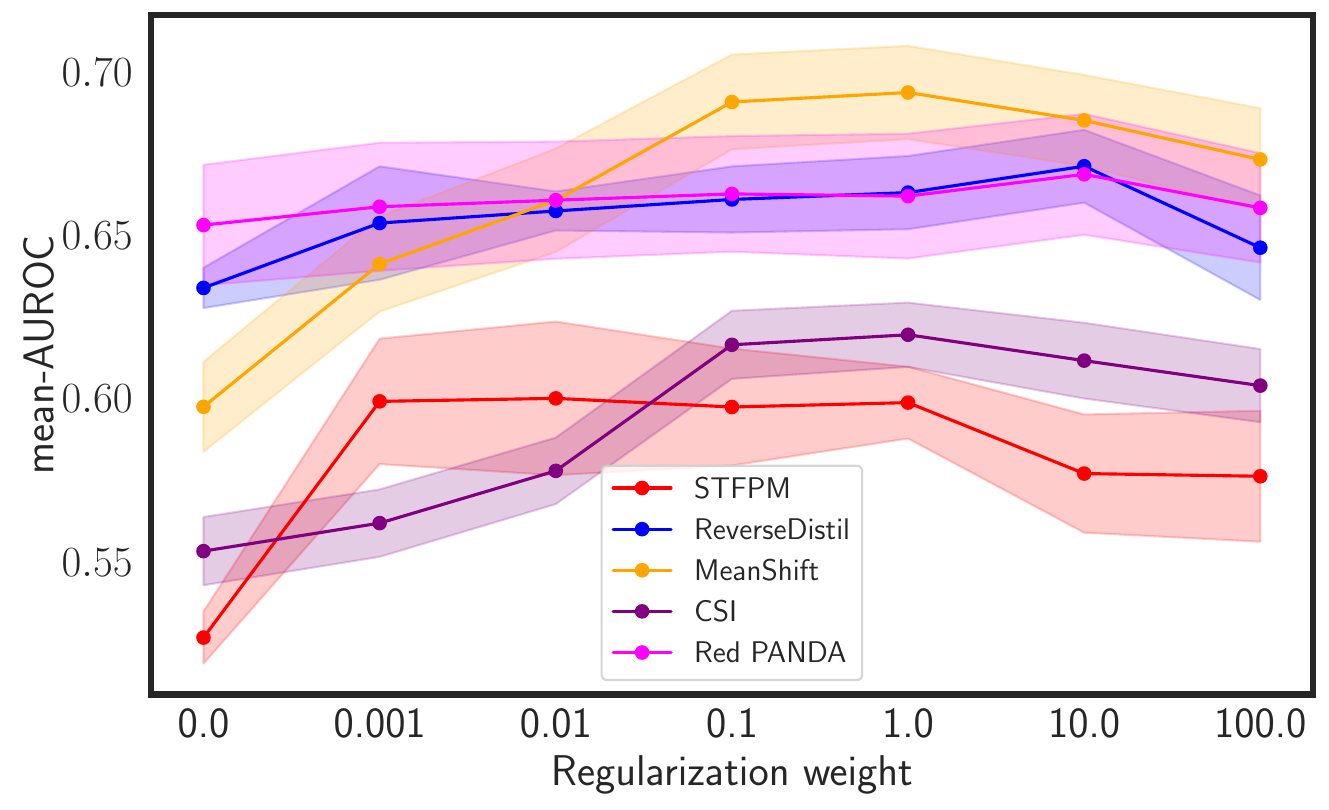}
        \caption{MNIST}
    \end{subfigure}
    \hfill
    \begin{subfigure}{0.45\textwidth}
        \includegraphics[width=\textwidth]{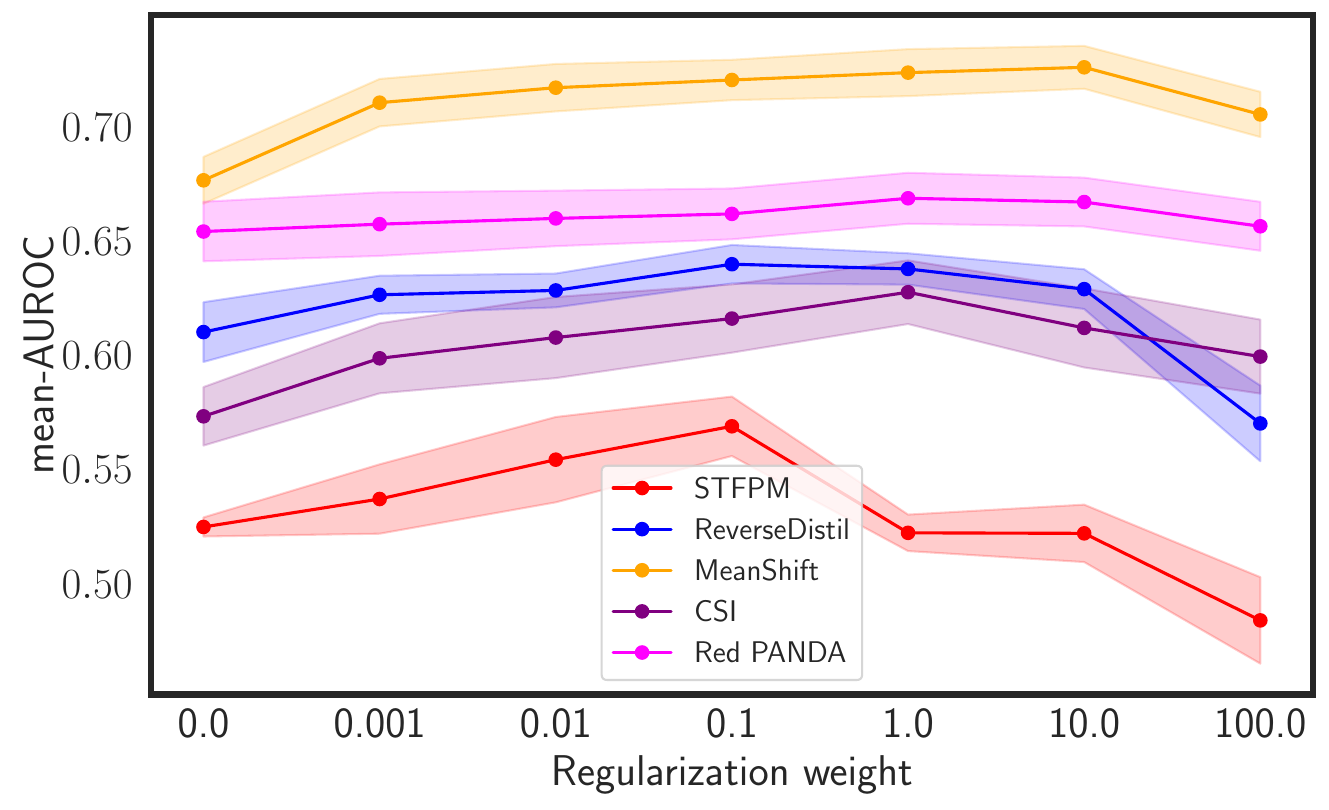}
        \caption{Fashion-MNIST}
    \end{subfigure}
    \caption{Ablation study over the weight of the regularization term for MNIST and Fashion-MNIST under distribution shift. Regularization weight determines the weight of PCIR in the training loss.}
    \label{fig:ablation}
\end{figure}

\section{Discussion and Limitation}

\paragraph{Invariance regularization also improves in-distribution AD} A surprising result from this study was the consistent enhancement in in-distribution performance across most methods when partial conditional invariance was incorporated. This observation can be explained by recent advances suggesting that invariance also bolsters in-distribution robustness against noise and sampling variability (\cite{lopes2019improving,mitrovic2020representation}) - factors to which anomaly detection is inherently vulnerable (\cite{ramotsoela2018survey}). By design, invariance regularizers discourage representations encoding style features. Hence, representations only carry over information that is inherently related to the object rather than the environment. This leads to models  identifying meaningful patterns instead of noisy variations in the data, even in-distribution. 

\paragraph{Unlabeled environments} A prominent limitation of invariance regularization lies in its dependence on environment labels (\cite{veitch2021counterfactual,jiang2022invariant,arjovsky2019invariant}). Additionally, as evidenced in our experimental comparisons between covariate and domain shifts, interventions directly applied to the covariates during environment generation yield enhanced robustness under the constraints of partial conditional invariance. This underscores that conditional invariance regularization's effectiveness diminishes when the environments are not distinctly segregated. However, in situations where datasets lack explicit environment partitions, alternative strategies can be employed.
As demonstrated by \cite{lin2022zin} it remains feasible to jointly estimate environment partitions and invariant representations, provided there is access to sufficient auxiliary information. This finding opens the door to expanding the applicability of partial conditional invariance in AD, even in scenarios with limited information on environmental conditions or corrupted separation between environments.

\paragraph{Invariance beyond partial conditioning} Theorem~\ref{theor:main_theorem1} shows that in the presence of confounding, learning invariant representations requires that representations are independent from the environment, when conditioned on $W$. Note that our partial conditional invariant regularization conditions only on $W = 0$ and not on $W = 1$ (see Equation~\ref{eq:partial_condition}). This is an inherent limitation due to the fact that we only have normal objects in the sample. However, our regularization is still powerful enough to provide improvements over the state of the art. We argue that the quality of this improvement depends on how disentangled content features are from style features (i.e., $X_a$ and $X_e$ in Figure~\ref{fig:causal_graph_ad}). Consider, for example, an MNIST setting where normal objects are images of particular digit and images of any other digit are anomalies. Furthermore, assume that images from one environment have one background color and images from another environment have another background color. There is a clear disentanglement between style (i.e., the digit) and content (i.e., the background color). In such settings, attaining invariance to the background color just with $W = 0$ leads also to invariance to the background color with $W = 1$. We conjecture that partial conditional invariant regularization is sufficient when $W$
 does not influence $X_e$; that is, when there is no arrow between these variables in Figure~\ref{fig:causal_graph_ad}.
 
\paragraph{Multi-shift environments} Certain datasets, such as the one described by \citet{christie2018functional}, take into account data shifts across two different kinds of domain shifts (e.g. time and location), equivalent to dual interventions in a bivariate $E$ in our formulation. While the extension of our work to address multi-attribute settings might be plausible in a dual intervention scenario, it presents a non-trivial challenge as the number of different possible interventions increases, given that it requires the induction of pairwise invariance across environments. This issue continues to be an area of active research.

\paragraph{Fairness} One potential extension of this work concerns the setting where either a covariate shift is induced over a protected attribute (e.g., gender or race), or a domain shift that permeates to such attributes. Such nuances have been brought in the context of invariant representation learning by \citet{veitch2021counterfactual} and could serve as a potential application of this work in AD. In particular, we could see our regularization term be implemented similarly to \citet{louizos2015variational}, but instead with the intent of effectively discarding incidental correlations that might reflect societal or systemic biases present in the data.

\section{Conclusions}
Our study sheds light on the significant challenges anomaly detectors face in the context of distribution shifts. We proposed a novel, robust solution centered around invariant representations, which mitigates the impact of shortcut learning by enforcing statistical independence between the representations and the environment. Empirical validation of our theoretical proposals confirmed the effectiveness of our approach, with a regularization term inducing partial conditional distribution invariance, significantly improving model performance under covariate and domain shifts. We believe these findings pave the way for a deeper understanding of AD methods' robustness and how to mitigate their vulnerability to distribution shifts.

\newpage

\bibliography{main} 
\bibliographystyle{abbrvnat}

\newpage
\appendix

\section{Theoretical Details}

\subsection{Measurability}
\label{sec:meas}

Let $\mathcal{B}\left(\mathbb{R}^n\right)$ be the Borel $\sigma$-algebra over $\mathbb{R}^n$, for some $n \in \mathbb{N}$.

\begin{definition}
The $\sigma$-algebra generated by a random variable $Y$ is $\sigma(Y) = \left\{Y^{-1}(B) : B \in \mathcal{B}\left(\mathbb{R}^n\right)\right\}$.
\end{definition}

\begin{definition}
A random variable $X$ is $Y$-measurable if $\sigma(X) \subseteq \sigma(Y)$.
\label{def:rv_measurable}
\end{definition}

Intuitively, the $\sigma$-algebra generated by $Y$ describes all events that $Y$ ``can express'' and can be measured in probability. If $X$ is $Y$-measurable, that means that $Y$ can express all what $X$ can express.

\subsection{D-separation and Confounding}
\label{sec:dsep}

We provide here a brief overview on d-separation and confounding and refer the reader to \cite{bishop2006pattern} for details.

\begin{definition}
    Two random variables in a Bayesian network are confounded if they share a latent parent.
\end{definition}

\begin{definition}
A \emph{path} is a sequence of random variables $(V_1, \ldots, V_n)$ in a Bayesian network, where $V_i$ is a parent or child of $V_{i + 1}$, for $i < n$. For $1 < i < n$, the variable $V_i$ is a \emph{collision} if $V_i$ is a child of both $V_{i - 1}$ and $V_{i + 1}$.
\end{definition}

\begin{definition}
A path is \emph{blocked} if $V_1$ and $V_n$ are not observed and at least one of following holds:
\begin{itemize}
    \item For some collision $V$ in the path, neither $V$ nor any of its descendants is observed.
    \item For some variable $V$ in the path that is not a collision, $V$ is observed.
\end{itemize}
\end{definition}

We now recall the d-separation principle.

\begin{lemma}
Let $W$ be a set of observed variables and $V_1$ and $V_2$ two variables not observed. If any path from $V_1$ to $V_2$ is blocked, then $V_1 \bot V_2 \mid W$.
\end{lemma}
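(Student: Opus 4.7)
The plan is to prove the soundness of d-separation by reducing it to a separation criterion on an undirected graph, namely the moralized ancestral subgraph. First I would restrict attention to the ancestral subgraph $G'$ induced by the nodes in $\mathrm{An}(\{V_1, V_2\} \cup W)$, using the fact that the joint distribution marginalized to these variables still factorizes according to $G'$ (non-ancestors integrate out without affecting the conditional dependencies among the retained nodes). I would then moralize $G'$ to obtain an undirected graph $G^m$ by adding an edge between every pair of parents sharing a common child and dropping all edge orientations.

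The central combinatorial step would be to establish the following correspondence: every path from $V_1$ to $V_2$ in the original DAG that is \emph{not} blocked by $W$ yields an undirected path from $V_1$ to $V_2$ in $G^m$ that avoids $W$, and conversely, every undirected path in $G^m$ avoiding $W$ lifts to an unblocked directed path. The collider clause in the definition of ``blocked'' is exactly what the moralization step encodes: a collider $V$ whose descendant lies in $W$ (or which itself lies in $W$) contributes, via the marry-parents construction on the ancestral subgraph, a direct undirected edge between its neighbors along the path, so the collider can be bypassed in $G^m$. Conversely, non-collider nodes in $W$ remain on the undirected path and, being observed, block it in the undirected sense.

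Once this path correspondence is in place, the hypothesis that every path in the DAG is blocked translates into the statement that $W$ is a vertex separator of $V_1$ and $V_2$ in $G^m$. The proof then concludes by invoking the global Markov property for undirected graphs (Hammersley--Clifford applied to the moralized ancestral graph): any distribution factorizing over a DAG factorizes, after restriction to an ancestral set, over the cliques of its moral graph, and separation in an undirected graph implies the corresponding conditional independence. This gives $V_1 \perp V_2 \mid W$.

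The main obstacle will be the bookkeeping in the path correspondence, particularly handling the case where a collider $V$ on a path is itself in $W$ versus only having a descendant in $W$, and ensuring that the chosen ancestral restriction keeps enough of the graph so that such descendants remain present and induce the required moral edges. A cleaner alternative I would fall back on is a direct structural induction on a topological ordering of $G'$: peel off a sink vertex not in $\{V_1, V_2\}$ at each step, showing that the blocking hypothesis is preserved and that the conditional independence at the smaller graph lifts to the larger one; this avoids moralization but demands careful case analysis for each of the collider/non-collider configurations a removed vertex may occupy on the surviving paths.
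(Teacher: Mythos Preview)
Your proposal is a correct and standard route to the soundness of d-separation (the Lauritzen moralization argument), but note that the paper does not actually prove this lemma at all: it is stated in the appendix as a recalled fact about Bayesian networks, with a pointer to \cite{bishop2006pattern} for details. So there is no ``paper's own proof'' to compare against; you have supplied far more than the authors did.

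On the substance of your plan: the moralized ancestral subgraph argument is exactly the classical proof. One small point worth tightening is that for the lemma as stated you only need one direction of the path correspondence, namely that an undirected $W$-avoiding path in $G^m$ lifts to an unblocked path in the original DAG (this is the contrapositive of ``all paths blocked $\Rightarrow$ $W$ separates in $G^m$''); the other direction you mention is relevant for completeness of d-separation, not soundness. Your identification of the delicate step is accurate: when a collider $V$ on a path has only a strict descendant in $W$, you must trace a directed path from $V$ down to that descendant inside the ancestral set, and the moral edges picked up along the way are what let you reroute around $V$ in $G^m$. Restricting to $\mathrm{An}(\{V_1,V_2\}\cup W)$ is precisely what guarantees this descendant chain stays in the graph, so your ancestral restriction is doing real work there. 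The fallback induction on a topological order also works and is closer in spirit to Verma and Pearl's original argument; either approach would be an acceptable proof of a result the paper simply quotes.
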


\subsection{Detailed Proof of Theorem 1}

\begin{figure}[h!]
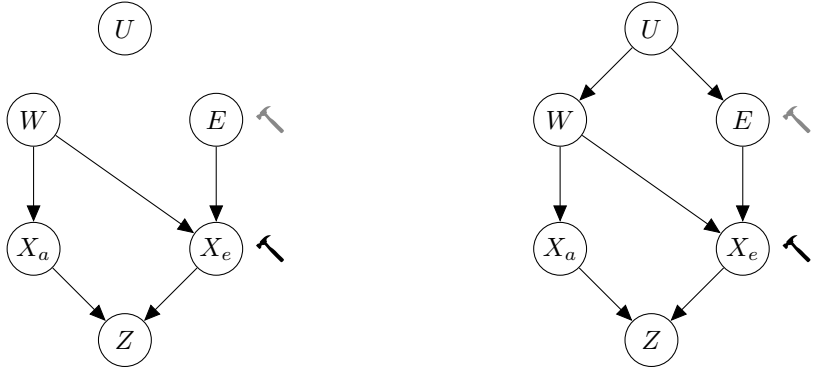

\centering
\begin{tikzpicture}[node distance=1.3cm and 1.3cm]

\begin{scope}[yshift=4cm]
\node[latent] (U1) {$U$};
\node[latent, below left=of U1] (W1) {$W$};
\node[latent, below right=of U1] (E1) {$E$};
\node[inner sep=0pt, right=0.1cm of E1] (image) 
{\includegraphics[width=0.5cm]{gray_hammer.pdf}};
\node[latent, below=of E1] (X1) {$X_e$};
\node[inner sep=0pt, right=0.1cm of X1] (image) 
{\includegraphics[width=0.5cm]{hammer.pdf}};
\node[latent, below=of W1] (C1) {$X_a$};
\node[latent, below right=of C1] (Z1) {$Z$};
\end{scope}

\draw[->] (E1) -- (X1);
\draw[->] (W1) -- (C1);
\draw[->] (W1) -- (X1);
\draw[->] (C1) -- (Z1);
\draw[->] (X1) -- (Z1);

\begin{scope}[xshift=7cm, yshift=4cm]
\node[latent] (U2) {$U$};
\node[latent, below left=of U2] (W2) {$W$};
\node[latent, below right=of U2] (E2) {$E$};
\node[inner sep=0pt, right=0.1cm of E2] (image) 
{\includegraphics[width=0.5cm]{gray_hammer.pdf}};
\node[latent, below=of E2] (X2) {$X_e$};
\node[inner sep=0pt, right=0.1cm of X2] (image) 
{\includegraphics[width=0.5cm]{hammer.pdf}};
\node[latent, below=of W2] (C2) {$X_a$};
\node[latent, below right=of C2] (Z2) {$Z$};
\end{scope}

\draw[->] (U2) -- (W2);
\draw[->] (U2) -- (E2);
\draw[->] (E2) -- (X2);
\draw[->] (W2) -- (C2);
\draw[->] (W2) -- (X2);
\draw[->] (C2) -- (Z2);
\draw[->] (X2) -- (Z2);

\end{tikzpicture}
\caption{Causal graphs for anomaly detection. 
The left figure shows the case of no confounding. The right figure shows the case of confounding. An intervention at the $E$ variable induces a domain shift (gray hammer), whereas an intervention at the $X_e$ variable induces a covariate shift (black hammer).}\label{fig:causal_graph_ad_app}
\end{figure}

We repeat in Figure~\ref{fig:causal_graph_ad_app} the causal graph for anomaly detection here for convenience. To prove Theorem 1, we use the following lemma.

\begin{lemma}
If $W$ and $E$ are confounded, then $X_a \, \bot \, E \mid W$. Otherwise, $X_a \, \bot \, E$.
\label{lem:indep}
\end{lemma}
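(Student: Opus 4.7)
The plan is to apply the d-separation principle from Section~\ref{sec:dsep} to the causal graphs in Figure~\ref{fig:causal_graph_ad_app}. In both cases, I would enumerate every undirected path from $X_a$ to $E$, classify each intermediate node as a collider or non-collider, and then verify that each path is blocked by the appropriate conditioning set (empty in the non-confounded case, $\{W\}$ in the confounded case). Since the graphs are small, the enumeration is short; the only care required is to track the observation status of colliders and their descendants.

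For the non-confounding case (left graph), $U$ has no incident edges, so there are exactly two undirected paths from $X_a$ to $E$. The first, $X_a \leftarrow W \to X_e \leftarrow E$, has $X_e$ as a collider whose only descendant $Z$ is not observed, so it is blocked. The second, $X_a \to Z \leftarrow X_e \leftarrow E$, has $Z$ as a collider with no descendants and is itself not observed, so it is also blocked. Both paths being blocked with an empty conditioning set, d-separation yields $X_a \perp E$.

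For the confounding case (right graph), the edges $U \to W$ and $U \to E$ introduce one new relevant path, $X_a \leftarrow W \leftarrow U \to E$, as well as a longer path $X_a \to Z \leftarrow X_e \leftarrow W \leftarrow U \to E$ that is already blocked at the collider $Z$. Conditioning on $W$, each of the three main paths is blocked: in $X_a \leftarrow W \to X_e \leftarrow E$ the non-collider $W$ is observed; in $X_a \to Z \leftarrow X_e \leftarrow E$ the collider $Z$ remains unobserved; in $X_a \leftarrow W \leftarrow U \to E$ the non-collider $W$ is observed. Hence $X_a \perp E \mid W$.

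The main obstacle is making the path enumeration truly exhaustive and applying the collider rule correctly: one must confirm that no path through $Z$ is inadvertently unblocked by an observed descendant (there are none here, since $Z$ is a sink), and that conditioning on $W$ does not open a new collider-based path (the only collider, $Z$, is not a descendant of $W$, so conditioning on $W$ cannot activate it). Once these checks are complete, the two independence statements follow directly from the d-separation lemma.
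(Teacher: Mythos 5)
Your proposal is correct and follows essentially the same d-separation path-enumeration argument as the paper's own proof. In fact you are slightly more careful: in the confounded case you also account for the length-six path $X_a \to Z \leftarrow X_e \leftarrow W \leftarrow U \to E$ (blocked at the collider $Z$), which the paper's proof omits when it asserts there are only three paths, although this omission does not affect the conclusion.
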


\begin{proof}
We prove this via d-separation. We start with the case of $W$ and $E$ not being confounded. Note that there are two paths from $X_a$ to $E$. One via $W$ and the other via $Z$. The path via $W$ is blocked, because $X_e$ is a collision and neither $X_e$ nor its descendant $Z$ are observed. The path via $Z$ is blocked, because $Z$ is a collision with no descendants and is not observed. Hence, $X_a \bot E$, when $W$ and $E$ are not confounded. In the case of $W$ and $E$ being confounded, we can show analogously that $X_a \, \bot \, E \mid W$. Note that there are three paths from $X_a$ to $E$. The first one is via $U$, but this is blocked, because $W$ is in that path, it is not a collision, and it is observed. The second one is via $W$ and $X_e$, but $W$ is in that path and it is not a collision there, so that path is also blocked. The third one is via $Z$, but $Z$ is a collision in that path with no descendants and it is not observed, so it is also blocked. Since all paths are blocked, when $W$ is observed, we conclude that $X_a \, \bot \, E \mid W$, when $W$ and $E$ are confounded.
\end{proof}

\begin{theorem}
Suppose that $f$ learns invariant representations. If $W$ and $E$ are confounded, then $Z \, \bot \, E \mid W$. Otherwise, $Z \, \bot \, E$.
\label{theor:main_theorem}
\end{theorem}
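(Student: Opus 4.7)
The plan is to reduce the theorem to Lemma~\ref{lem:indep} by transferring the (conditional) independence established there from $X_a$ to the representation $Z$. The reduction proceeds in two stages: first, I would establish that $\sigma(Z) \subseteq \sigma(X_a)$, meaning that every event involving $Z$ is already expressible in terms of $X_a$; second, I would use this inclusion to promote the independence between $X_a$ and $E$ (possibly conditional on $W$) into the desired independence between $Z$ and $E$.

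For the first stage, I would invoke the characterization from \citet{veitch2021counterfactual} that an encoder producing counterfactually invariant representations must be $X_a$-measurable in the sense of Definition~\ref{def:rv_measurable}. The intuition is that if $Z$ depended nontrivially on $X_e$, then an intervention on $E$ (or directly on $X_e$) would shift the law of $Z$, contradicting the invariance hypothesis. Combined with the graph of Fig.~\ref{fig:causal_graph_ad_app}, where the only parents of $Z$ are $X_a$ and $X_e$, invariance forces the $X_e$-contribution to be trivial and yields $\sigma(Z) \subseteq \sigma(X_a)$.

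For the second stage, Lemma~\ref{lem:indep} already provides the desired independence at the level of $X_a$: unconditionally when $W$ and $E$ are not confounded, and conditionally on $W$ otherwise. Because $\sigma(Z)$ is a sub-$\sigma$-algebra of $\sigma(X_a)$, any (conditional) independence between $\sigma(X_a)$ and $\sigma(E)$ automatically descends to the pair $\sigma(Z), \sigma(E)$. This is a direct application of the $\sigma$-algebra formulation of conditional independence, via the fact that preimages of Borel sets under $Z$ lie inside $\sigma(X_a)$, and requires no further causal reasoning.

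The main obstacle I anticipate is the rigorous justification of the first stage. The paper presents $X_a$-measurability as immediate from invariance, but behind this lies a nontrivial structural argument: one must rule out pathological encodings in which $Z$ depends on $X_e$ in ways that happen to remain invariant under the admissible interventions (for example, through symmetrizations or constant components in the $X_e$-direction). This is precisely where the assumed causal structure of Fig.~\ref{fig:causal_graph_ad_app} and the characterization of \citet{veitch2021counterfactual} enter in an essential way. Once the measurability claim is pinned down, the d-separation given by Lemma~\ref{lem:indep} together with the inheritance of (conditional) independence by sub-$\sigma$-algebras completes the proof without additional difficulty.
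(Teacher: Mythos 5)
Your proposal follows essentially the same route as the paper's own proof: both invoke $X_a$-measurability of $Z$ from the invariance assumption (citing \citet{veitch2021counterfactual}), reduce to Lemma~\ref{lem:indep} via d-separation, and then transfer the (conditional) independence from $X_a$ to $Z$ using the sub-$\sigma$-algebra/preimage relationship $\sigma(Z)\subseteq\sigma(X_a)$. Your phrasing of the second stage as inheritance of independence by a sub-$\sigma$-algebra is a cleaner abstraction of the explicit $p\left(Z^{-1}(A)\cap E^{-1}(B)\right)=p\left(X_a^{-1}(C_A)\cap E^{-1}(B)\right)$ computation the paper spells out, but it is the same argument.
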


\begin{proof}
Recall that if $f$ learns invariant representations, then we assume it to be $X_a$-measurable. This assumption is justified in~\cite{veitch2021counterfactual}. As a result, since $Z = f(X_a, X_b)$, we have that $Z$ is $X_a$-measurable.

We assume that $W$ and $E$ are not confounded and show that $Z \bot E$ by proving that $p(Z \in A, E \in B) = p(Z \in A)p(E \in B)$, for any $A, B \in \mathcal{B}\left(\mathbb{R}^n\right)$. Note that $p\left(Z \in A, E \in B\right) =$ $p\left(Z^{-1}(A) \cap E^{-1}(B)\right) =$ $p\left(X_a^{-1}(C_A) \cap E^{-1}(B)\right)$, for some Borel set $C_A$. The last equality follows from $Z$ being $X_a$-measurable, which implies that $Z^{-1}(A) = X_a^{-1}(C_A)$, for some Borel set $C_A$, by Definition~\ref{def:rv_measurable}. By Lemma~\ref{lem:indep}, we have that $X_a \bot E$. This implies that $p\left(X_a^{-1}(C_A) \cap E^{-1}(B)\right) =$ $p\left(X_a^{-1}(C_A)\right) p\left(E^{-1}(B)\right) =$ $p\left(Z^{-1}(A)\right) p\left(E^{-1}(B)\right) =$ $p(Z \in A)p(E \in B)$, which is what we wanted to show.

We now assume that $W$ and $E$ are confounded and show that $Z \bot E \mid W$ by proving that 
$$p(Z \in A, E \in B \mid W \in C) = p(Z \in A \mid W \in C)p(E \in B \mid W \in C),$$ 
for any $A, B, C \in \mathcal{B}\left(\mathbb{R}^n\right)$. By an analogous argument, we can show that $p\left(Z \in A, E \in B \mid W \in C\right) =$ $p\left(X_a^{-1}(C_A) \cap E^{-1}(B) \mid W^{-1}(C)\right)$. By Lemma~\ref{lem:indep}, we have that $p\left(X_a^{-1}(C_A) \mid W^{-1}(C)\right)p\left(E^{-1}(B) \mid W^{-1}(C)\right)$. With arguments similar to those above, we can show that the last expression is equal to $p(Z \in A \mid W \in C)p(E \in B \mid W \in C)$, which is what we wanted to show.
\end{proof}

\normalsize
\section{Dataset Details}
\subsection{Camelyon17}
Our realistic anomaly detection dataset was derived from the Camelyon17 dataset (\cite{koh2021wilds,bandi2018detection}), and contains $3\times96\times96$ patches of whole-slide images of lymph node sections sourced from patients who may have metastatic breast cancer. This dataset encompasses tissue patches obtained from five different hospitals. The objective here is to accurately predict the presence of tumor tissue within the patches drawn from hospitals that were not part of the training data. Prior work has shown that differences in staining between hospitals are the primary source of variation in this dataset, however, other divergent factors in the sampling distribution include different acquisition protocols and patient populations (\cite{tellez2019quantifying}).

The in-distribution data was comprised of $151,280$ images evenly distributed across three hospitals, or $100,810$ images evenly distributed across two hospitals depending on the training setting. The other out-of-distribution data covered two additional datasets, the first with $34,904$ patches, and the second with $85,054$ patches. Note that to adapt this dataset to the anomaly detection setting, only normal images were included in the in-distribution training data.

\begin{figure}[h]
    \centering
    \includegraphics[width=1\textwidth]{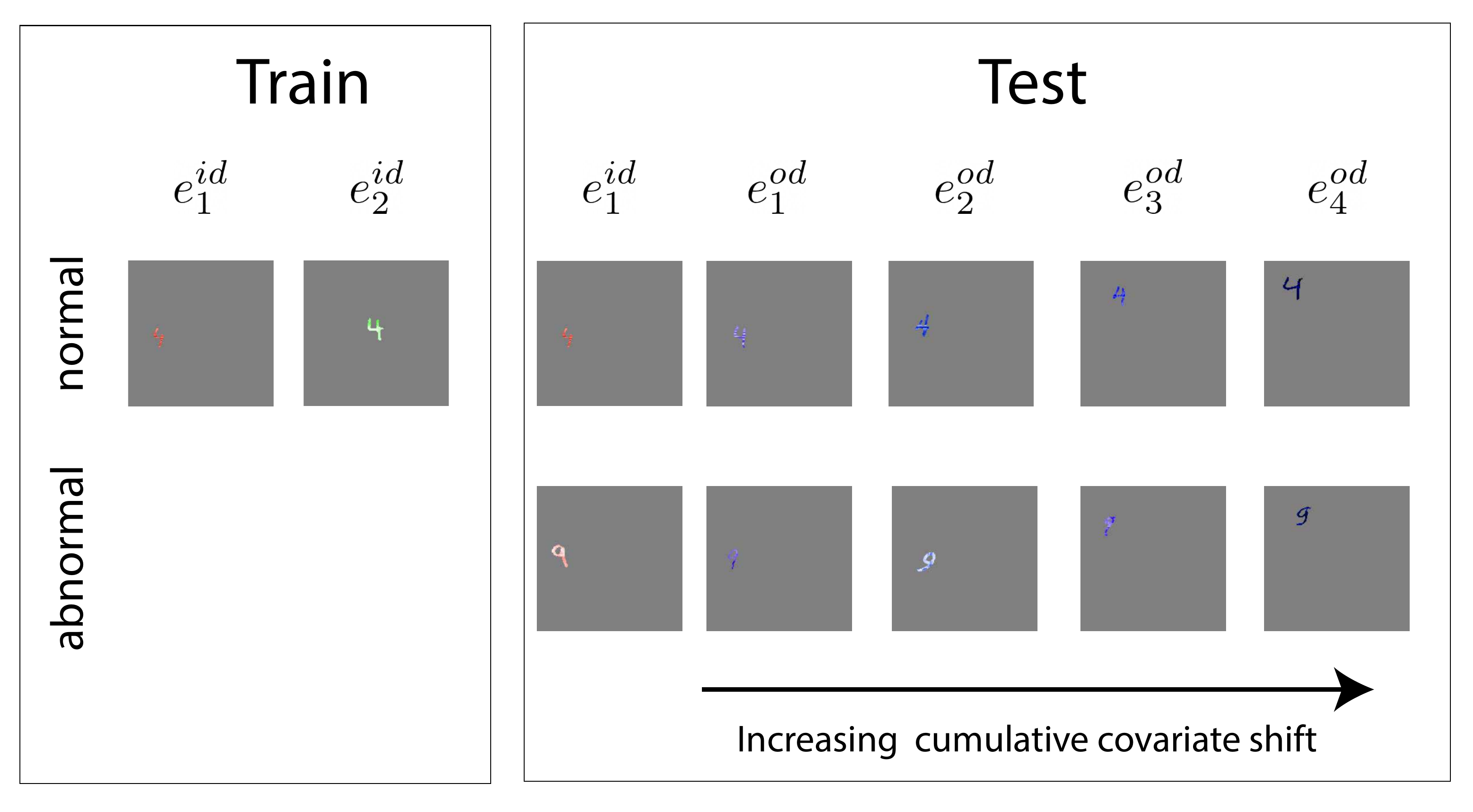}
    \caption{Illustration of our experimental setup for the synthetic covariate shift experiment. The image demonstrates representative examples of training data from two distinct environments, alongside instances of normal and abnormal test data subject to progressively accumulated covariate shifts. This configuration embodies the nuanced challenge of identifying subtle, yet potentially consequential, changes in the data distribution.}
    \label{fig:synthetic}
\end{figure}

\subsection{Synthetic datasets}
The synthetic datasets employed in this study were derived from the DiagViB-6 benchmark (\cite{eulig2021diagvib}). This benchmark uniquely allows for the manipulation of five independent generative factors from colored images: overlaid texture, object size, object position, lightness, and saturation, in addition to the semantic features that correspond to the label. Our synthetic experiments utilized two datasets: MNIST (\cite{deng2012mnist}) and Fashion-MNIST (\cite{xiao2017fashion}). All images in both datasets were upsampled to dimensions of $3\times256\times256$.

Initially, we generate two unique and distinct environments specifically designed for the training data. Our primary goal during this stage was to guarantee that all these factors exhibited noticeable differences when compared across the two generated environments.

Following the generation of these training environments, we proceeded to develop another pair of environments. These new environments were crafted for the validation data. To ensure consistency, these validation environments were fashioned in such a way that they closely mirrored or replicated the factor configuration that was present in the initial training environments, thus retaining an in-distribution setting.

In the final step, six additional environments, denoted as ${e_0,e_1,...,e_5}$, were generated. Each environment $e_i$ consists of images in which $i$ factors have been altered with respect to $e_0$. For a depiction of the samples for these different environments, please refer to Fig.~\ref{fig:synthetic}. 

In devising our evaluation setup, we opted for inducing covariate shifts that are minor deviations from the original in-distribution environments. This decision was motivated by our goal to simulate subtle yet potentially detrimental covariate shifts, particularly in comparison to the challenge of differentiating normal from abnormal.

A description of the accumulated covariate shifts in the test environments, $e_0,e_1, e_2,e_3,e_4$, is provided in Table~\ref{tab:diagvib}.

\begin{table}[h]
\centering
\begin{tabular}{|c|c|}
  \hline
  $i$ & Chosen factor in $e_i$ \\
  \hline
  0 & None \\
  1 & Hue \\
  2 & Texture \\
  3 & Lightness \\
  4 & Position \\
  \hline
\end{tabular}

\caption{Environments $e_0, \ldots, e_4$ used in our synthetic benchmark. For $0 < i \leq 4$, the environment $e_i$ modifies the new factor indicated in the table in addition to the factors modified by $e_{0},\ldots, e_{i-1}$.}
\label{tab:diagvib}
\end{table}

\section{Invariantly pretrained encoders}
We also extend our experimental evaluation by adding a comparison to \cite{smeu2022envaware}, an environment-aware framework for AD that pretrains the encoder of the AD model using an invariance-inducing method (LISA or IRM). We evaluate this method in MNIST and F-MNIST subject to targeted covariate shifts (the exact same setup as seen in our original experiments). The method was incorporated into all baselines and compared to the same baselines while regularized through partial conditional invariance. 

We show the results of these experiments in Fig.~\ref{fig:env_aware}. Across all methods tested, and in both datasets, we observe a sharp decrease in performance when compared to a baseline non-invariant method and that it provides less robustness to covariate shifts than our proposed methodology.

\begin{figure}[!h]
    \centering
    \begin{subfigure}{0.32\textwidth}
        \includegraphics[width=\textwidth]{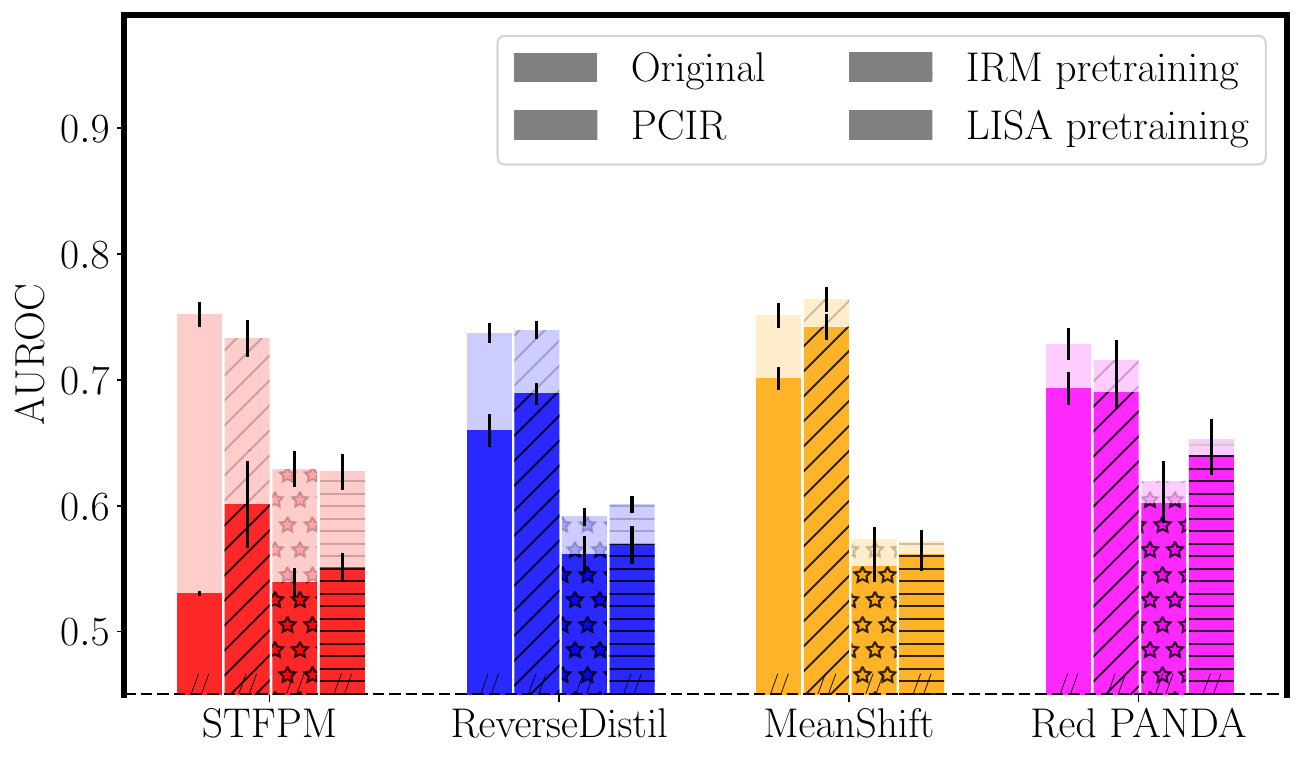}
        \caption{$e_1$:MNIST s.t. one covariate shift}
    \end{subfigure}
    \hfill
    \begin{subfigure}{0.32\textwidth}
        \includegraphics[width=\textwidth]{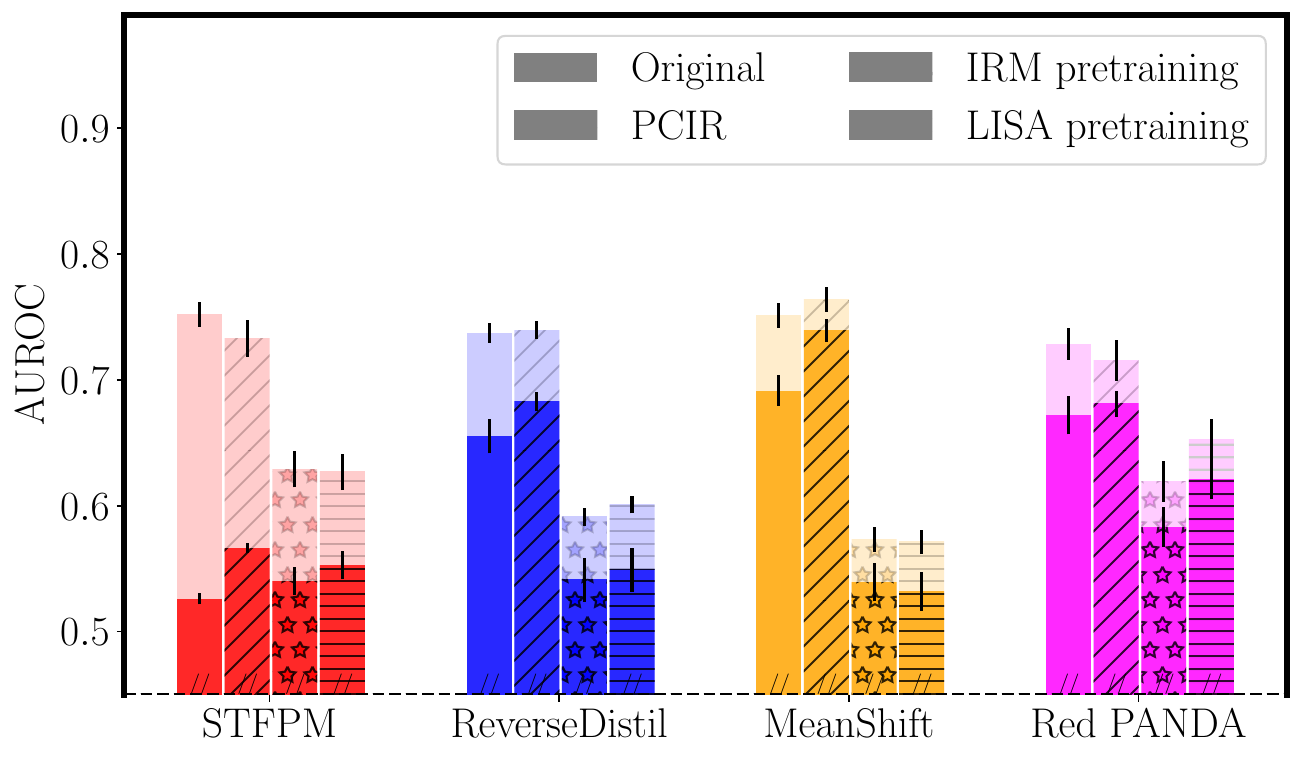}
        \caption{$e_2$:MNIST s.t. two covariate shifts}
    \end{subfigure}
    \hfill
    \begin{subfigure}{0.32\textwidth}
        \includegraphics[width=\textwidth]{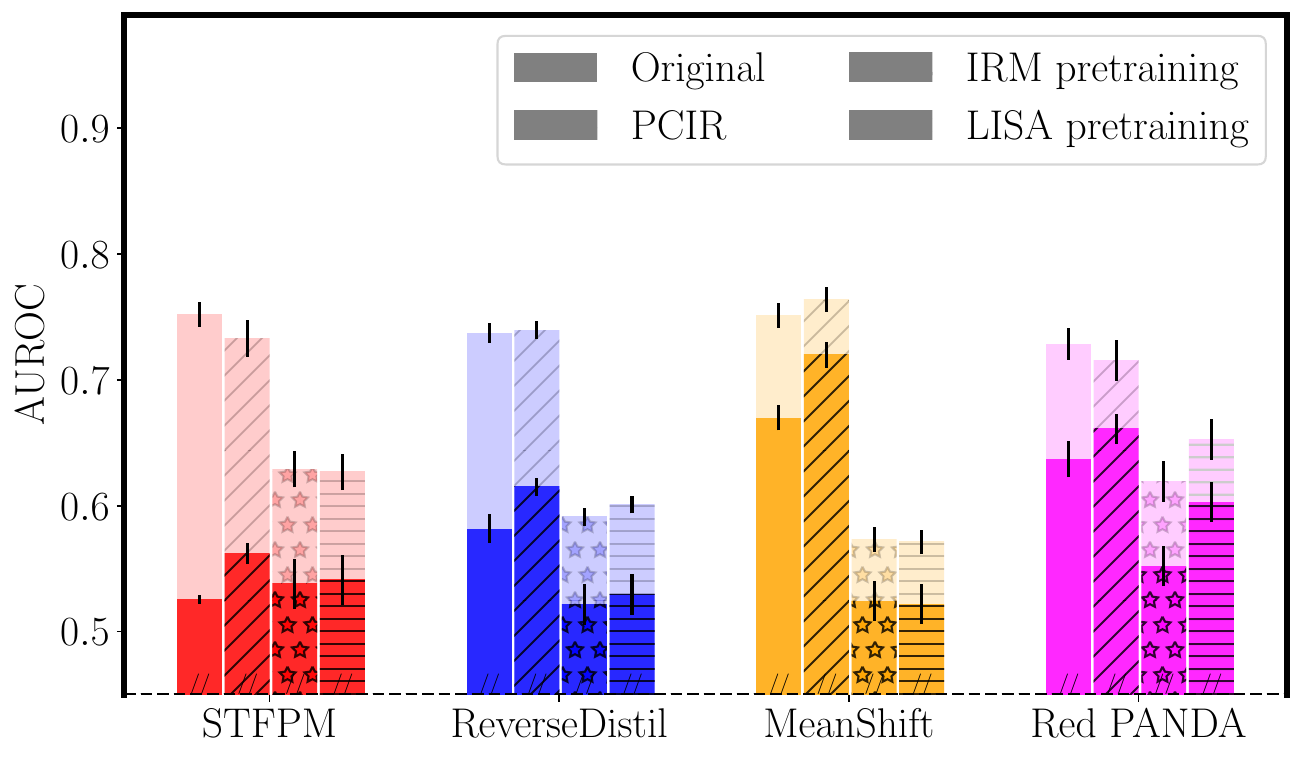}
        \caption{$e_3$:MNIST s.t three covariate shifts}
    \end{subfigure}
    
    \begin{subfigure}{0.32\textwidth}
        \includegraphics[width=\textwidth]{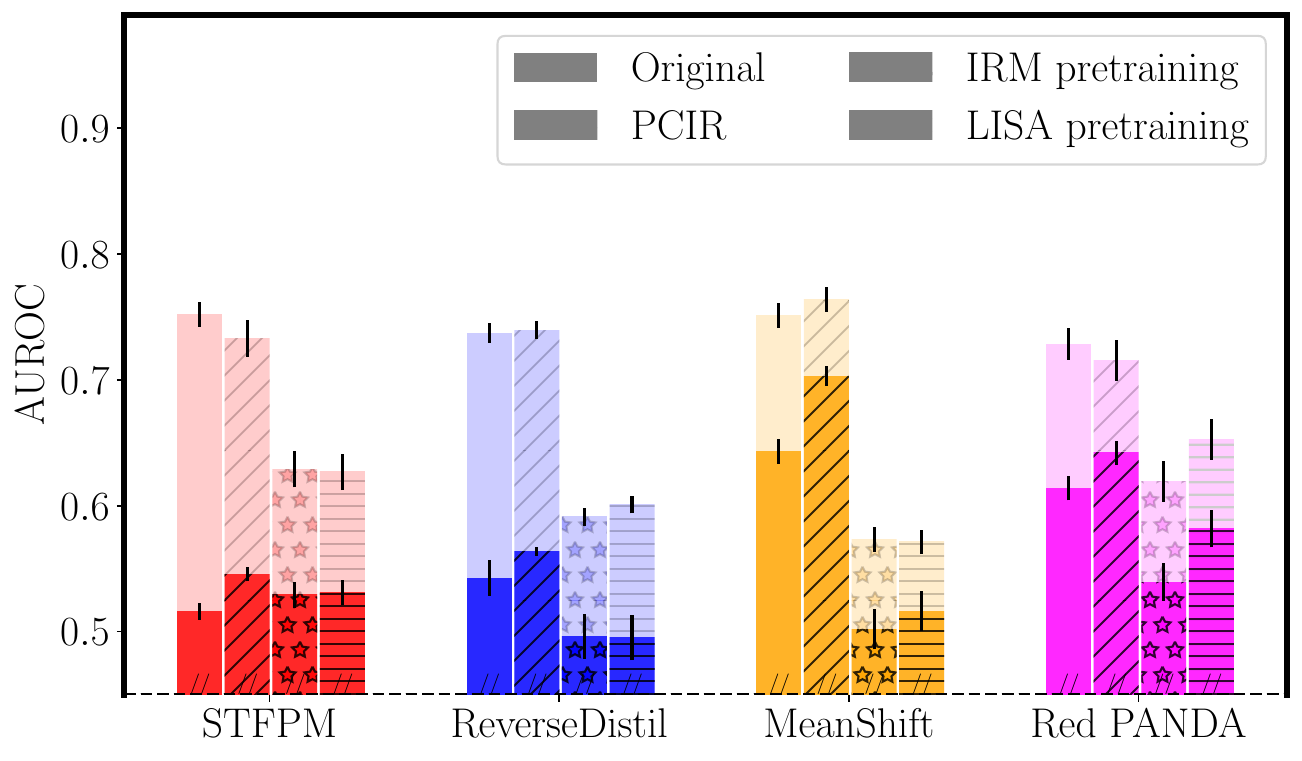}
        \caption{$e_4$:MNIST s.t. four covariate shifts}
    \end{subfigure}
    \hfill
    \begin{subfigure}{0.32\textwidth}
        \includegraphics[width=\textwidth]{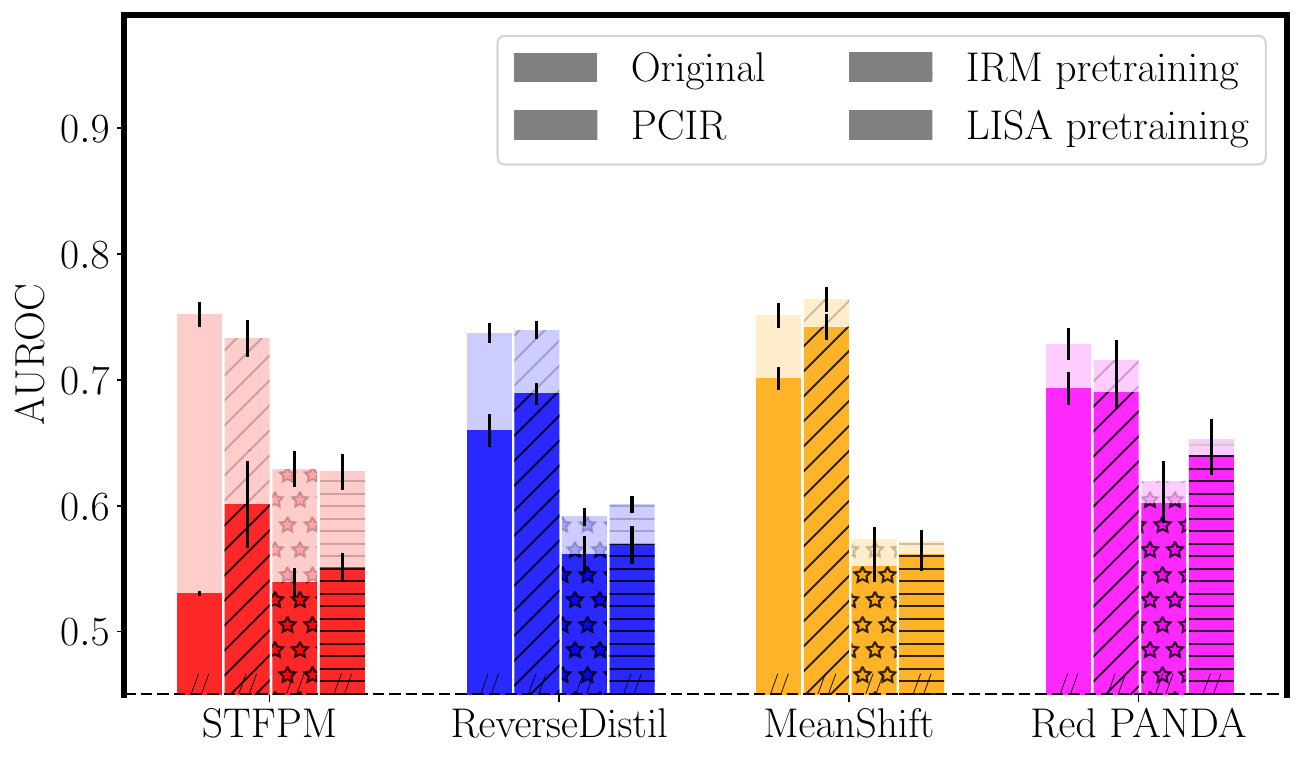}
        \caption{$e_2$:F-MNIST s.t. one covariate shift}
    \end{subfigure}
    \hfill
    \begin{subfigure}{0.32\textwidth}
        \includegraphics[width=\textwidth]{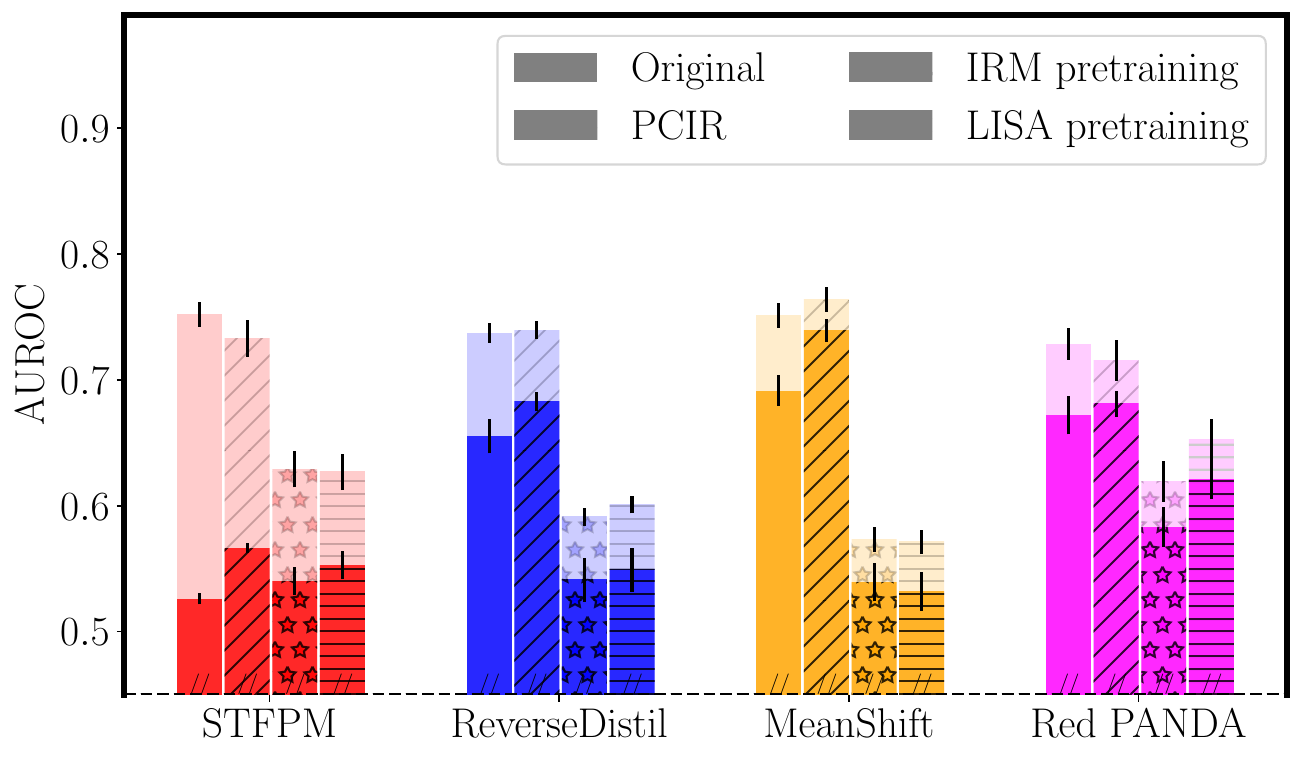}
        \caption{$e_2$:F-MNIST s.t. two covariate shift}
    \end{subfigure}
    
    \begin{subfigure}{0.32\textwidth}
        \includegraphics[width=\textwidth]{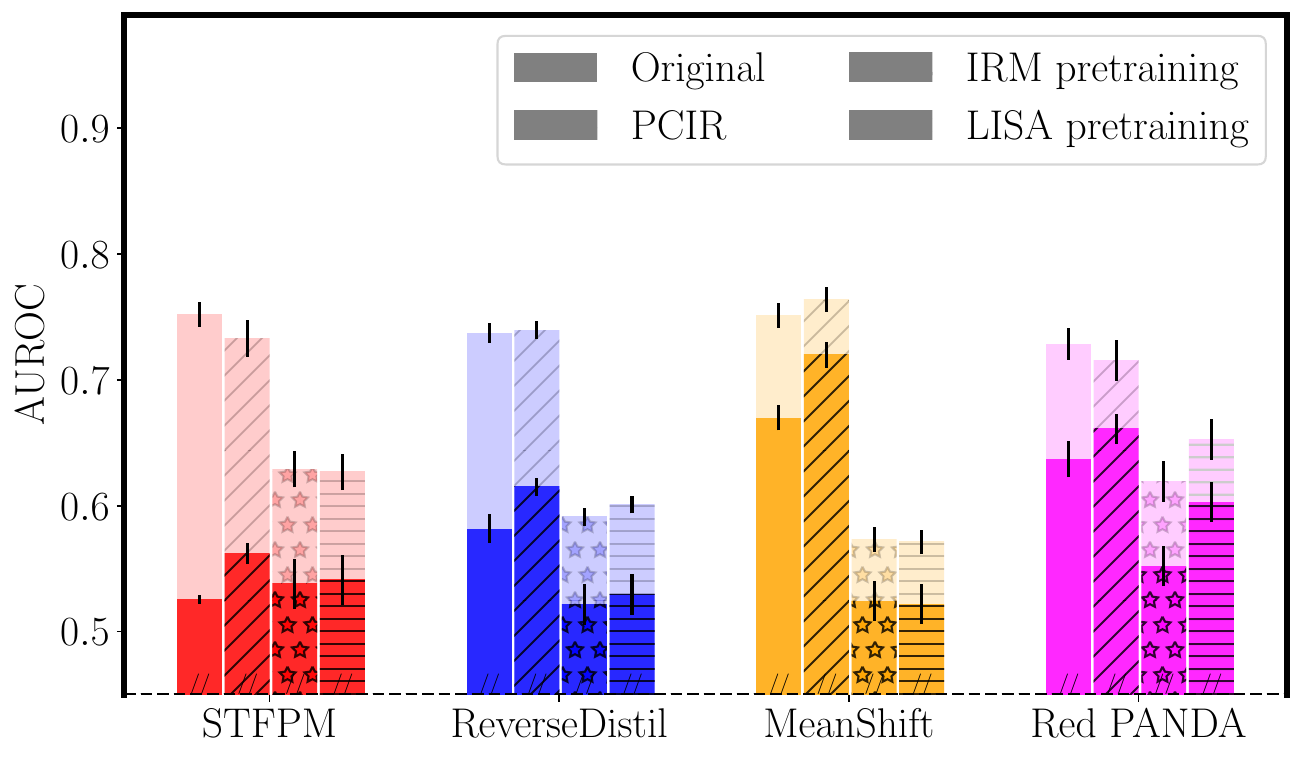}
        \caption{$e_3$:F-MNIST s.t. three covariate shifts}
    \end{subfigure}
    \begin{subfigure}{0.32\textwidth}
        \includegraphics[width=\textwidth]{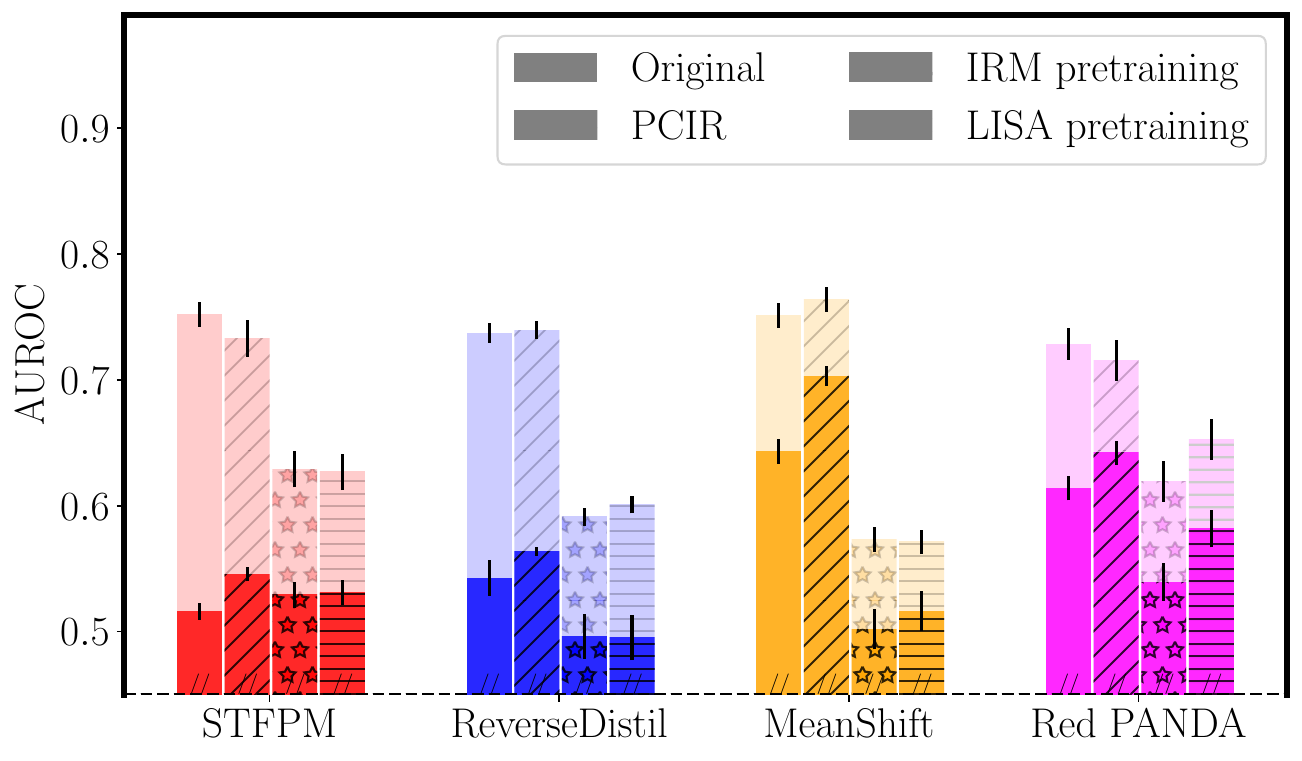}
        \caption{$e_4$:F-MNIST s.t. four covariate shifts}
    \end{subfigure}
    \caption{Experimental results MNIST and Fashion-MNIST with additional invariant pretraining following. (\textbf{background transparent bar-plots:} in-distribution evaluation; \textbf{foreground opaque bar-plots: }out-of-distribution evaluation). \textbf{(a-d)} Results in MNIST. \textbf{(e-h)} Results in Fashion-MNIST.}
    
    \label{fig:env_aware}
\end{figure}

\section{Visualization of Invariance  \textit{vs} Informativeness}

In Fig.~\ref{fig:embeddings} we plot the two-dimensional representation of the final layer of a model trained through MeanShift(\cite{reiss2021mean}) at different levels of PCIR regularization. The embeddings are obtained through t-SNE. From the progressive increase in the weight of the PCIR term, we see the increased superimposition of the different environments leading to more invariance at the loss of informativeness in the representation.

\begin{figure}[h]
    \centering
    \begin{subfigure}{0.3\textwidth}
        \includegraphics[width=\textwidth]{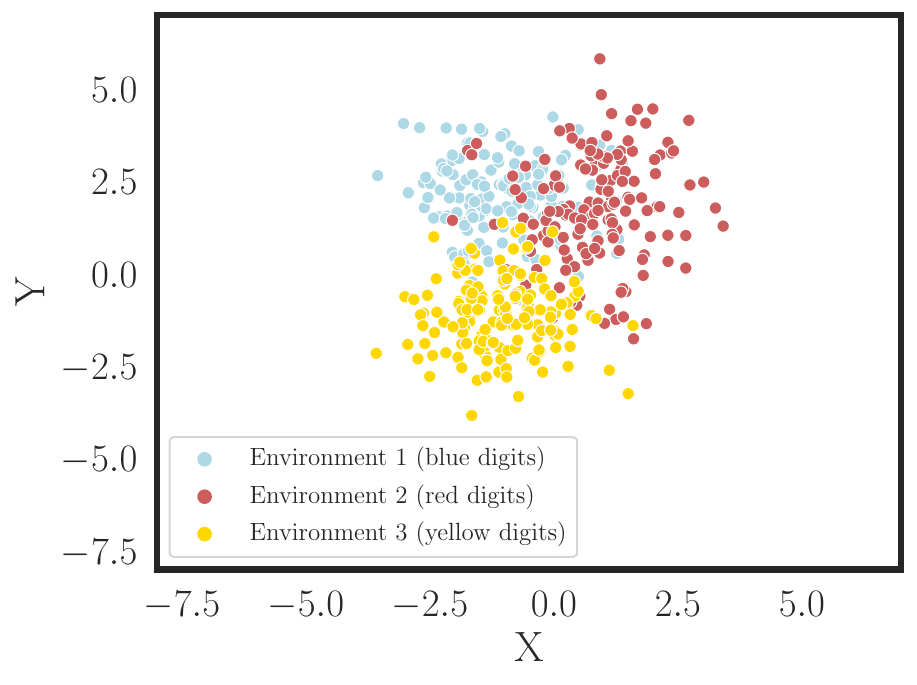}
        \caption{Invariant and informative}
    \end{subfigure}
    \hfill
    \begin{subfigure}{0.3\textwidth}
        \includegraphics[width=\textwidth]{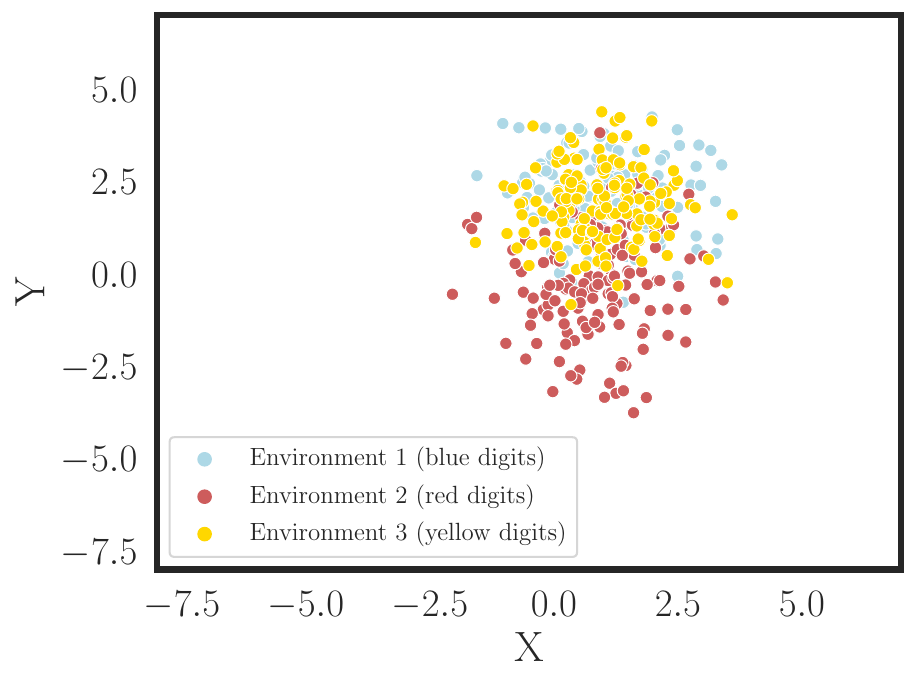}
        \caption{Invariant but non-informative}
    \end{subfigure}
    \hfill
    \begin{subfigure}{0.3\textwidth}
        \includegraphics[width=\textwidth]{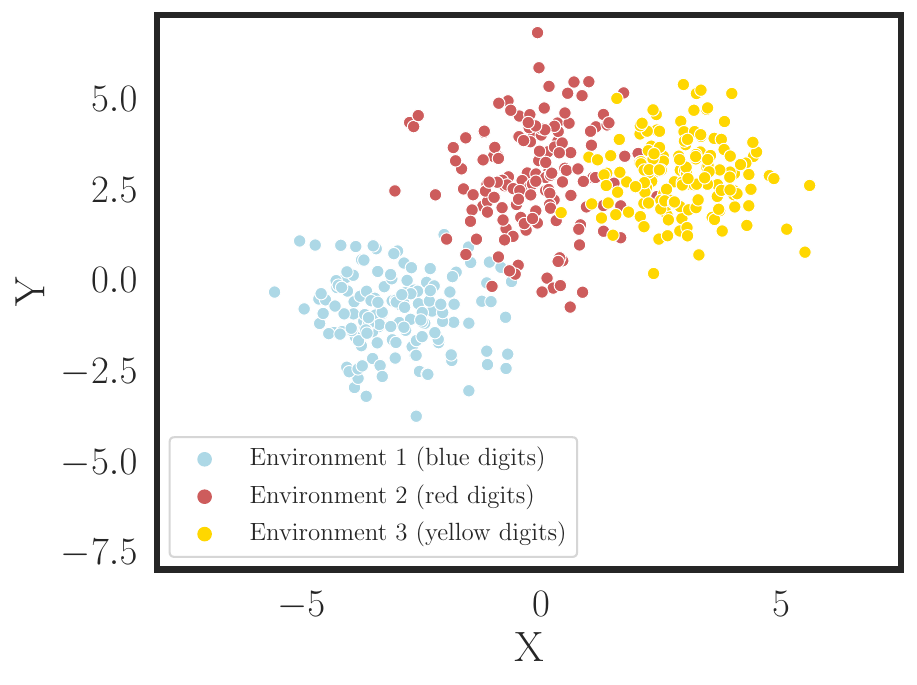}
        \caption{Informative but non-invariant}
    \end{subfigure}
 \caption{TSNE embeddings of MNIST with three background colors for the digits 4 and 9. The model used was MeanShift subject to different degrees of partial conditional invariant regularization. \textbf{(a)} PCIR term set to 5 \textbf{(b)} PCIR term set to 150. \textbf{(c)} PCIR term set to 0.}
     \label{fig:embeddings}
\end{figure}

\section{Tables of Results}

\begin{table}[H]
\centering
\small
\begin{tabular}{lcc|ccc}
\toprule
               & \multicolumn{2}{c}{\textbf{In dist.}} & \multicolumn{3}{c}{\textbf{Out of dist.}}\\
               &          \textbf{$e_{1}$} &      \textbf{$e_{2}$} &     \textbf{$e_{3}$} &     \textbf{$e_{4}$} &     \textbf{$e_{5}$} \\
& ($\uparrow$) AUROC & ($\uparrow$) AUROC & ($\uparrow$) AUROC & ($\uparrow$) AUROC & ($\uparrow$) AUROC\\ \cmidrule{2-6}
              STFPM  & 0.850 $\pm$ 0.005 & 0.883 $\pm$ 0.014 & 0.815 $\pm$ 0.024 & 0.753 $\pm$ 0.007 & 0.745 $\pm$ 0.005 \\
        STFPM (PCIR) & 0.868 $\pm$ 0.007 & 0.873 $\pm$ 0.008 & 0.814 $\pm$ 0.014 & 0.798 $\pm$ 0.011 & 0.774 $\pm$ 0.010 \\
      ReverseDistil  & 0.853 $\pm$ 0.009 & 0.884 $\pm$ 0.014 & 0.797 $\pm$ 0.015 & 0.716 $\pm$ 0.009 & 0.723 $\pm$ 0.004 \\
ReverseDistil (PCIR) & 0.866 $\pm$ 0.007 & 0.903 $\pm$ 0.007 & 0.857 $\pm$ 0.010 & 0.744 $\pm$ 0.021 & 0.718 $\pm$ 0.010 \\
                CFA  & 0.625 $\pm$ 0.002 & 0.689 $\pm$ 0.008 & 0.566 $\pm$ 0.014 & 0.454 $\pm$ 0.005 & 0.590 $\pm$ 0.017 \\
          CFA (PCIR) & 0.625 $\pm$ 0.003 & 0.700 $\pm$ 0.007 & 0.597 $\pm$ 0.011 & 0.473 $\pm$ 0.009 & 0.595 $\pm$ 0.027 \\
          MeanShift  & 0.751 $\pm$ 0.012 & 0.761 $\pm$ 0.014 & 0.731 $\pm$ 0.015 & 0.703 $\pm$ 0.009 & 0.701 $\pm$ 0.010 \\
    MeanShift (PCIR) & 0.781 $\pm$ 0.014 & 0.791 $\pm$ 0.014 & 0.778 $\pm$ 0.012 & 0.772 $\pm$ 0.013 & 0.770 $\pm$ 0.011 \\
                CSI  & 0.601 $\pm$ 0.021 & 0.600 $\pm$ 0.017 & 0.582 $\pm$ 0.014 & 0.491 $\pm$ 0.015 & 0.531 $\pm$ 0.014 \\
          CSI (PCIR) & 0.648 $\pm$ 0.022 & 0.652 $\pm$ 0.021 & 0.621 $\pm$ 0.020 & 0.581 $\pm$ 0.018 & 0.600 $\pm$ 0.023 \\
          Red PANDA  & 0.761 $\pm$ 0.017 & 0.764 $\pm$ 0.015 & 0.671 $\pm$ 0.015 & 0.641 $\pm$ 0.015 & 0.651 $\pm$ 0.012 \\
    Red PANDA (PCIR) & 0.761 $\pm$ 0.017 & 0.769 $\pm$ 0.015 & 0.742 $\pm$ 0.019 & 0.731 $\pm$ 0.015 & 0.740 $\pm$ 0.015 \\
\bottomrule
\end{tabular}
\caption{Experimental results on realist domain shift in the \textbf{Camelyon17} dataset, for both regularized and unregularized models trained on two environments. Results are presented over in-distribution evaluation (environments $e_1$ and $e_2$) and over out-of-distribution (environments $e_3$, $e_4$, and $e_5$).}
\end{table}

\begin{table}[H]
\centering
\small
\begin{tabular}{lccc|cc}
\toprule
               & \multicolumn{3}{c}{\textbf{In dist.}} & \multicolumn{2}{c}{\textbf{Out of dist.}}\\
               &          \textbf{$e_{1}$} &      \textbf{$e_{2}$} &     \textbf{$e_{3}$} &     \textbf{$e_{4}$} &     \textbf{$e_{5}$} \\
& ($\uparrow$) AUROC & ($\uparrow$) AUROC & ($\uparrow$) AUROC & ($\uparrow$) AUROC & ($\uparrow$) AUROC\\ \cmidrule{2-6}
              STFPM  & 0.854 $\pm$ 0.011 & 0.873 $\pm$ 0.004 & 0.782 $\pm$ 0.013 & 0.771 $\pm$ 0.013 & 0.735 $\pm$ 0.008 \\
        STFPM (PCIR) & 0.865 $\pm$ 0.006 & 0.873 $\pm$ 0.005 & 0.796 $\pm$ 0.014 & 0.782 $\pm$ 0.012 & 0.759 $\pm$ 0.009 \\
      ReverseDistil  & 0.830 $\pm$ 0.028 & 0.866 $\pm$ 0.007 & 0.781 $\pm$ 0.004 & 0.707 $\pm$ 0.023 & 0.710 $\pm$ 0.033 \\
ReverseDistil (PCIR) & 0.843 $\pm$ 0.009 & 0.880 $\pm$ 0.015 & 0.799 $\pm$ 0.004 & 0.714 $\pm$ 0.010 & 0.715 $\pm$ 0.009 \\
                CFA  & 0.667 $\pm$ 0.004 & 0.708 $\pm$ 0.011 & 0.628 $\pm$ 0.019 & 0.559 $\pm$ 0.005 & 0.643 $\pm$ 0.006 \\
          CFA (PCIR) & 0.667 $\pm$ 0.014 & 0.705 $\pm$ 0.019 & 0.623 $\pm$ 0.004 & 0.561 $\pm$ 0.003 & 0.644 $\pm$ 0.003 \\
          MeanShift  & 0.742 $\pm$ 0.014 & 0.741 $\pm$ 0.013 & 0.681 $\pm$ 0.014 & 0.739 $\pm$ 0.012 & 0.690 $\pm$ 0.013 \\
    MeanShift (PCIR) & 0.772 $\pm$ 0.013 & 0.784 $\pm$ 0.012 & 0.739 $\pm$ 0.018 & 0.747 $\pm$ 0.013 & 0.731 $\pm$ 0.013 \\
                CSI  & 0.620 $\pm$ 0.015 & 0.636 $\pm$ 0.014 & 0.548 $\pm$ 0.013 & 0.571 $\pm$ 0.012 & 0.541 $\pm$ 0.015 \\
          CSI (PCIR) & 0.650 $\pm$ 0.019 & 0.650 $\pm$ 0.010 & 0.613 $\pm$ 0.015 & 0.624 $\pm$ 0.014 & 0.611 $\pm$ 0.010 \\
          Red PANDA  & 0.751 $\pm$ 0.012 & 0.742 $\pm$ 0.013 & 0.651 $\pm$ 0.010 & 0.751 $\pm$ 0.010 & 0.641 $\pm$ 0.011 \\
    Red PANDA (PCIR) & 0.767 $\pm$ 0.016 & 0.781 $\pm$ 0.014 & 0.761 $\pm$ 0.013 & 0.770 $\pm$ 0.012 & 0.751 $\pm$ 0.009 \\
\bottomrule
\end{tabular}
\caption{Experimental results on realist domain shift in the \textbf{Camelyon17} dataset, for both regularized and unregularized models trained on three environments. Results are presented over in-distribution evaluation (environments $e_1$, $e_2$, $e_3$) and over out-of-distribution (environments $e_4$, and $e_5$).}
\end{table}

\begin{table}[H]
\centering
\small
\begin{tabular}{lcc}
\toprule
                      &          \textbf{In dist.} &      \textbf{Out of dist.} \\
& ($\uparrow$) AUROC & ($\uparrow$) AUROC\\ \cmidrule{2-3}
                     STFPM  & 0.699 $\pm$ 0.025 & 0.630 $\pm$ 0.025 \\
               STFPM (PCIR) & 0.724 $\pm$ 0.020 & 0.698 $\pm$ 0.023 \\
      ReverseDistil  & 0.673 $\pm$ 0.057 & 0.617 $\pm$ 0.032 \\
ReverseDistil (PCIR) & 0.734 $\pm$ 0.013 & 0.723 $\pm$ 0.013 \\
                       CFA  & 0.752 $\pm$ 0.003 & 0.705 $\pm$ 0.018 \\
                 CFA (PCIR) & 0.785 $\pm$ 0.003 & 0.759 $\pm$ 0.005 \\
           MeanShift  & 0.683 $\pm$ 0.041 & 0.629 $\pm$ 0.043 \\
     MeanShift (PCIR) & 0.731 $\pm$ 0.022 & 0.726 $\pm$ 0.052 \\
                       CSI  & 0.671 $\pm$ 0.026 & 0.626 $\pm$ 0.024 \\
                 CSI (PCIR) & 0.692 $\pm$ 0.017 & 0.674 $\pm$ 0.019 \\
                 Red PANDA  & 0.732 $\pm$ 0.026 & 0.691 $\pm$ 0.031 \\
           Red PANDA (PCIR) & 0.742 $\pm$ 0.029 & 0.721 $\pm$ 0.012 \\
\bottomrule
\end{tabular}
\caption{Experimental results on realist shortcut learning in the \textbf{Waterbirds} dataset, for both regularized and unregularized. Results are presented over in-distribution evaluation and over out-of-distribution.}
\end{table}

\begin{table}[h]
\centering
\small
\begin{tabular}{lc|cccc}
\toprule
               &          \textbf{In dist.} &      \textbf{1 cov. shift} &     \textbf{2 cov. shifts} &     \textbf{3 cov. shifts} &     \textbf{4 cov. shifts} \\
& ($\uparrow$) AUROC & ($\uparrow$) AUROC & ($\uparrow$) AUROC & ($\uparrow$) AUROC & ($\uparrow$) AUROC\\ \cmidrule{2-6}
              STFPM  & 0.850 $\pm$ 0.005 & 0.883 $\pm$ 0.014 & 0.815 $\pm$ 0.024 & 0.753 $\pm$ 0.007 & 0.745 $\pm$ 0.005 \\
        STFPM (PCIR) & 0.868 $\pm$ 0.007 & 0.873 $\pm$ 0.008 & 0.814 $\pm$ 0.014 & 0.798 $\pm$ 0.011 & 0.774 $\pm$ 0.010 \\
      ReverseDistil  & 0.853 $\pm$ 0.009 & 0.884 $\pm$ 0.014 & 0.797 $\pm$ 0.015 & 0.716 $\pm$ 0.009 & 0.723 $\pm$ 0.004 \\
ReverseDistil (PCIR) & 0.866 $\pm$ 0.007 & 0.903 $\pm$ 0.007 & 0.857 $\pm$ 0.010 & 0.744 $\pm$ 0.021 & 0.718 $\pm$ 0.010 \\
                CFA  & 0.625 $\pm$ 0.002 & 0.689 $\pm$ 0.008 & 0.566 $\pm$ 0.014 & 0.454 $\pm$ 0.005 & 0.590 $\pm$ 0.017 \\
          CFA (PCIR) & 0.625 $\pm$ 0.003 & 0.700 $\pm$ 0.007 & 0.597 $\pm$ 0.011 & 0.473 $\pm$ 0.009 & 0.595 $\pm$ 0.027 \\
          MeanShift  & 0.751 $\pm$ 0.012 & 0.761 $\pm$ 0.014 & 0.731 $\pm$ 0.015 & 0.703 $\pm$ 0.009 & 0.701 $\pm$ 0.010 \\
    MeanShift (PCIR) & 0.781 $\pm$ 0.014 & 0.791 $\pm$ 0.014 & 0.778 $\pm$ 0.012 & 0.772 $\pm$ 0.013 & 0.770 $\pm$ 0.011 \\
                CSI  & 0.601 $\pm$ 0.021 & 0.600 $\pm$ 0.017 & 0.582 $\pm$ 0.014 & 0.491 $\pm$ 0.015 & 0.531 $\pm$ 0.014 \\
          CSI (PCIR) & 0.648 $\pm$ 0.022 & 0.652 $\pm$ 0.021 & 0.621 $\pm$ 0.020 & 0.581 $\pm$ 0.018 & 0.600 $\pm$ 0.023 \\
          Red PANDA  & 0.761 $\pm$ 0.017 & 0.764 $\pm$ 0.015 & 0.671 $\pm$ 0.015 & 0.641 $\pm$ 0.015 & 0.651 $\pm$ 0.012 \\
    Red PANDA (PCIR) & 0.761 $\pm$ 0.017 & 0.769 $\pm$ 0.015 & 0.742 $\pm$ 0.019 & 0.731 $\pm$ 0.015 & 0.740 $\pm$ 0.015 \\
\bottomrule
\end{tabular}
\caption{Experimental results on synthetic covariate shift over the \textbf{MNIST} dataset, for both regularized and unregularized models. Results are presented over in-distribution evaluation, and test sets subject to one to four different covariate shifts, as portrayed in Fig. \ref{fig:synthetic}}.
\end{table}

\begin{table}[h]
\centering
\small
\begin{tabular}{lc|cccc}
\toprule
               &          \textbf{In dist.} &      \textbf{1 cov. shift} &     \textbf{2 cov. shifts} &     \textbf{3 cov. shifts} &     \textbf{4 cov. shifts} \\
& ($\uparrow$) AUROC & ($\uparrow$) AUROC & ($\uparrow$) AUROC & ($\uparrow$) AUROC & ($\uparrow$) AUROC\\ \cmidrule{2-6}
              STFPM  & 0.752 $\pm$ 0.010 & 0.530 $\pm$ 0.002 & 0.526 $\pm$ 0.004 & 0.526 $\pm$ 0.004 & 0.516 $\pm$ 0.007 \\
        STFPM (PCIR) & 0.733 $\pm$ 0.014 & 0.601 $\pm$ 0.034 & 0.566 $\pm$ 0.004 & 0.562 $\pm$ 0.008 & 0.546 $\pm$ 0.006 \\
      ReverseDistil  & 0.737 $\pm$ 0.008 & 0.660 $\pm$ 0.013 & 0.655 $\pm$ 0.014 & 0.582 $\pm$ 0.011 & 0.543 $\pm$ 0.014 \\
ReverseDistil (PCIR) & 0.740 $\pm$ 0.007 & 0.689 $\pm$ 0.009 & 0.683 $\pm$ 0.008 & 0.615 $\pm$ 0.007 & 0.564 $\pm$ 0.004 \\
          MeanShift  & 0.751 $\pm$ 0.010 & 0.701 $\pm$ 0.009 & 0.692 $\pm$ 0.012 & 0.670 $\pm$ 0.010 & 0.643 $\pm$ 0.010 \\
    MeanShift (PCIR) & 0.764 $\pm$ 0.010 & 0.742 $\pm$ 0.010 & 0.739 $\pm$ 0.009 & 0.720 $\pm$ 0.010 & 0.703 $\pm$ 0.008 \\
                CSI  & 0.691 $\pm$ 0.016 & 0.629 $\pm$ 0.011 & 0.571 $\pm$ 0.014 & 0.550 $\pm$ 0.012 & 0.542 $\pm$ 0.014 \\
          CSI (PCIR) & 0.681 $\pm$ 0.014 & 0.661 $\pm$ 0.016 & 0.639 $\pm$ 0.014 & 0.611 $\pm$ 0.015 & 0.599 $\pm$ 0.011 \\
          Red PANDA  & 0.729 $\pm$ 0.013 & 0.693 $\pm$ 0.013 & 0.672 $\pm$ 0.015 & 0.637 $\pm$ 0.014 & 0.614 $\pm$ 0.010 \\
    Red PANDA (PCIR) & 0.715 $\pm$ 0.016 & 0.690 $\pm$ 0.013 & 0.681 $\pm$ 0.010 & 0.661 $\pm$ 0.012 & 0.642 $\pm$ 0.009 \\
\bottomrule
\end{tabular}
\caption{Experimental results on synthetic covariate shift over the \textbf{Fashion-MNIST} dataset, for both regularized and unregularized models. Results are presented over in-distribution evaluation, and test sets subject to one to four different covariate shifts, as portrayed in Fig. \ref{fig:synthetic}}

\end{table}

\section{Performance Gained Compared to Baseline}

Our investigation of the influence of partially conditional regularization on model performance is further expanded in Fig.\ref{fig:performance_gain}. This figure presents the percentile difference in mean-AUROC (Area Under the Receiver Operating Characteristic) between partially conditionally regularized and unregularized models.

By comparing each regularized model to its unregularized equivalent across all the tested environments, we have been able to observe a consistent improvement in performance when regularization is applied. This observation holds true across all models and out-of-distribution environments studied.

The increase in performance due to regularization varies from $2.5\%$ to as substantial as $20\%$ in some models. This variability implies a model-specific dependency on the degree of invariance that can be induced. Crucially, however, regularization substantially bolsters out-of-distribution performance without any additional training cost, reinforcing its merit as an effective strategy for developing robust anomaly detectors.

To further highlight this point and for the sack of completeness, we also present in Fig.~\ref{fig:compare_baseline_camelyon}, Fig.~\ref{fig:compare_baseline_mnist} and Fig.~\ref{fig:compare_baseline_fmnist}each regularized model compared to its baseline. 

\begin{figure}[h]
    \begin{subfigure}{0.33\textwidth}
        \includegraphics[width=\textwidth]{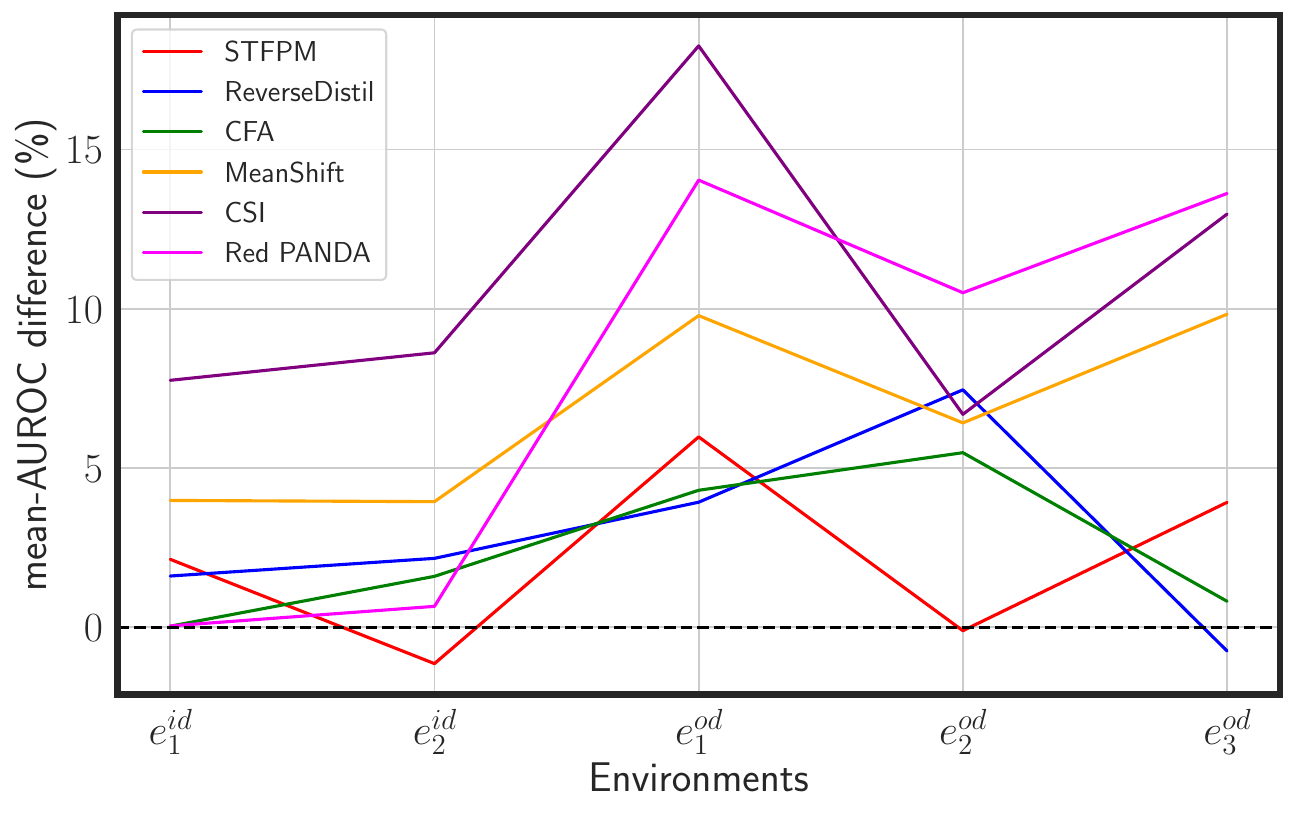}
        \caption{Camelyon17: two in-distribution}
    \end{subfigure}
    \hfill
    \begin{subfigure}{0.33\textwidth}
        \includegraphics[width=\textwidth]{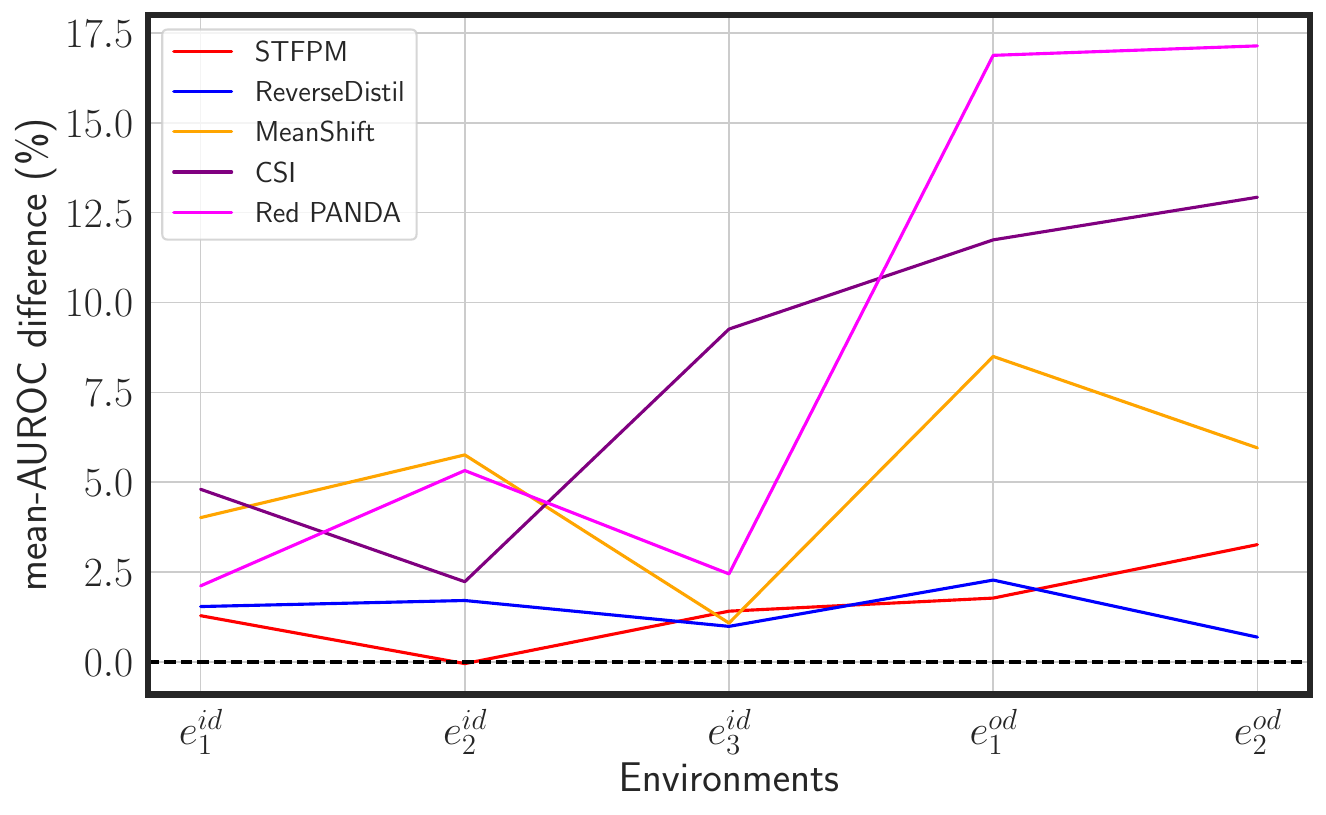}
        \caption{Camelyon17: three in-distribution}
    \end{subfigure}
    
    \begin{subfigure}{0.33\textwidth}
        \includegraphics[width=\textwidth]{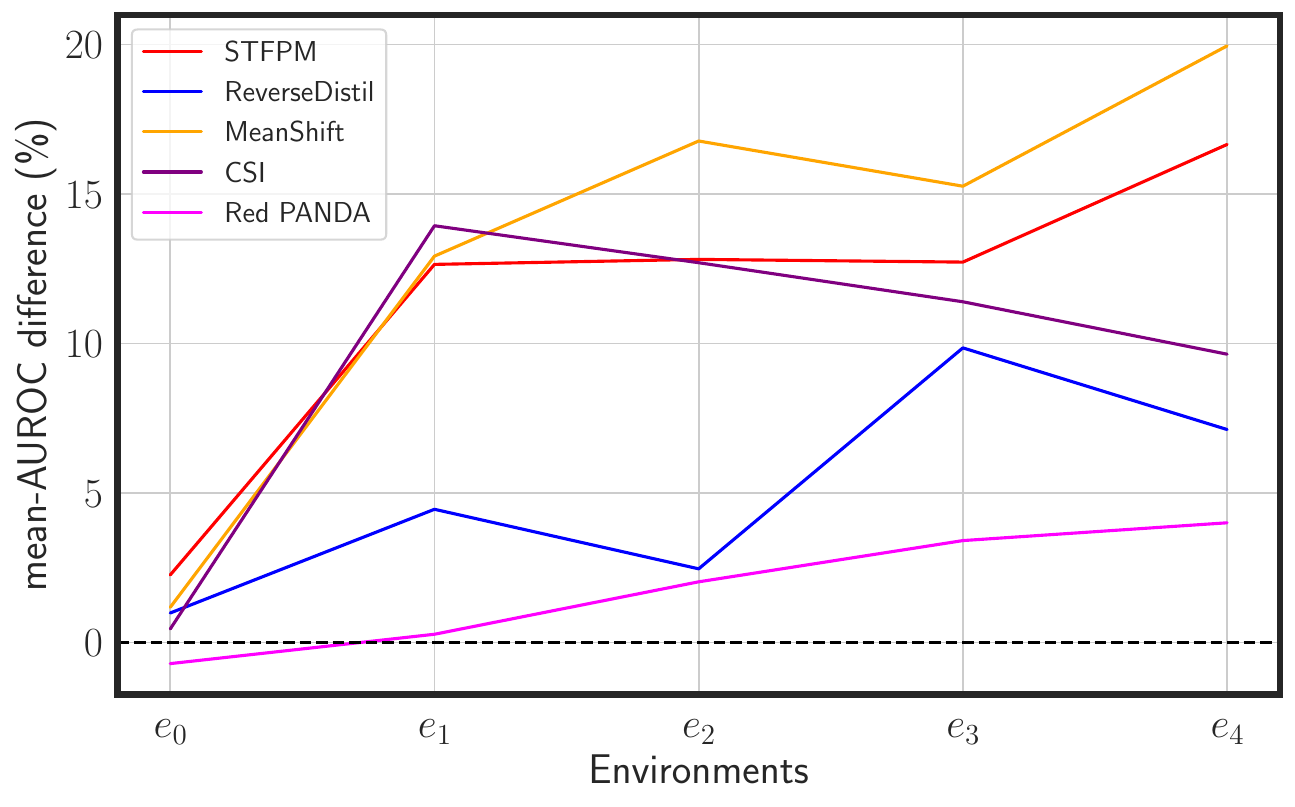}
        \caption{MNIST}
    \end{subfigure}
    \hfill
    \begin{subfigure}{0.33\textwidth}
        \includegraphics[width=\textwidth]{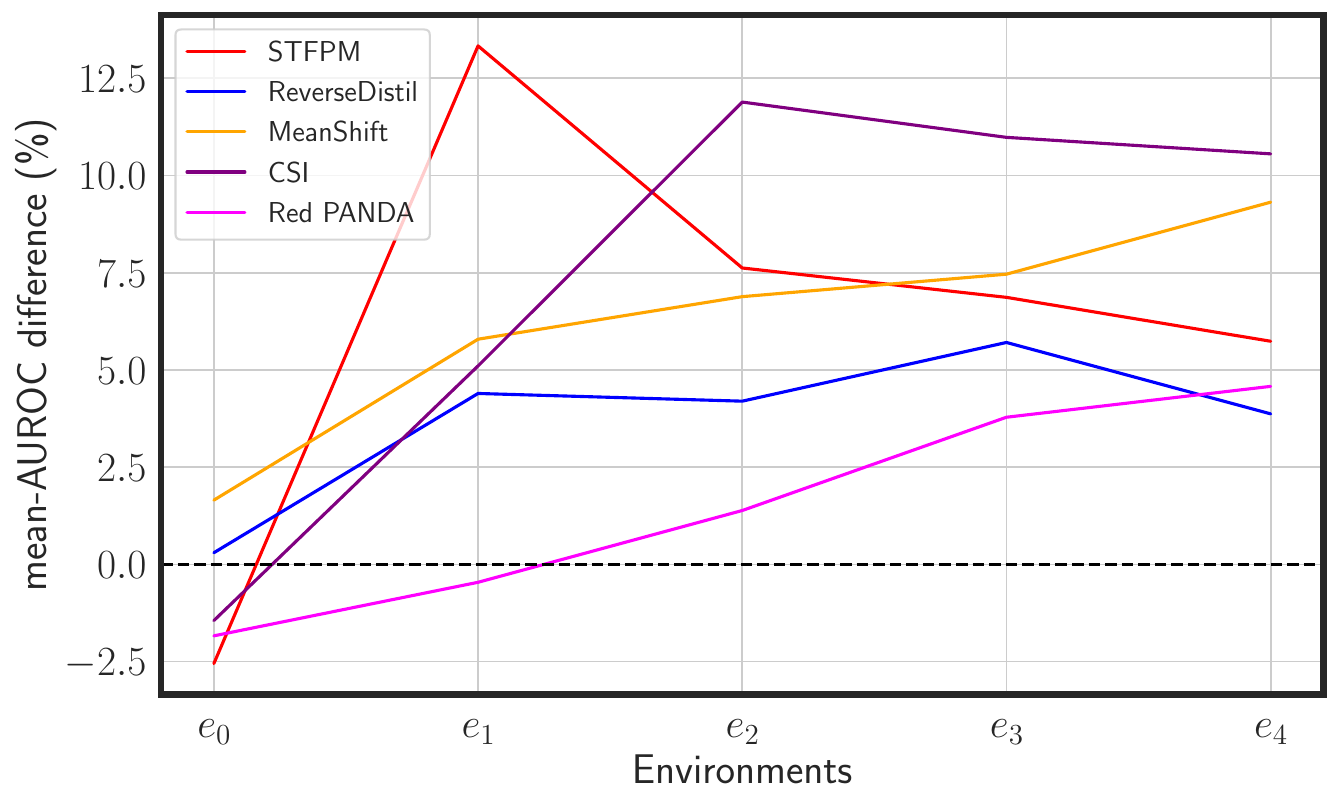}
        \caption{Fashion-MNIST}
    \end{subfigure}
    
    \caption{Percentual performance gain over each regularized model when compared to unreglarized baseline.}
    \label{fig:performance_gain}
\end{figure}

\begin{figure}[h]
    \centering
    
    \begin{subfigure}{0.32\textwidth}
        \includegraphics[width=\textwidth]{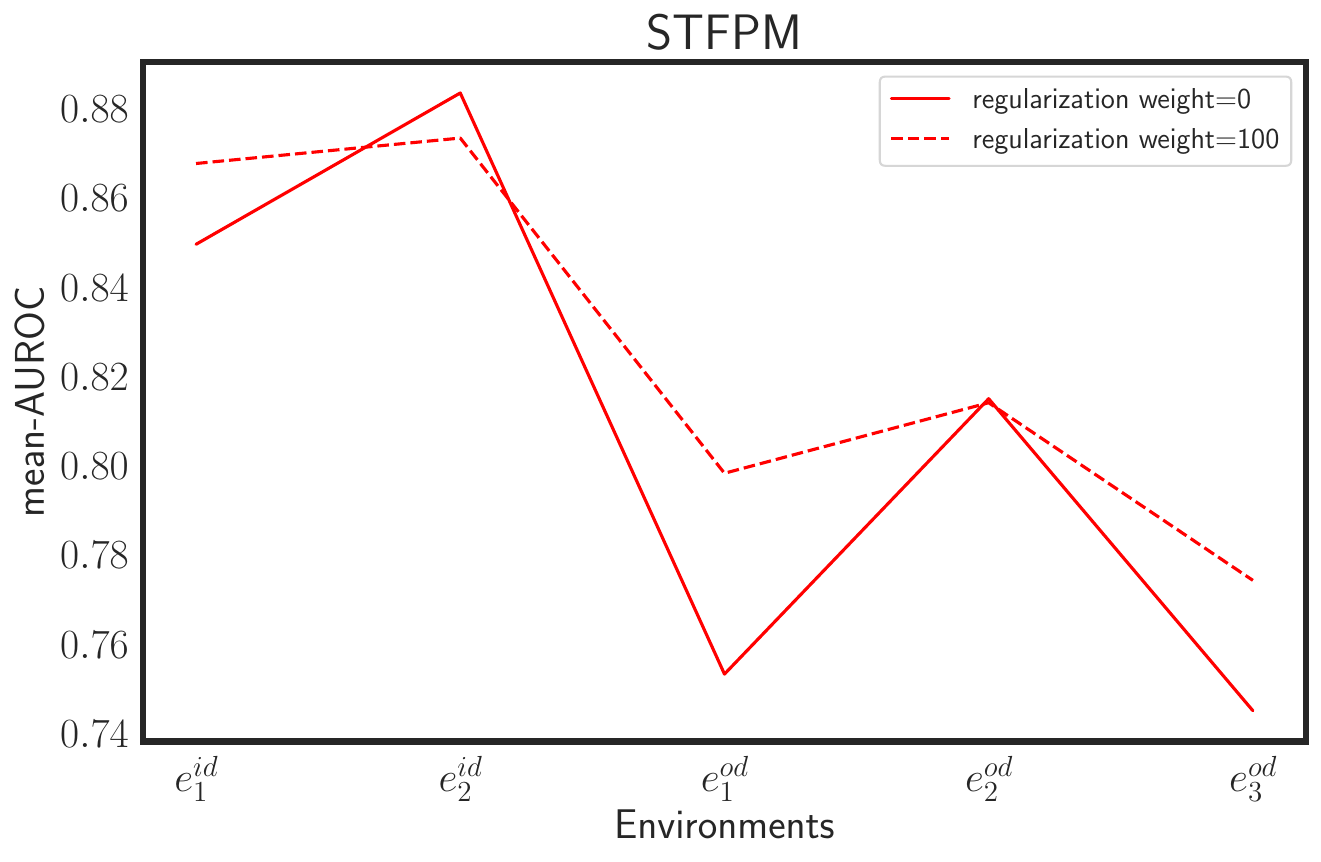}
        \caption{Camelyon17: STFPM}
    \end{subfigure}
    \hfill
    \begin{subfigure}{0.32\textwidth}
        \includegraphics[width=\textwidth]{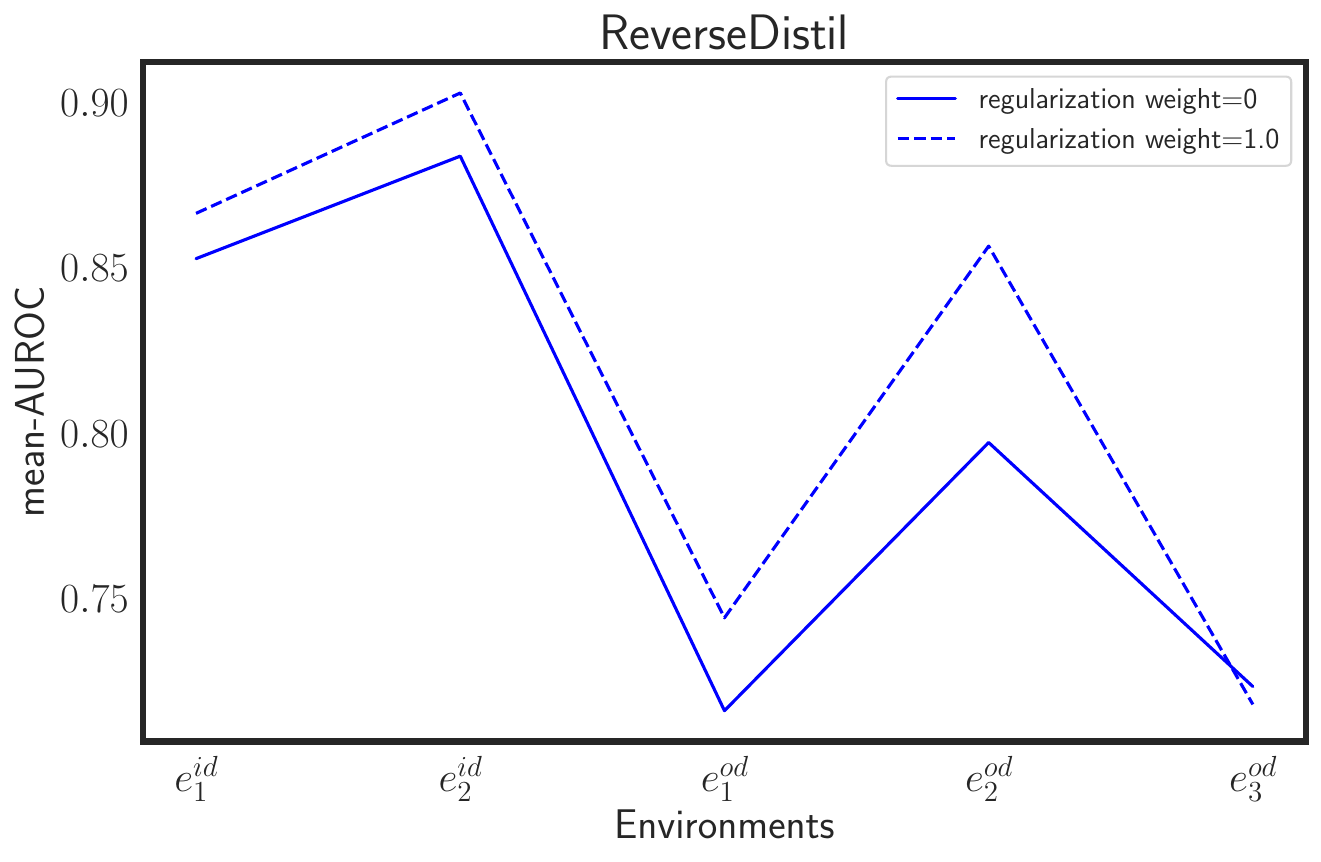}
        \caption{Camelyon17: ReverseDistil}
    \end{subfigure}
    \hfill
    \begin{subfigure}{0.32\textwidth}
        \includegraphics[width=\textwidth]{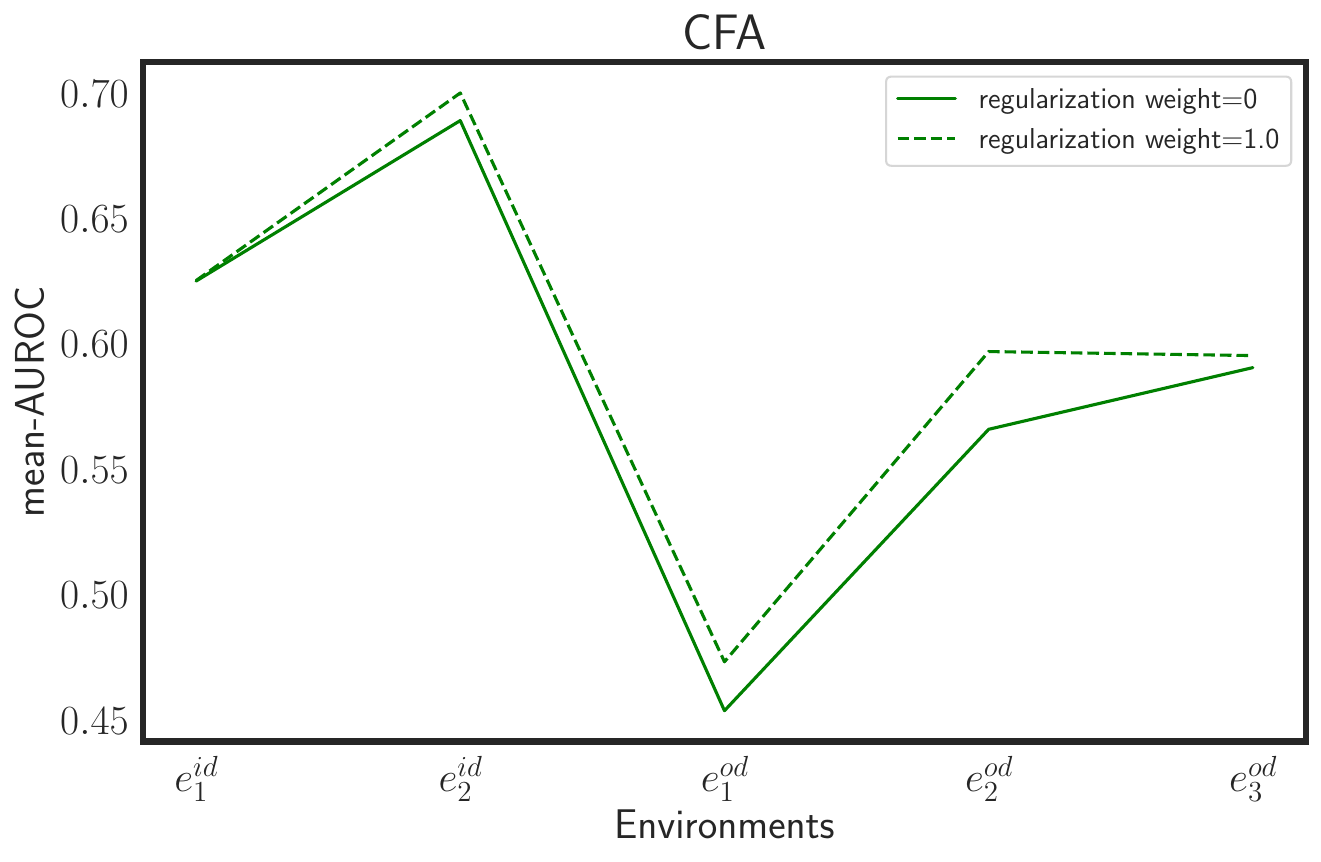}
        \caption{Camelyon17: CFA}
    \end{subfigure}

    \begin{subfigure}{0.32\textwidth}
        \includegraphics[width=\textwidth]{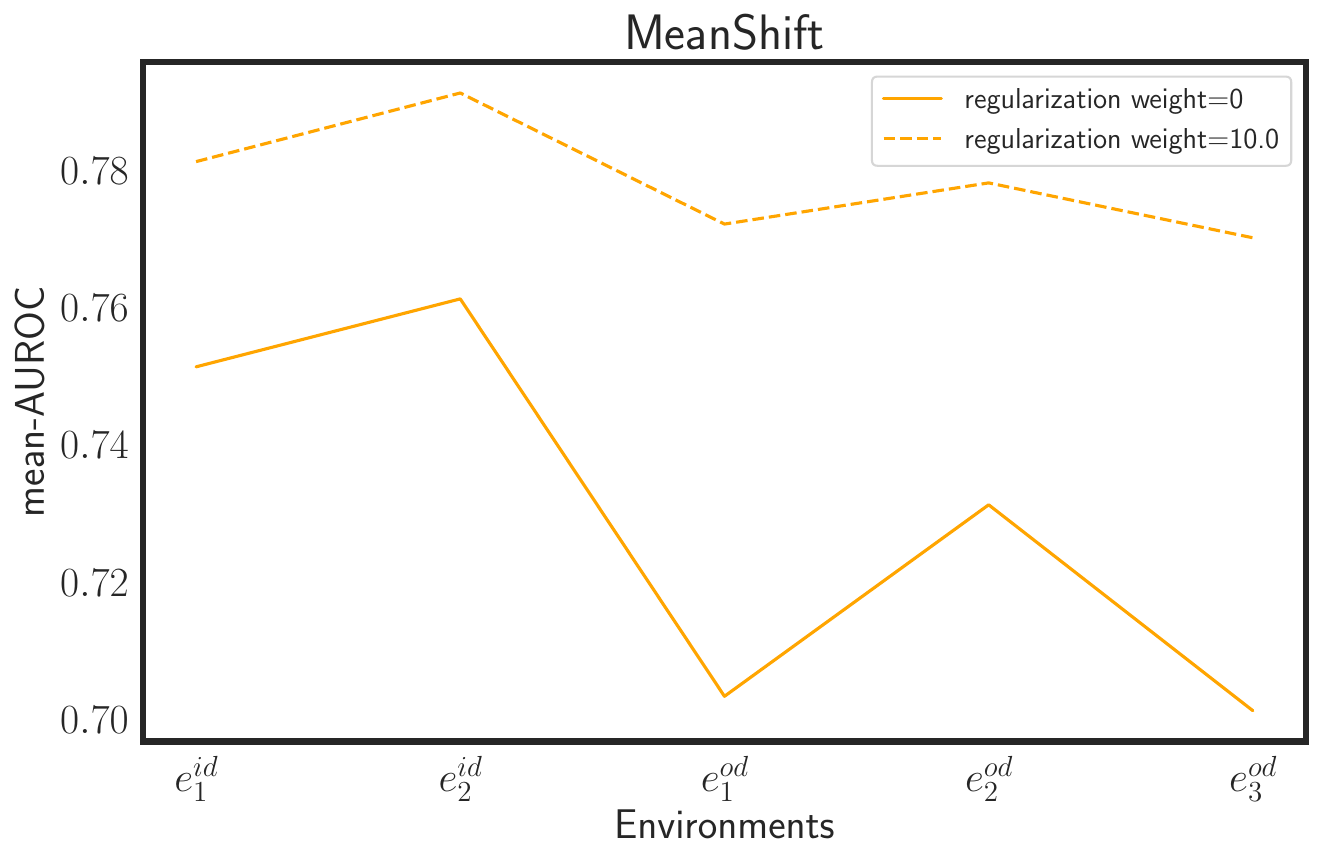}
        \caption{Camelyon17: MeanShift}
    \end{subfigure}
    \hfill
    \begin{subfigure}{0.32\textwidth}
        \includegraphics[width=\textwidth]{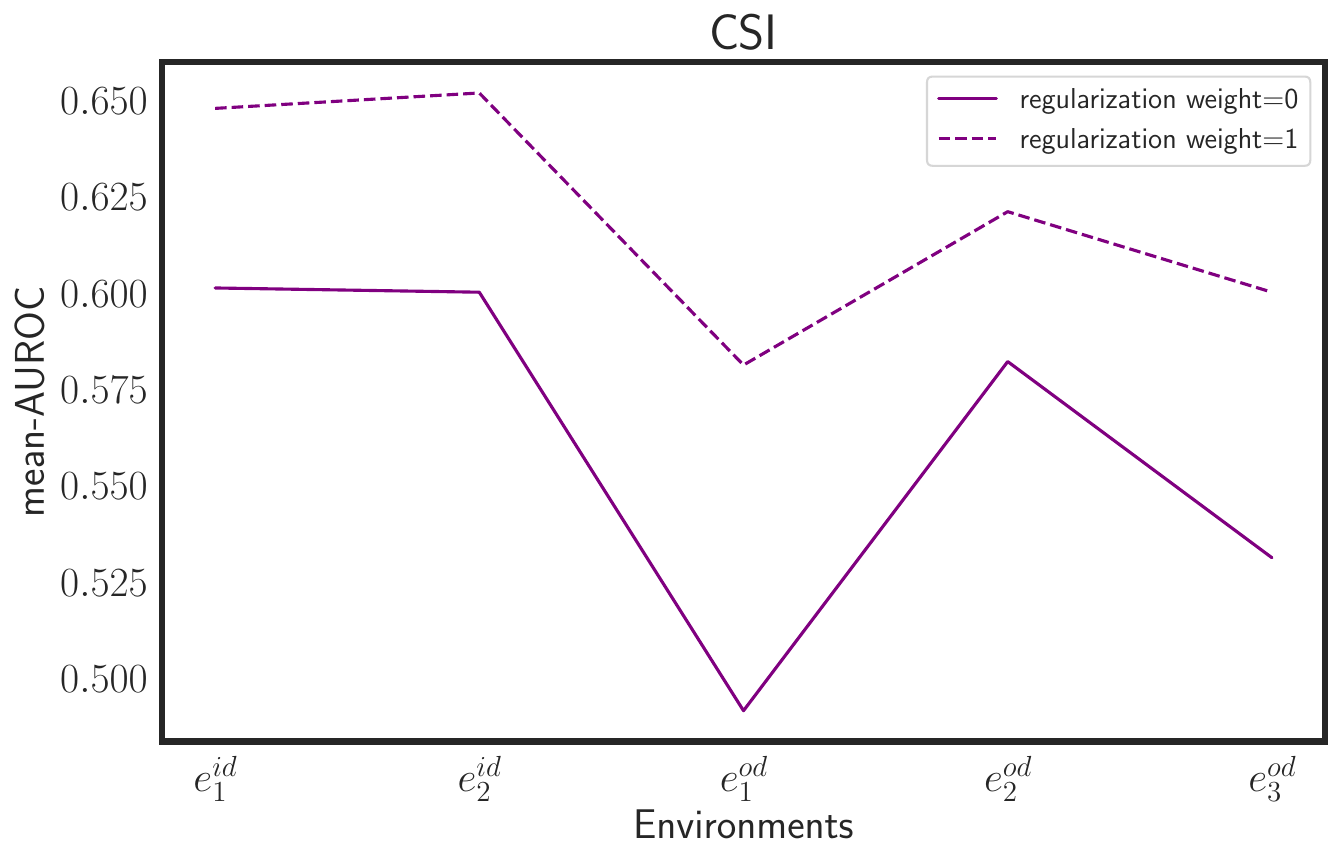}
        \caption{Camelyon17: CSI}
    \end{subfigure}
    \begin{subfigure}{0.32\textwidth}
        \includegraphics[width=\textwidth]{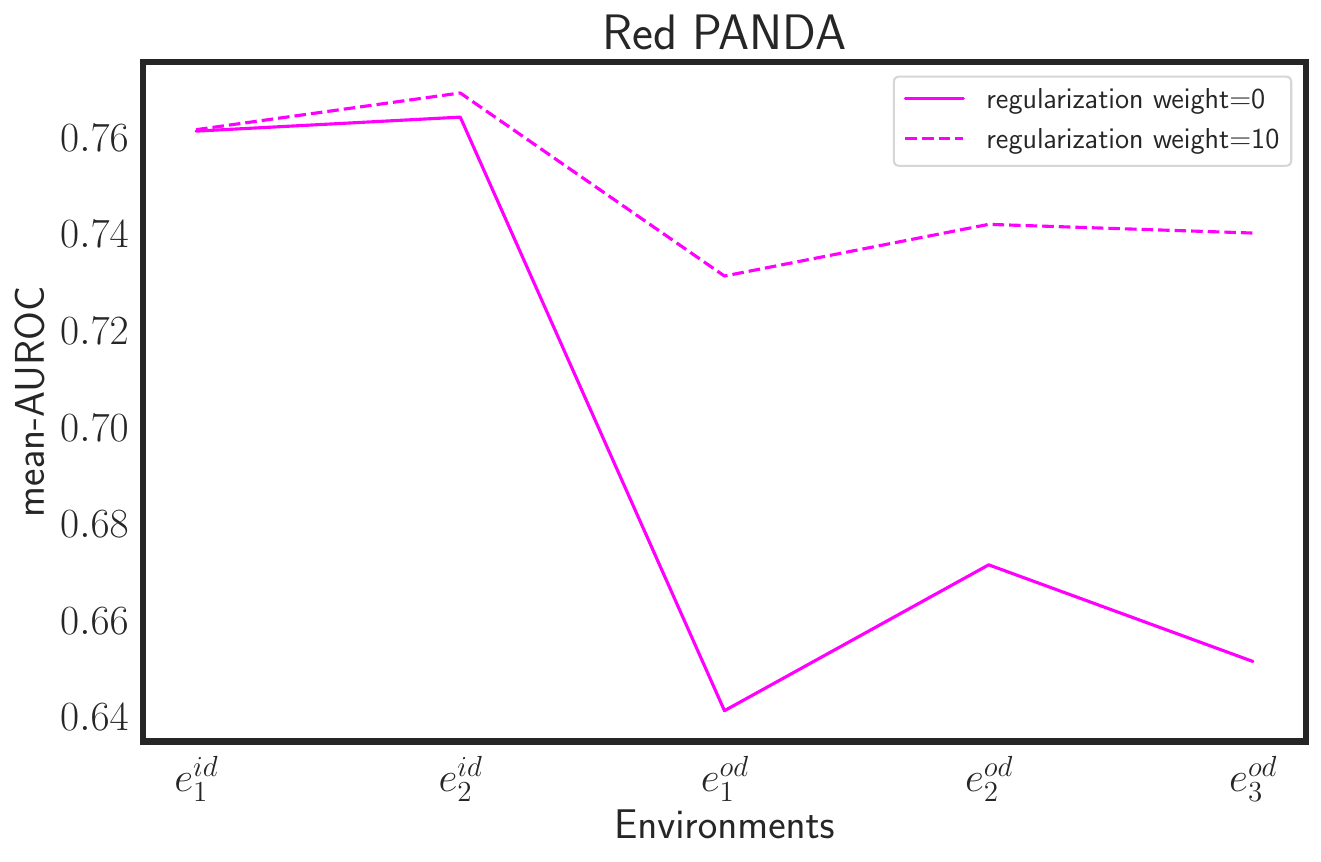}
        \caption{Camelyon17: Red PANDA}
    \end{subfigure}
    
    \caption{Mean-AUROC curve of each anomaly detector and its regularized version in the Camlyon17 dataset.}
    \label{fig:compare_baseline_camelyon}
\end{figure}

\begin{figure}[h]
    \centering
    \begin{subfigure}{0.32\textwidth}
        \includegraphics[width=\textwidth]{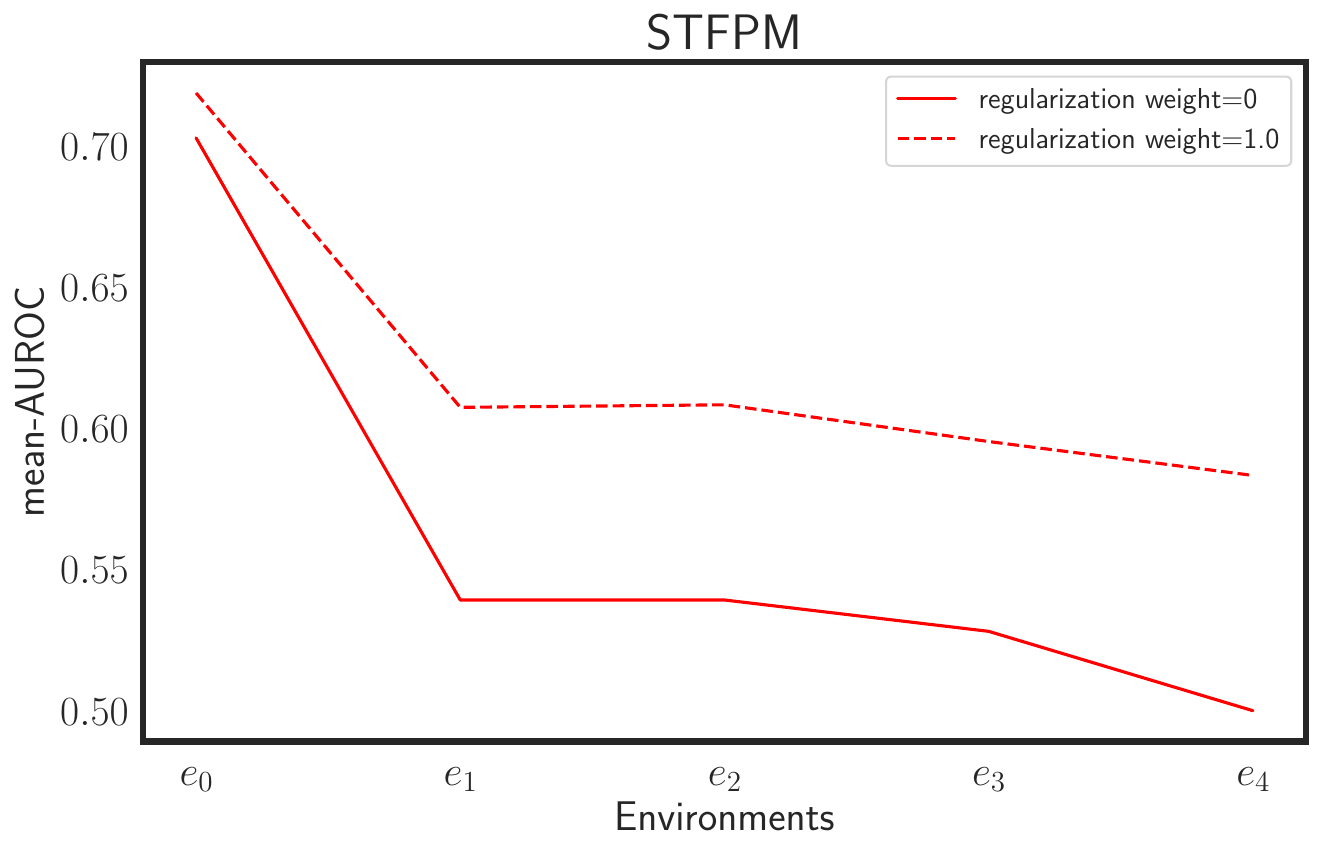}
        \caption{MNIST: STFPM}
    \end{subfigure}
    \hfill
    \begin{subfigure}{0.32\textwidth}
        \includegraphics[width=\textwidth]{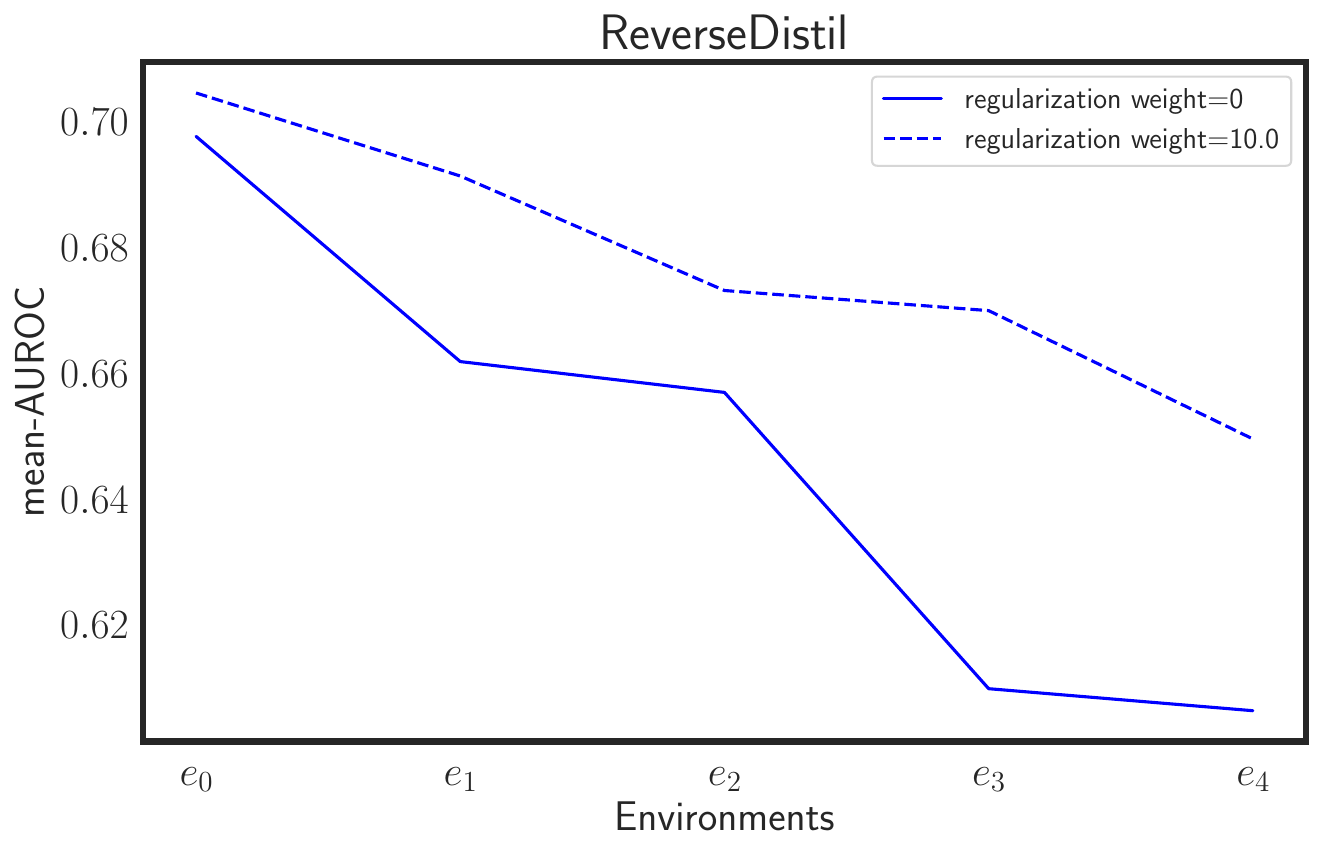}
        \caption{MNIST: ReverseDistil}
    \end{subfigure}
    \hfill
    \begin{subfigure}{0.32\textwidth}
        \includegraphics[width=\textwidth]{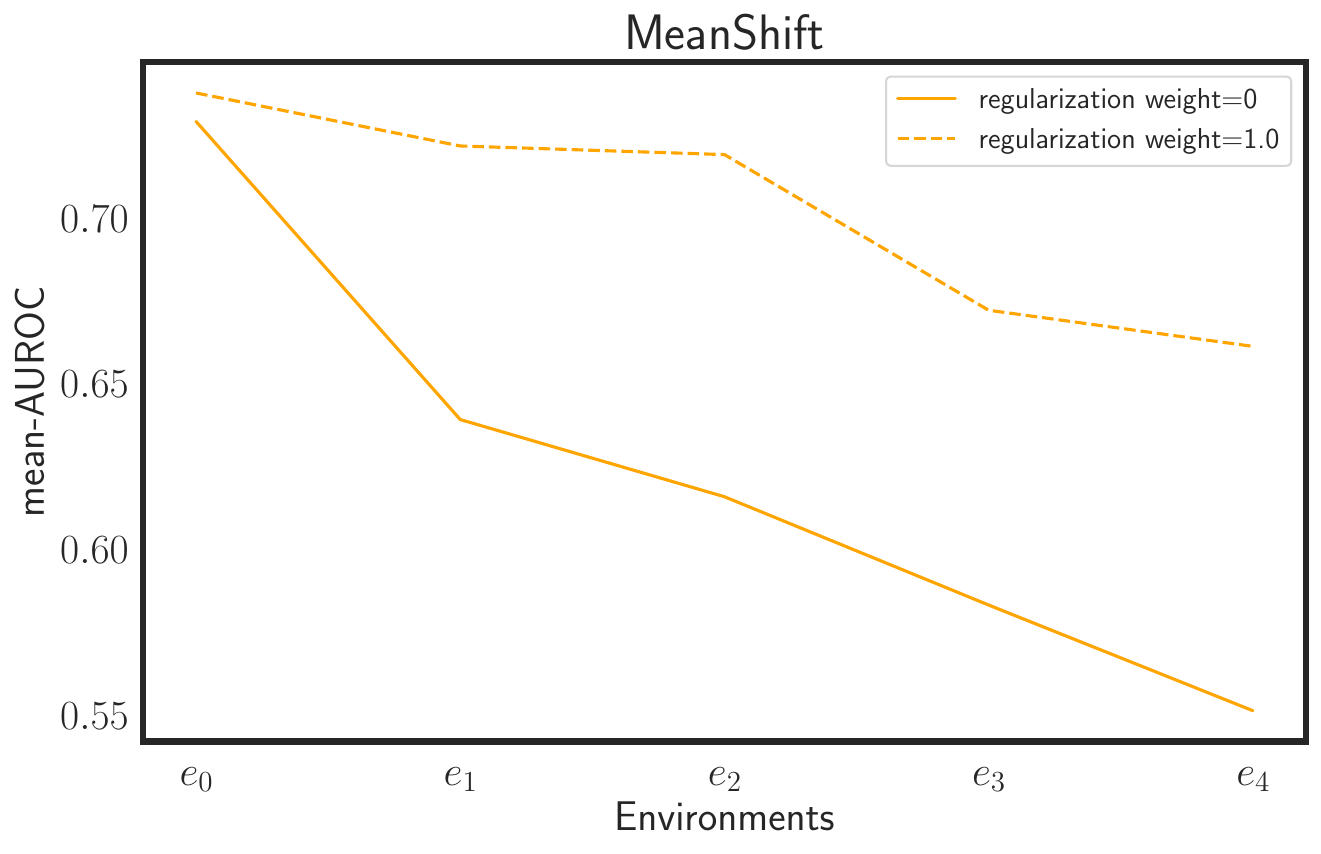}
        \caption{MNIST: MeanShift}
    \end{subfigure}
    
    \begin{subfigure}{0.32\textwidth}
        \includegraphics[width=\textwidth]{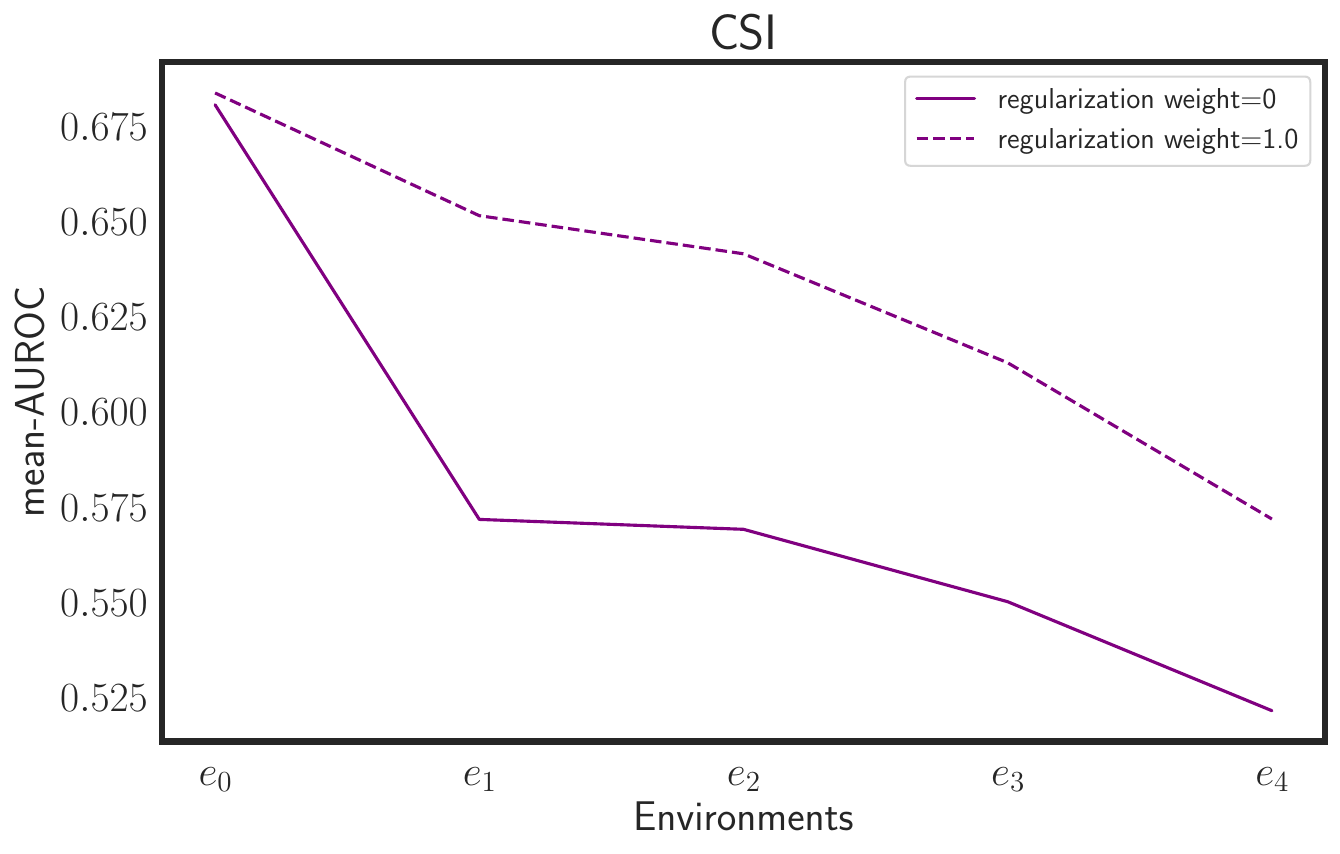}
        \caption{MNIST: CSI}
    \end{subfigure}
    \begin{subfigure}{0.32\textwidth}
        \includegraphics[width=\textwidth]{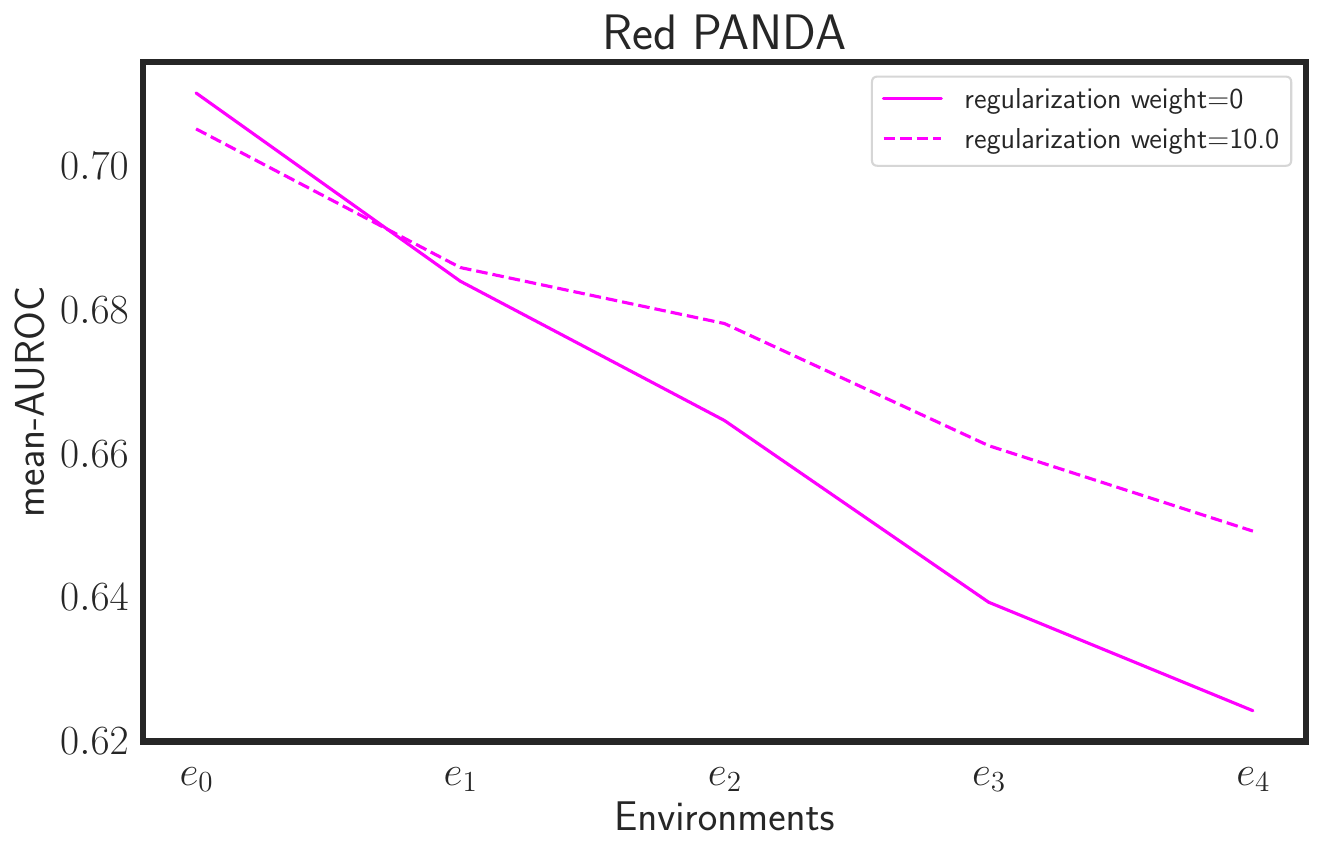}
        \caption{MNIST: Red PANDA}
    \end{subfigure}
    
    \caption{Mean-AUROC curve of each anomaly detector and its regularized version in the MNIST dataset.}
    \label{fig:compare_baseline_mnist}
\end{figure}

\begin{figure}[h]
    \centering
    \begin{subfigure}{0.32\textwidth}
        \includegraphics[width=\textwidth]{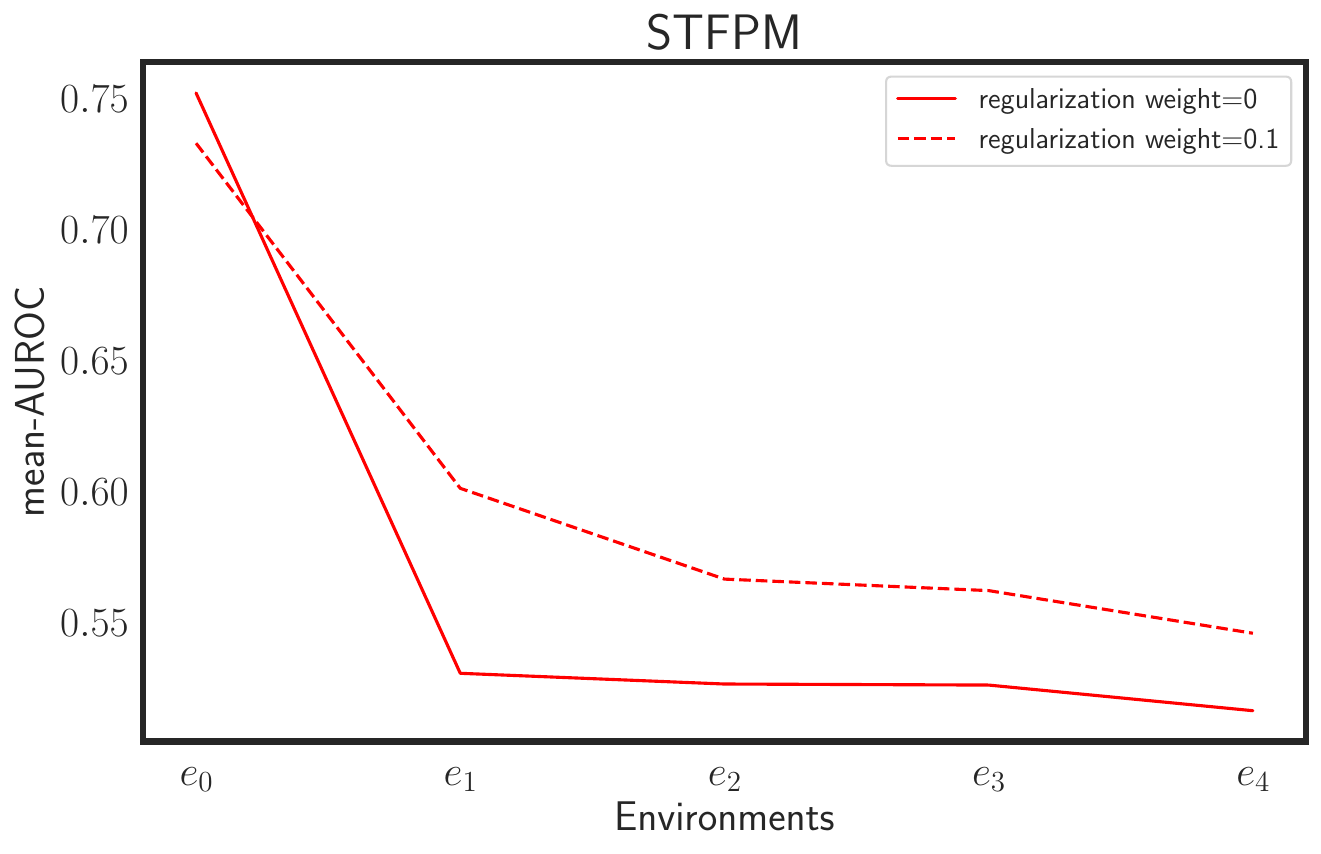}
        \caption{Fashion-MNIST: STFPM}
    \end{subfigure}
    \hfill
    \begin{subfigure}{0.32\textwidth}
        \includegraphics[width=\textwidth]{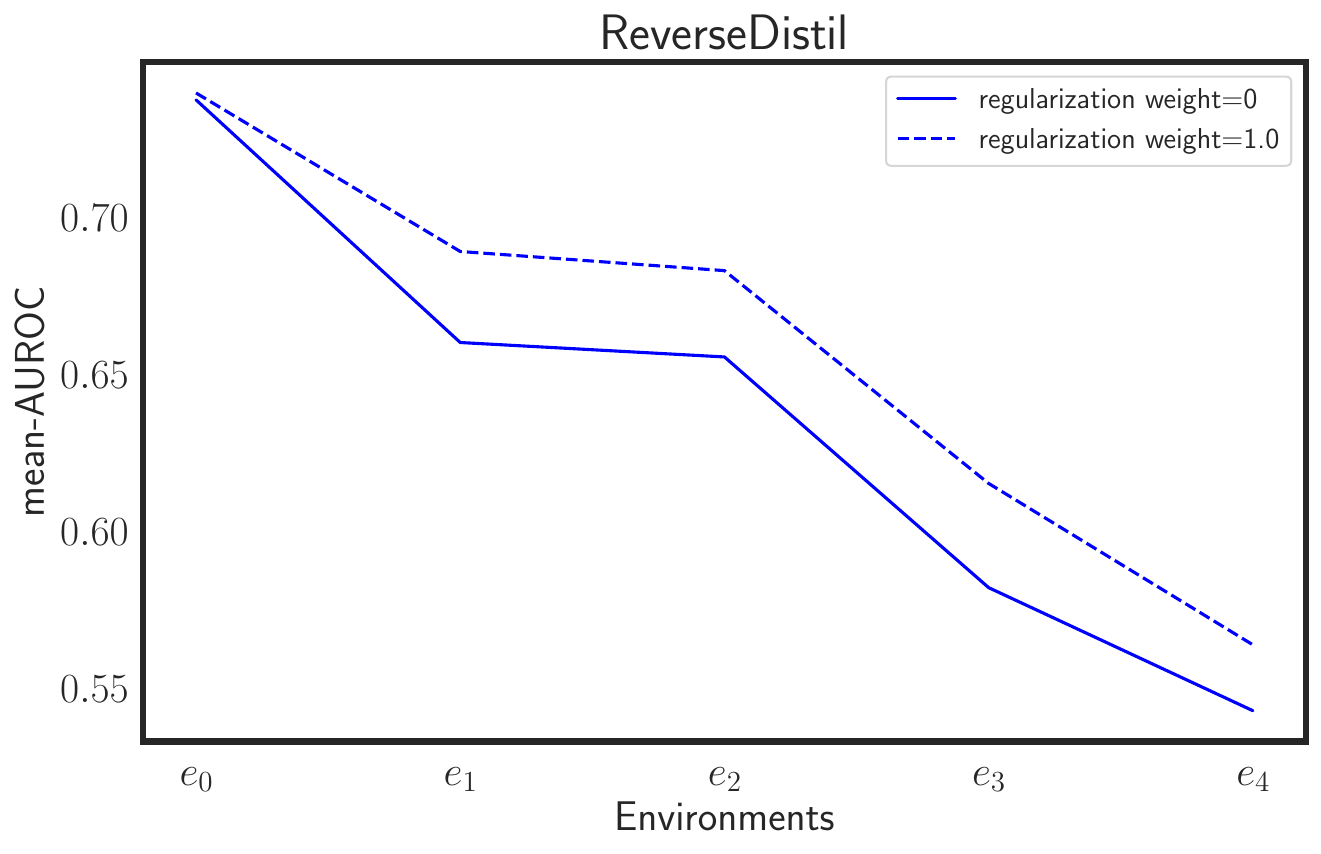}
        \caption{Fashion-MNIST: ReverseDistil}
    \end{subfigure}
    \hfill
    \begin{subfigure}{0.32\textwidth}
        \includegraphics[width=\textwidth]{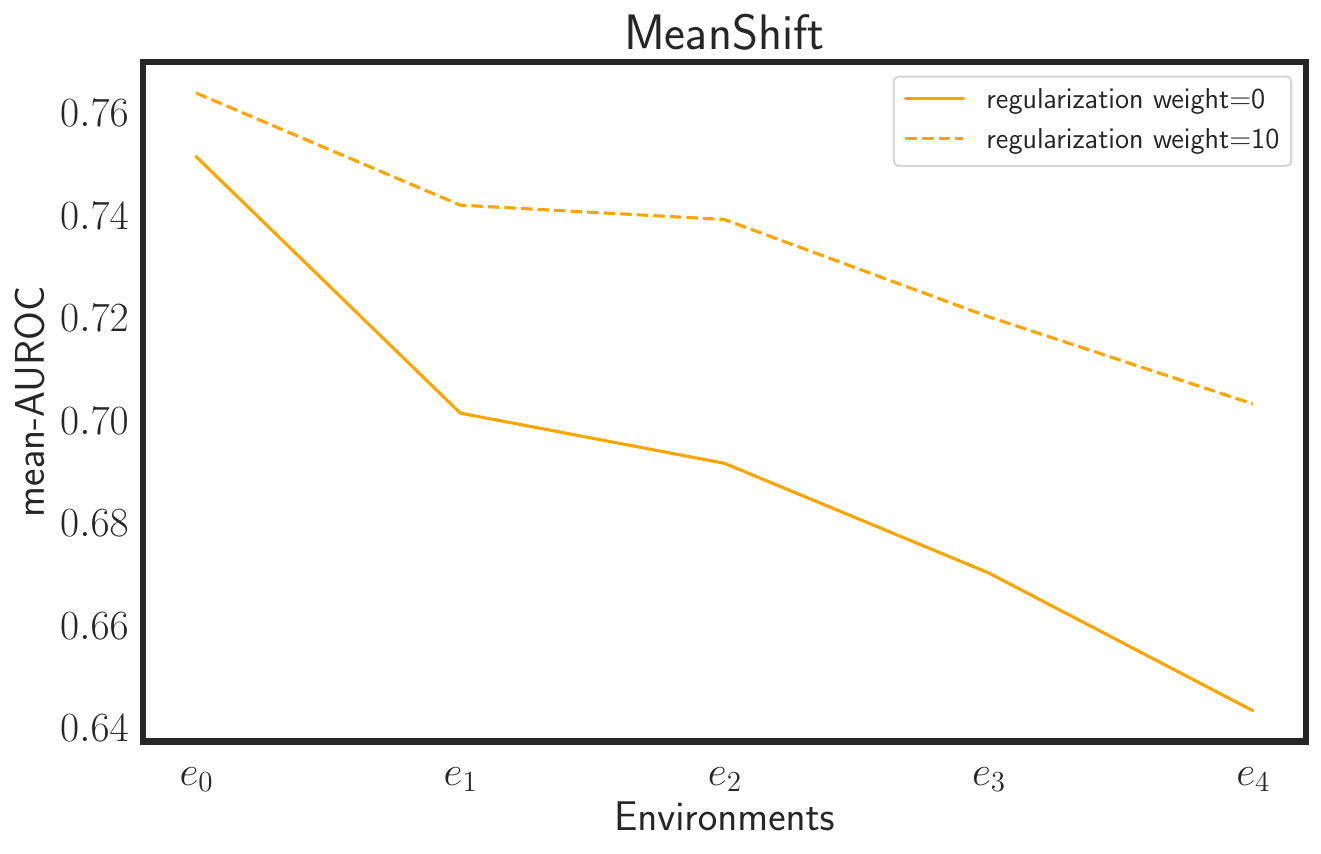}
        \caption{Fashion-MNIST: MeanShift}
    \end{subfigure}
    
    \begin{subfigure}{0.32\textwidth}
        \includegraphics[width=\textwidth]{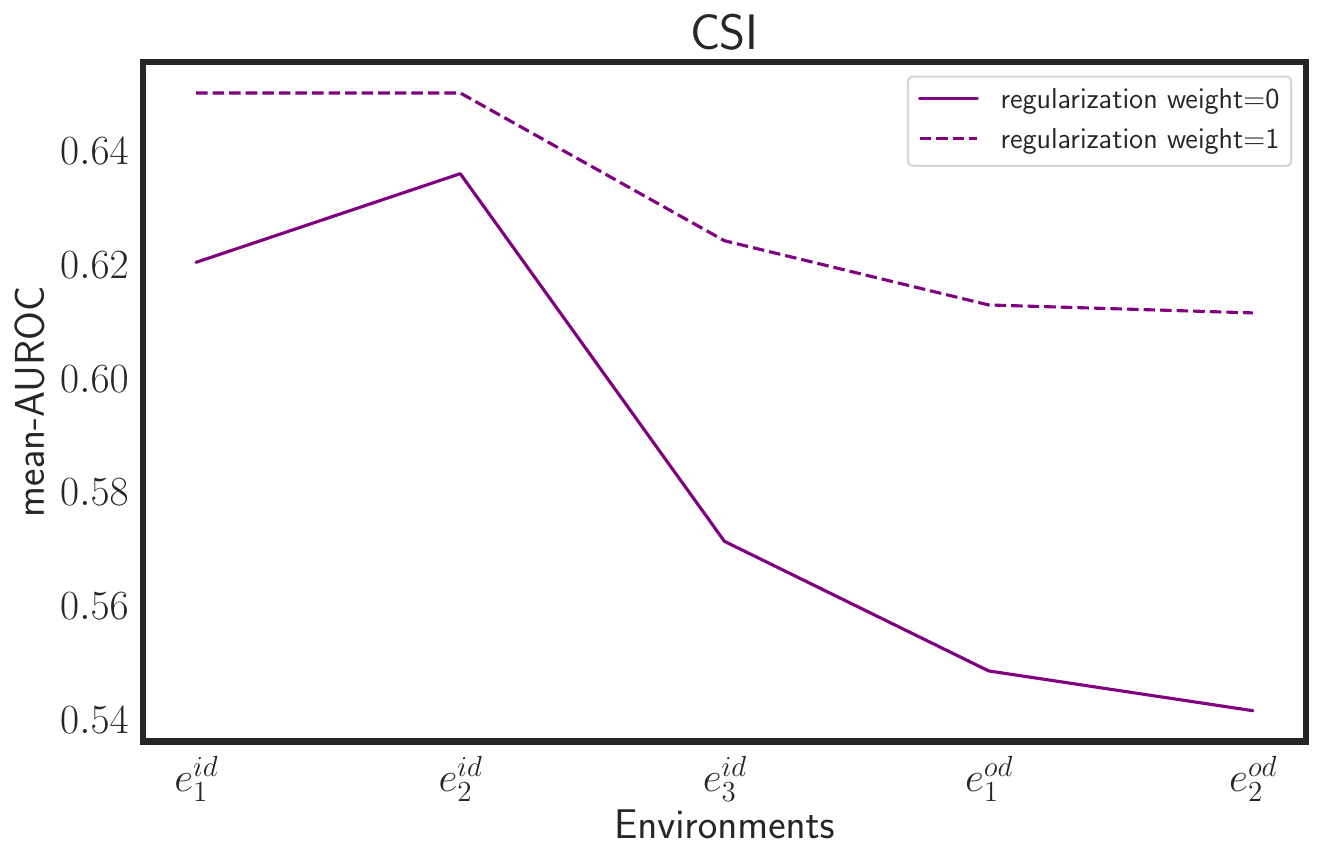}
        \caption{Fashion-MNIST: CSI}
    \end{subfigure}
    \begin{subfigure}{0.32\textwidth}
        \includegraphics[width=\textwidth]{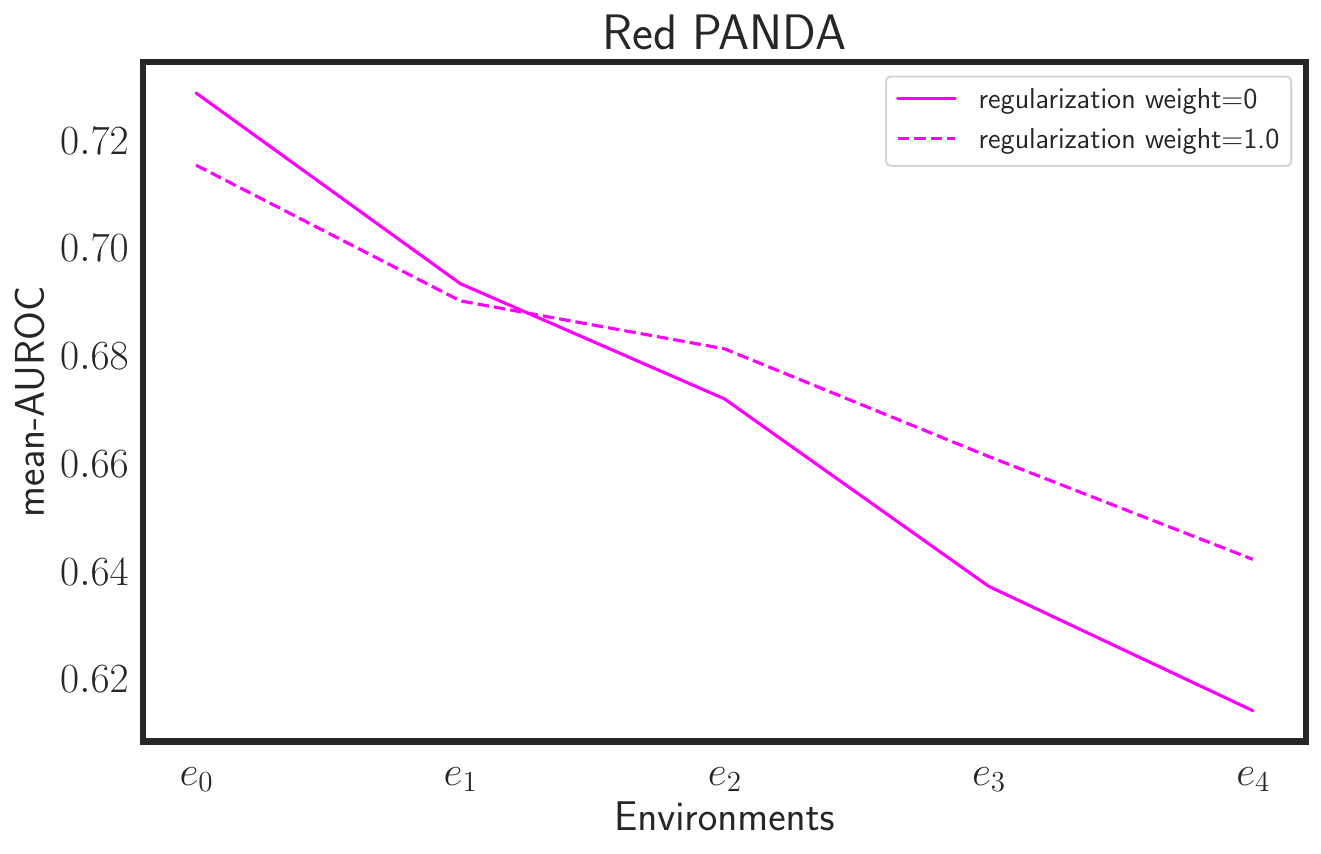}
        \caption{Fashion-MNIST: Red PANDA}
    \end{subfigure}
    
    \caption{Mean-AUROC curve of each anomaly detector and its regularized version in the Fashion-MNIST dataset.}
    \label{fig:compare_baseline_fmnist}
\end{figure}

\section{Ablation Studies}
As part of our comprehensive examination of how covariate shifts influence individual regularization weights, we have plotted the performance trajectories of all evaluated models, traversing environments from$e_0$ to $e_4$. These are captured in Fig.\ref{fig:ablation_mnist_single} and Fig.\ref{fig:ablation_fashion_single}. 

As expected, and already observed in both our main findings and previous work (\cite{ming2022impact}), a review of these plots unveils a trend that permeates across all examined models: the performance of models appears to inversely correlate with the number of induced covariate shifts. As the complexity introduced by these shifts mounts, the models' performance experiences a proportionate and systematic decline. This observable trend is essentially monotonic, signifying a erosion in model performance with each incremental rise in the quantity of covariate shifts.

However, it is important to note that this trend is not without exceptions. In particular, when scrutinizing the data pertaining to environment $e_4$, we can observe anomalies to this downward trend. In these exceptional instances, despite the increase in the number of covariate shifts, the performance of certain models appears to resist the general declining pattern.

Thus, our overall conclusion, while acknowledging these exceptions, is that the prevalence of covariate shifts largely contributes to a degradation in model performance. 

It is however important to note that the unregularized methods still underperforms when compared to the same method under a even small amount (0.001) of partially conditional regularization added.

\begin{figure}[h]
    \centering
    
    \begin{subfigure}{0.32\textwidth}
        \includegraphics[width=\textwidth]{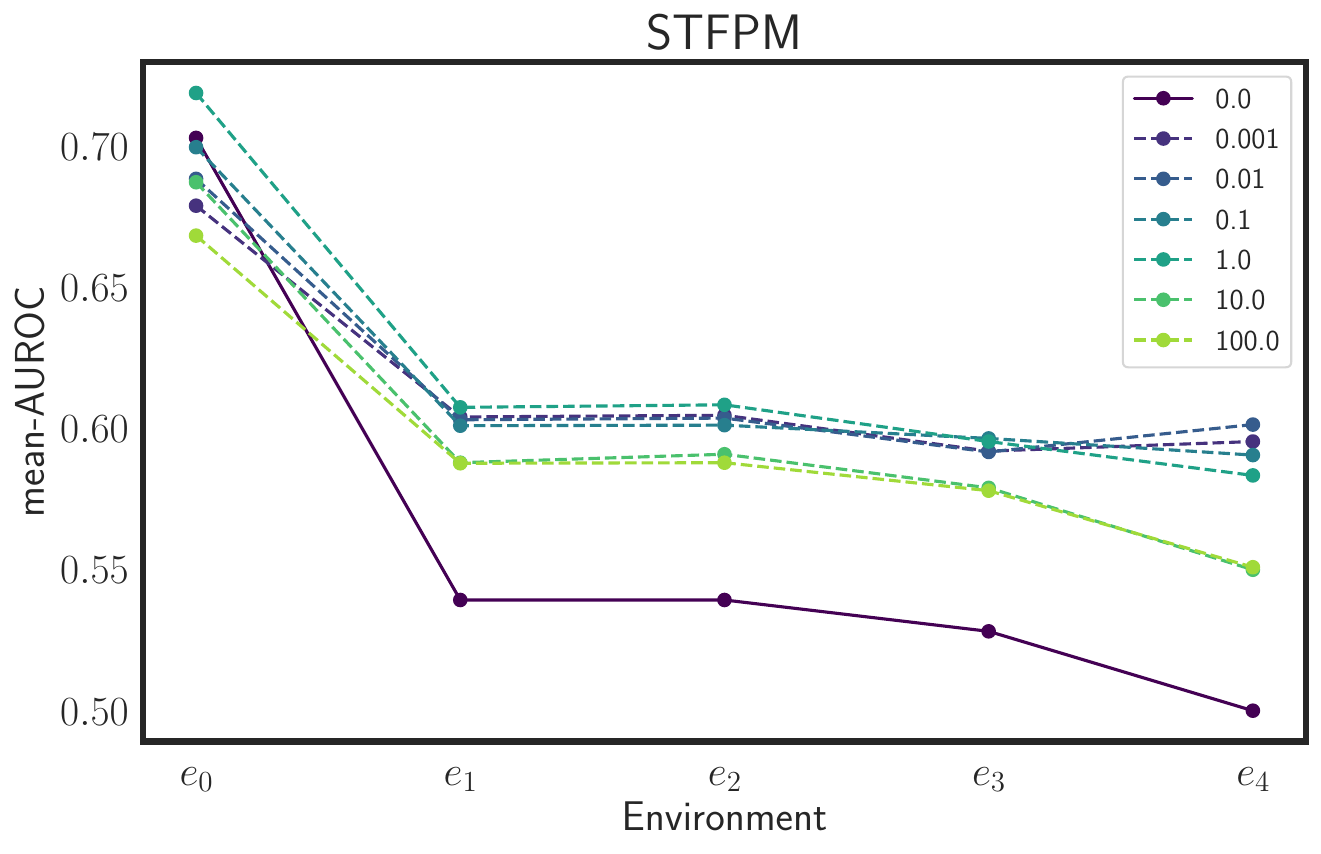}
        \caption{MNIST: STFPM}
    \end{subfigure}
    \hfill
    \begin{subfigure}{0.32\textwidth}
        \includegraphics[width=\textwidth]{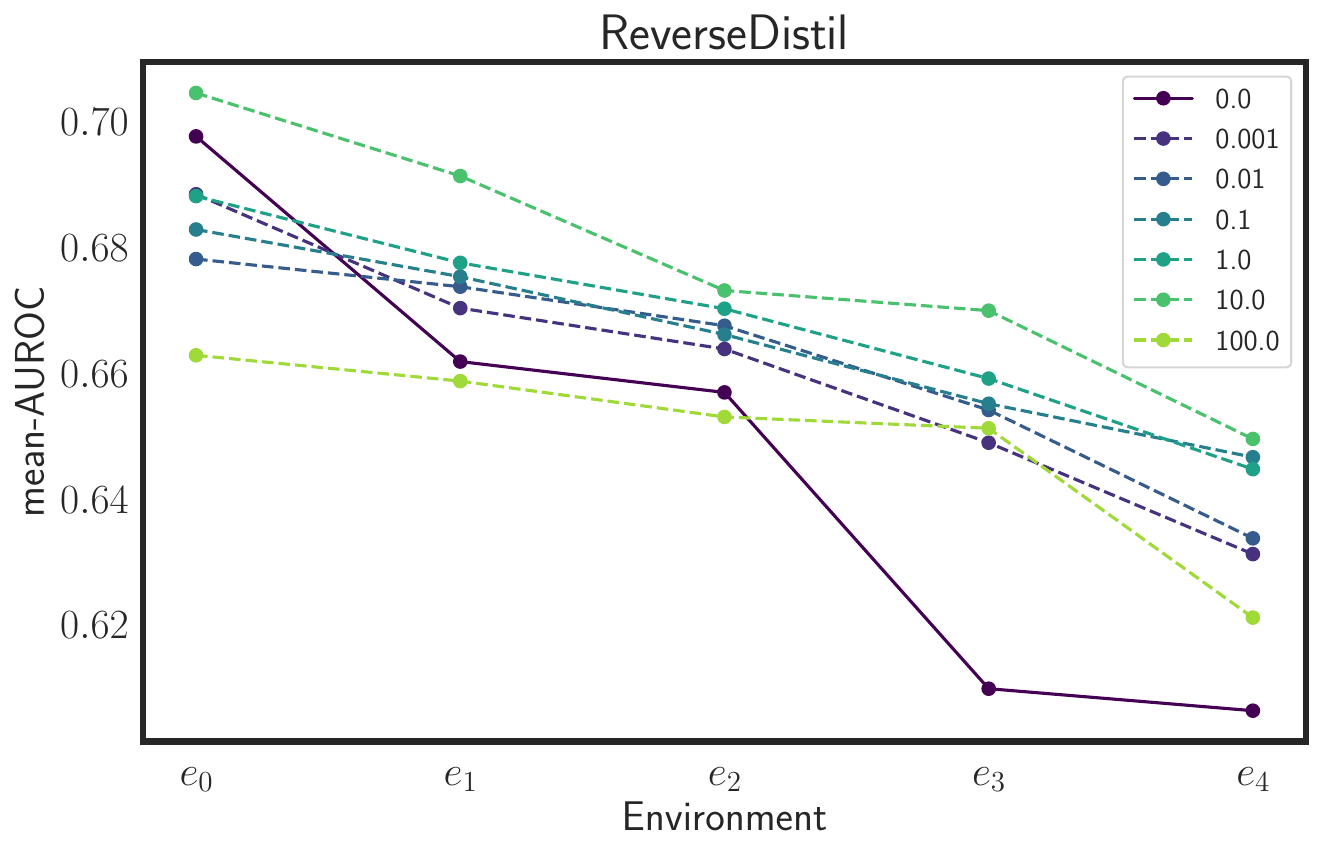}
        \caption{MNIST: ReverseDistil}
    \end{subfigure}
    \hfill
    \begin{subfigure}{0.32\textwidth}
        \includegraphics[width=\textwidth]{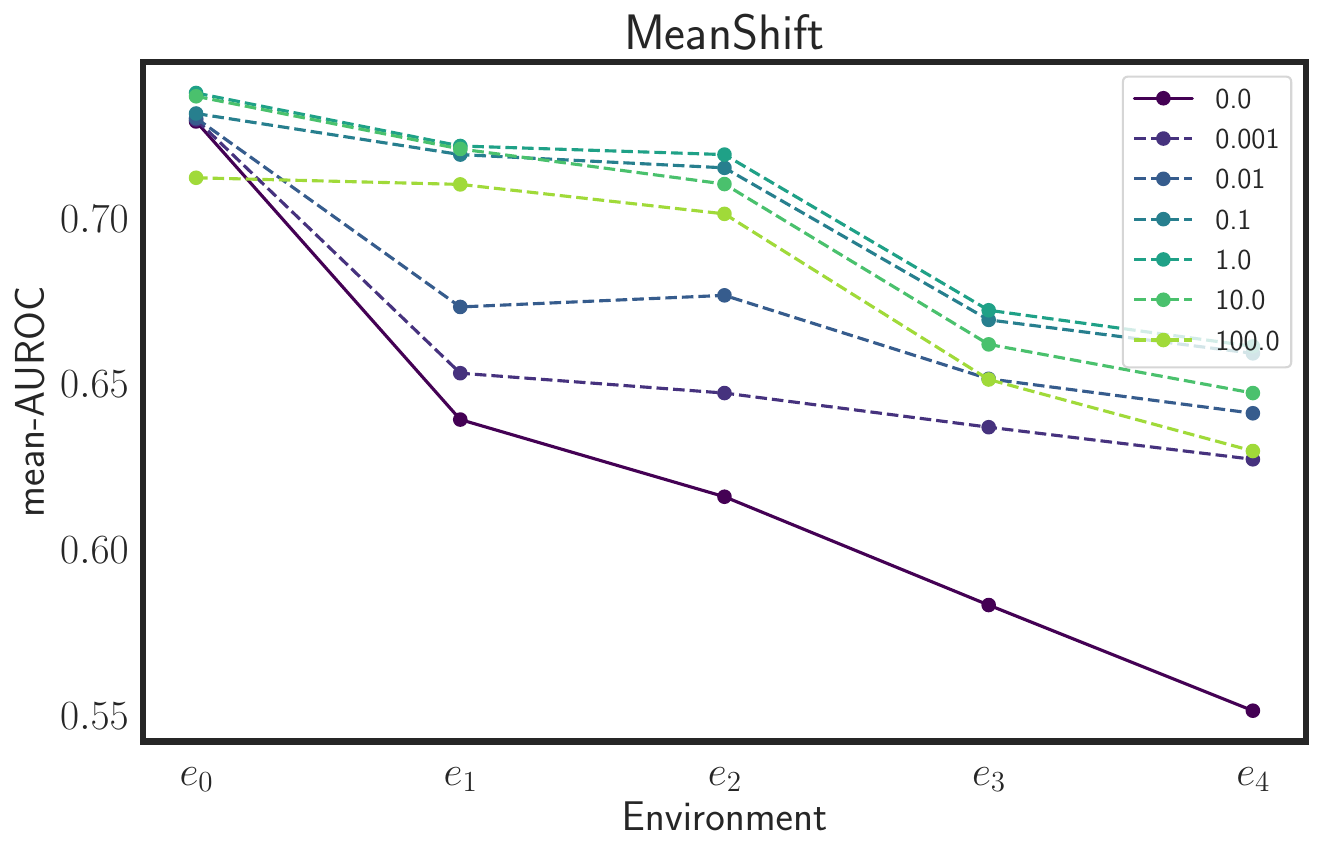}
        \caption{MNIST: MeanShift}
    \end{subfigure}
    
    \begin{subfigure}{0.32\textwidth}
        \includegraphics[width=\textwidth]{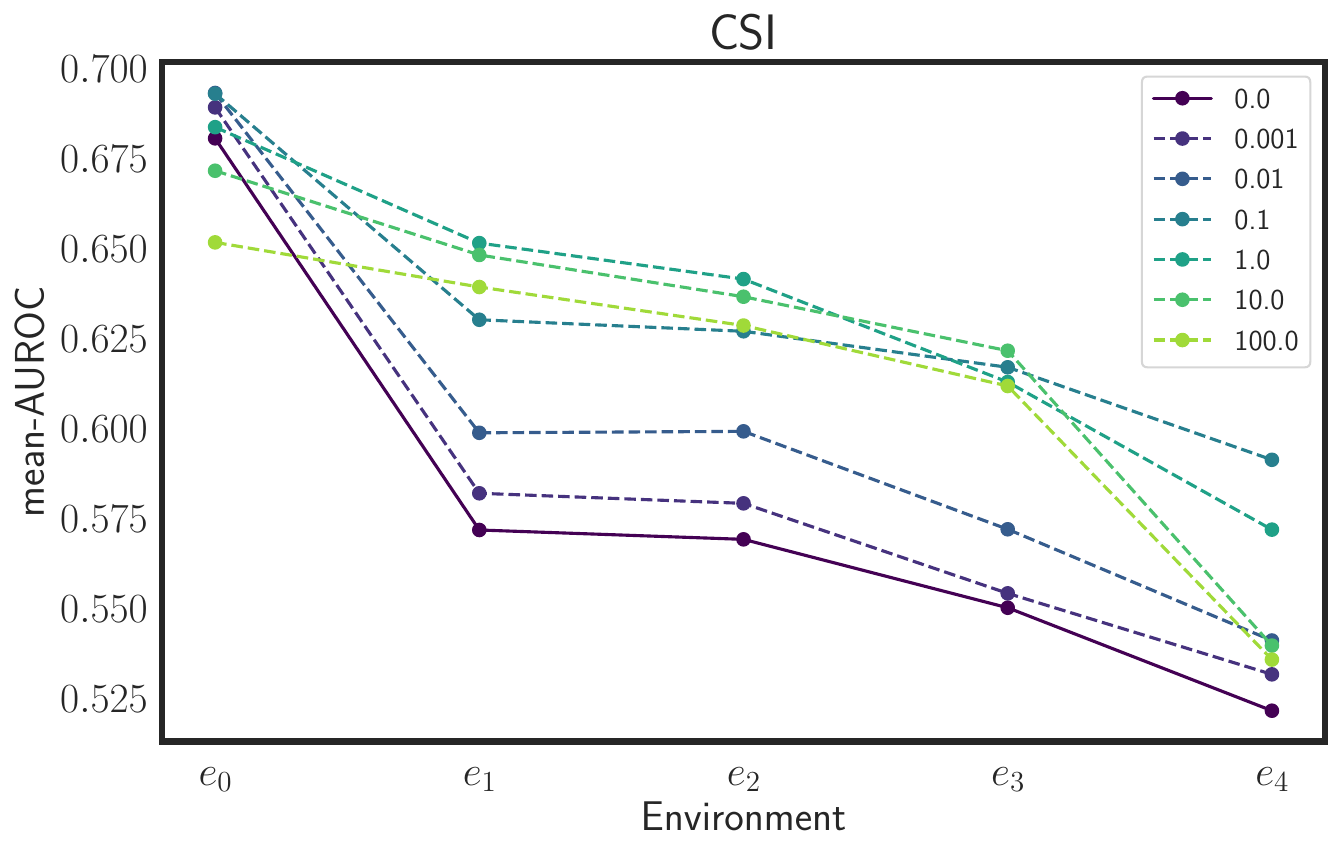}
        \caption{MNIST: CSI}
    \end{subfigure}
    \begin{subfigure}{0.32\textwidth}
        \includegraphics[width=\textwidth]{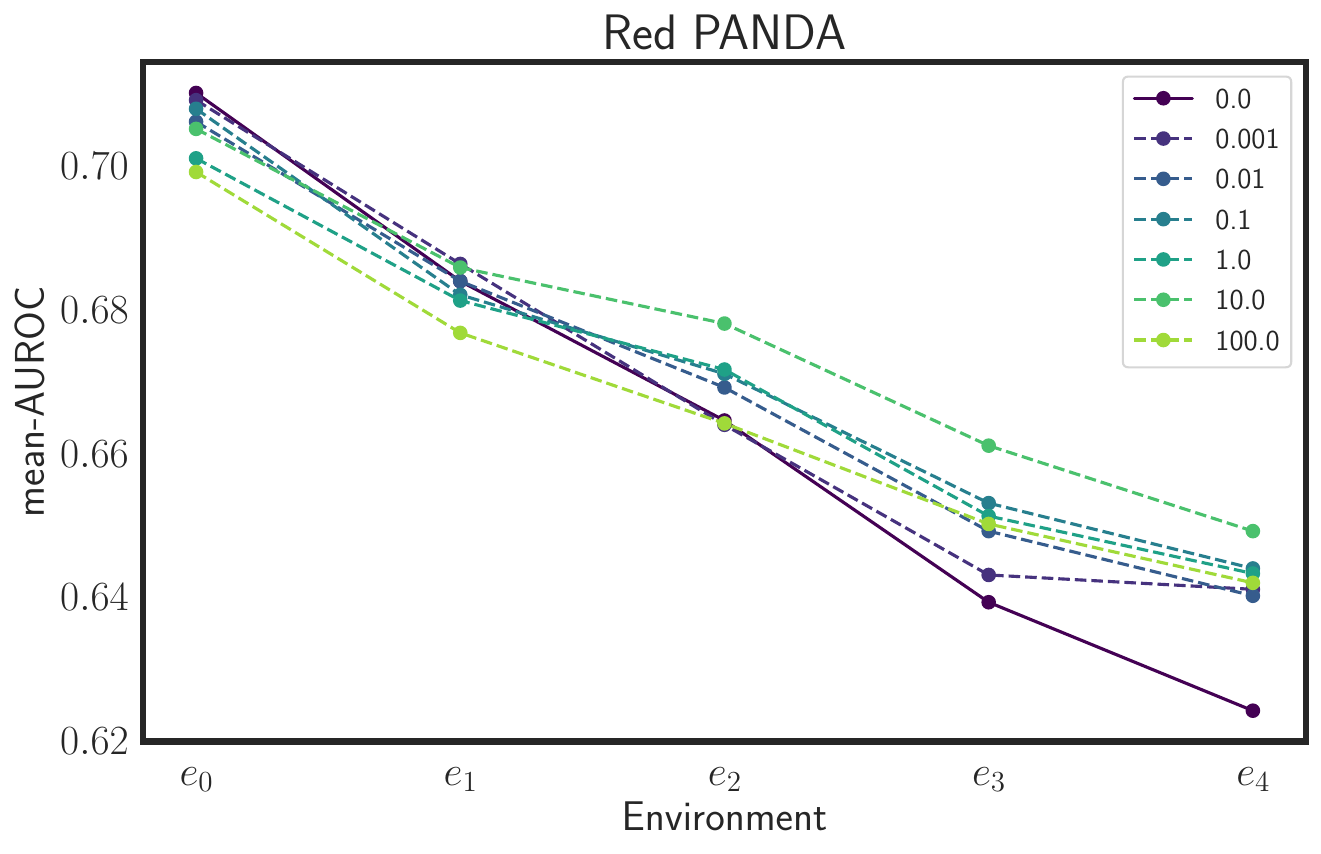}
        \caption{MNIST:Red PANDA}
    \end{subfigure}
    
    \caption{Ablation study over the weight of the regularization term for MNIST under distribution shift, with separate plots for each model.}
    \label{fig:ablation_mnist_single}
\end{figure}

\begin{figure}[ht]
    \centering
    
    \begin{subfigure}{0.32\textwidth}
        \includegraphics[width=\textwidth]{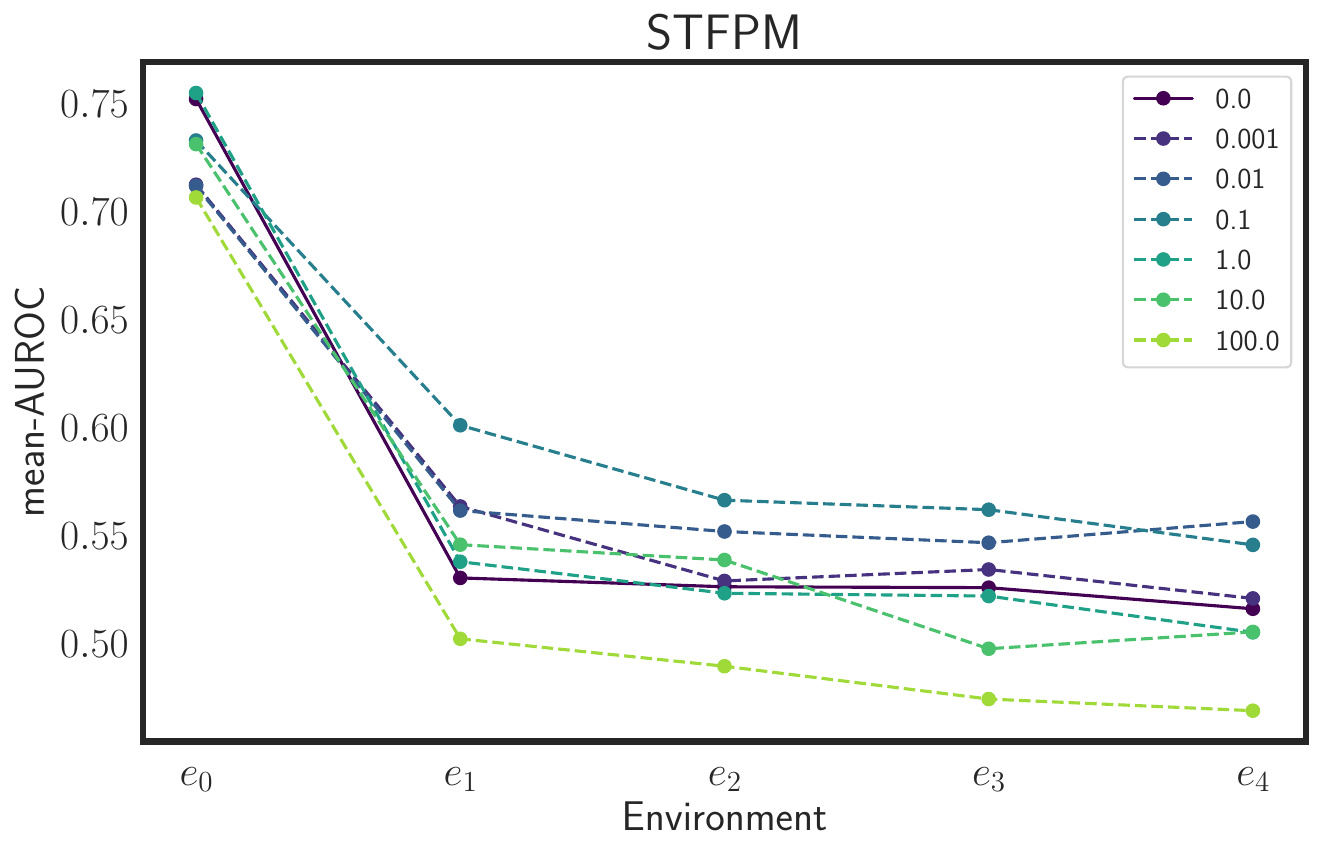}
        \caption{Fashion-MNIST: STFPM}
    \end{subfigure}
    \hfill
    \begin{subfigure}{0.32\textwidth}
        \includegraphics[width=\textwidth]{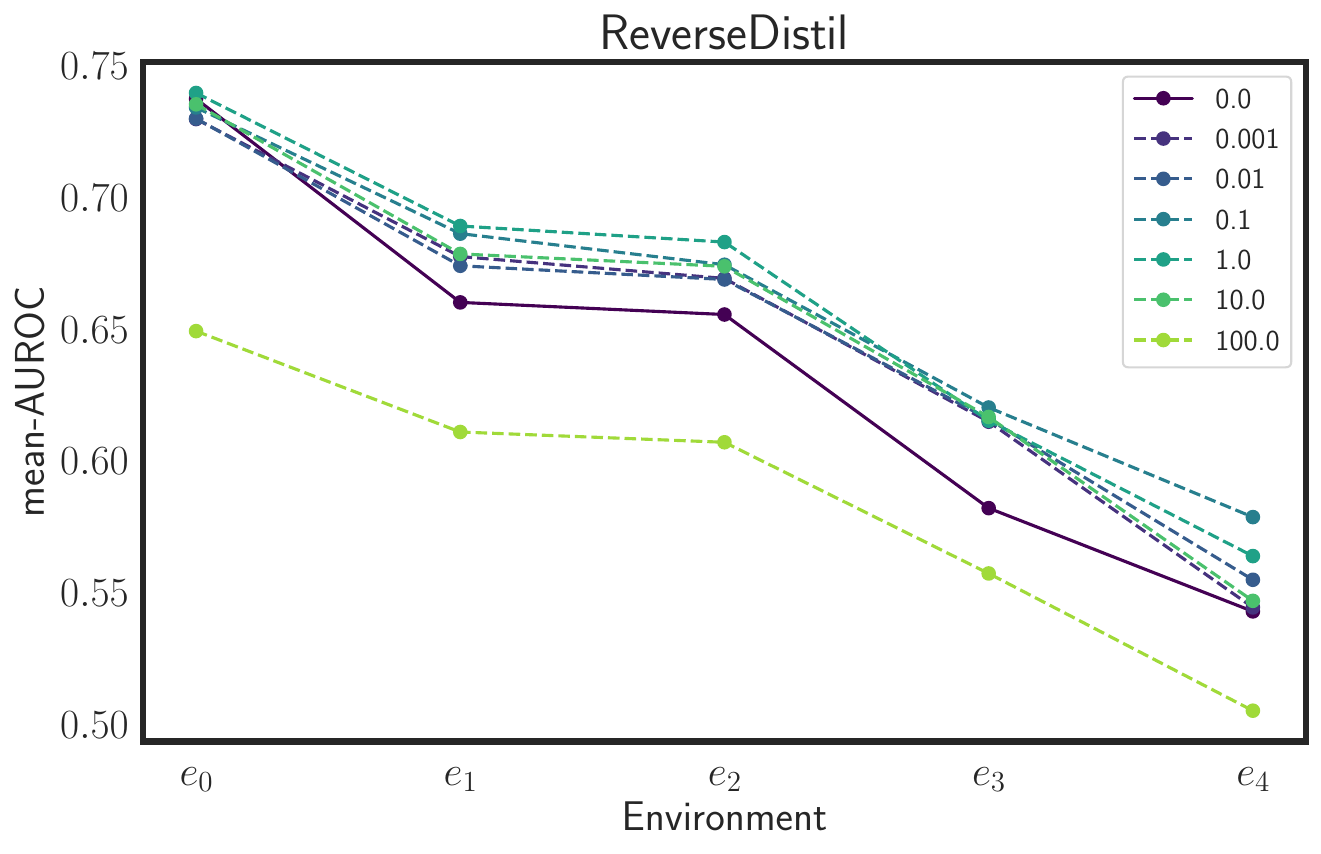}
        \caption{Fashion-MNIST: Reverse Distil}
    \end{subfigure}
    \hfill
    \begin{subfigure}{0.32\textwidth}
        \includegraphics[width=\textwidth]{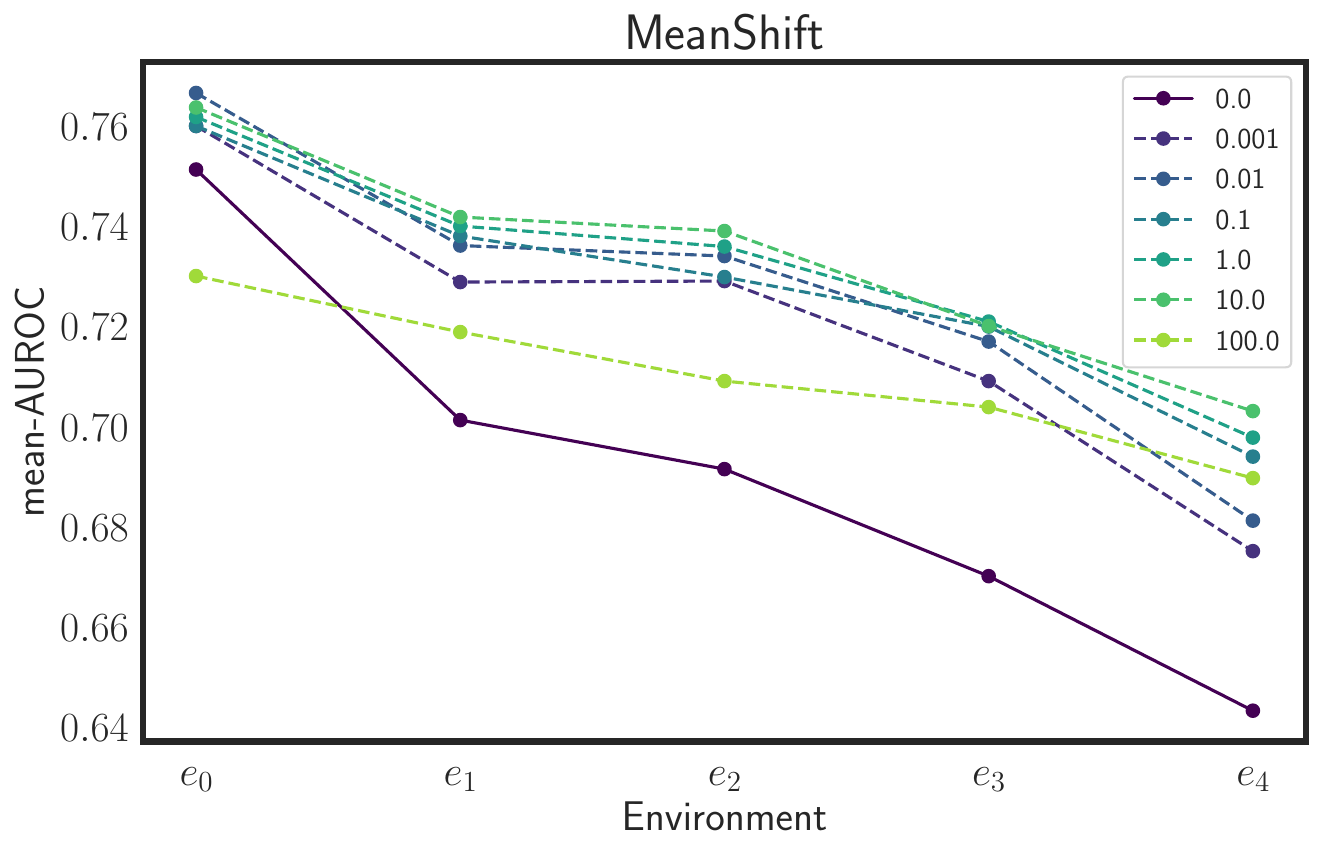}
        \caption{Fashion-MNIST: MeanShift}
    \end{subfigure}
    
    \begin{subfigure}{0.32\textwidth}
        \includegraphics[width=\textwidth]{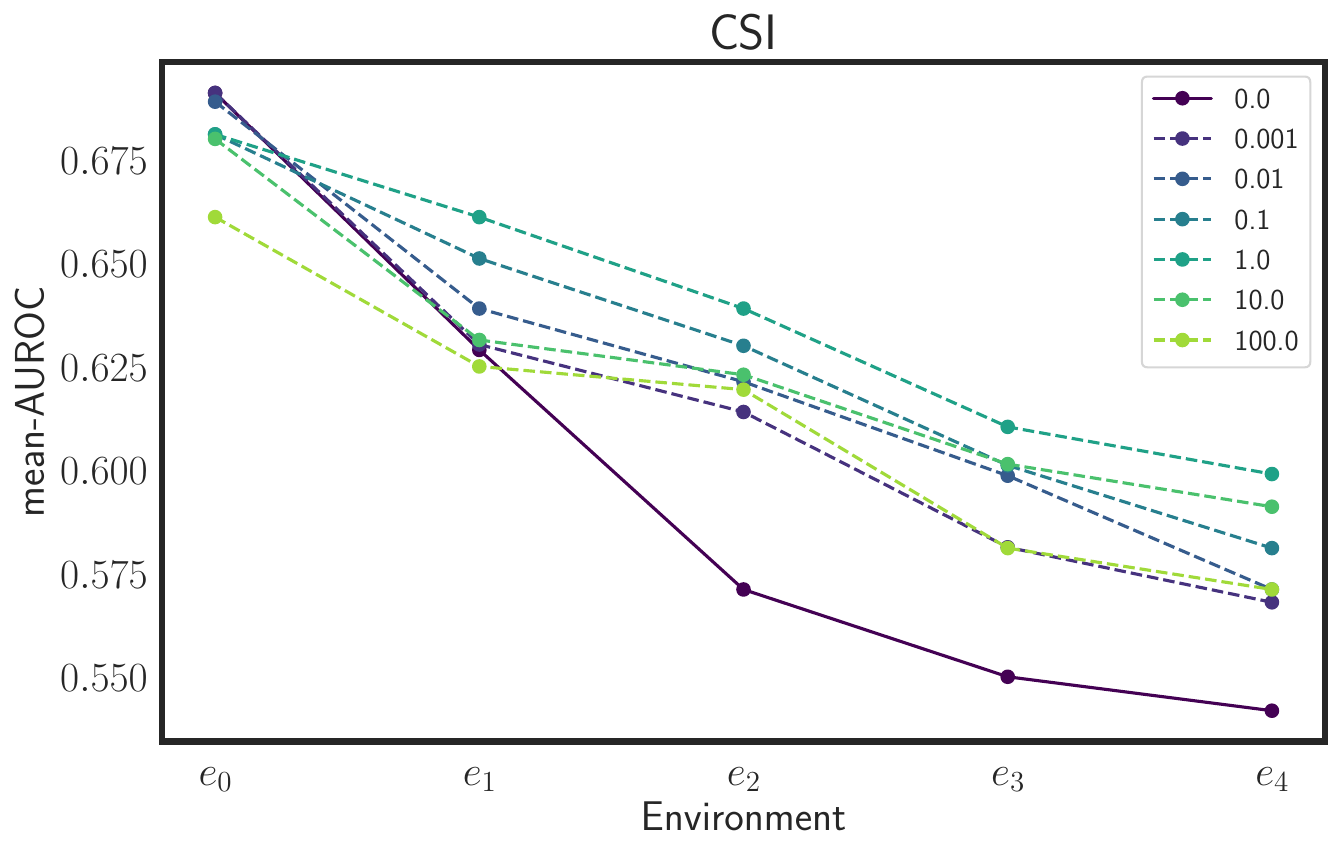}
        \caption{Fashion-MNIST: CSI}
    \end{subfigure}
    \begin{subfigure}{0.32\textwidth}
        \includegraphics[width=\textwidth]{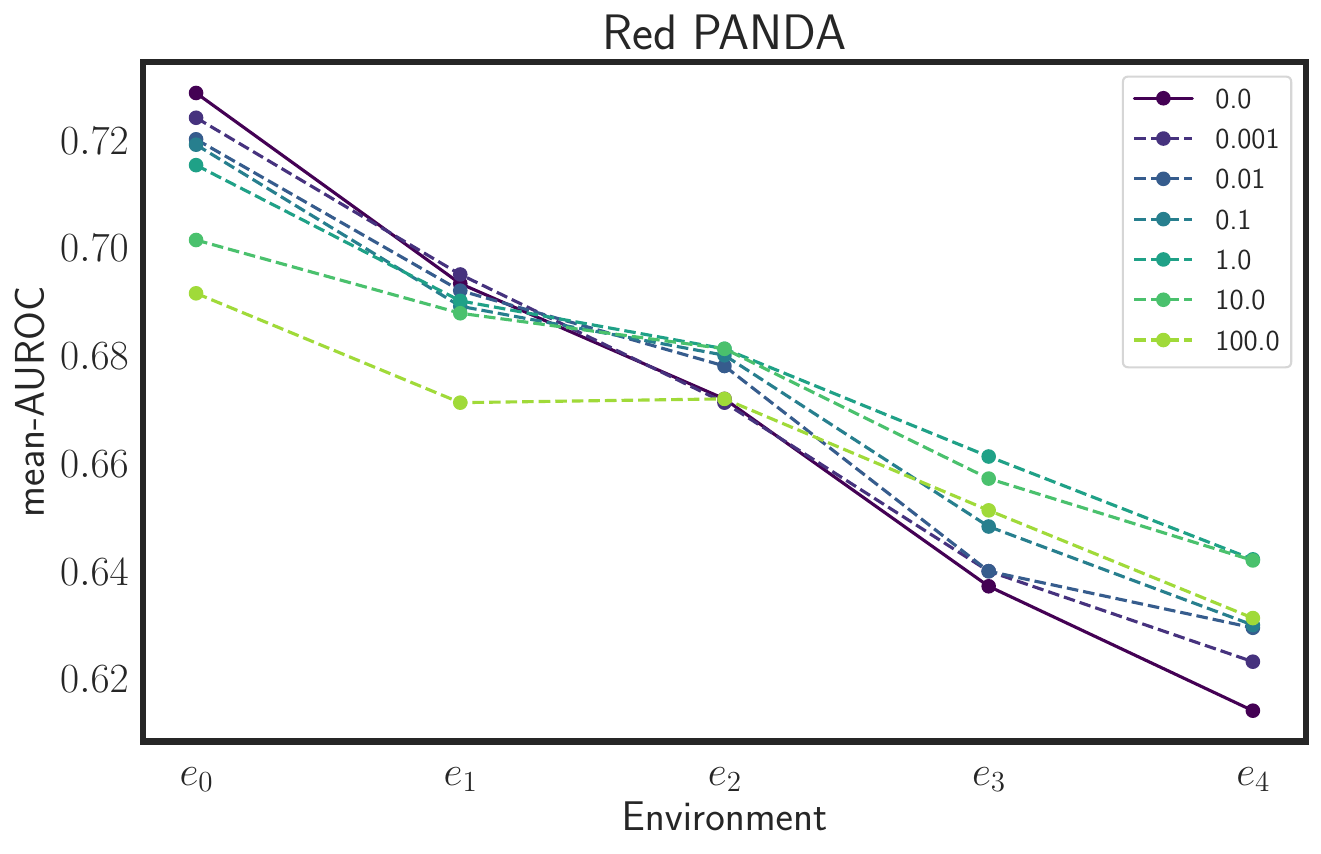}
        \caption{Fashion-MNIST: Red PANDA}
    \end{subfigure}
    
    \caption{Ablation study over the weight of the regularization term for Fashion-MNIST under distribution shift, with separate plots for each model.}
    \label{fig:ablation_fashion_single}
\end{figure}

\section{Additional Discussion}

\subsection{Shortcut Learning in Anomaly Detection}
To expand on the experiment tackling real-world shortcut learning, and to better understand how a distribution shift affects different kinds of shortcut features (\cite{geirhos2020shortcut}) captured by the model, we now will look at how inducing distinct changes to the anomaly detection causal graph may lead to malfunctions in the model.

Suppose we recover our formulation of the partition of an object $X$ into the semantic features that distinguish normal and abnormal samples, $X_a$, and into the style features induced by the environment, $X_e$. In that case, it is possible to distinguish between settings that may lead to a model failure when a shortcut feature is captured. 

Let us simplify our analysis by considering the setting where the training data is sampled under the intervention  $p^{\textit{do}(E=e')}(X)$, that is  the style features are fixed into a specific setting, $X_e^{'}$. This is a prevalent setting in real-world applications as spurious correlations between style and semantic features may occur when sampling the training data.
Remember that in the training set, $X_a$ only produces features of normal objects. Under this constraint, it was already previously noted that anomaly detection methods are particularly susceptible to capturing the style features as a prominent factor for the representation of $X$ (\cite{ming2022impact}). 

Moving to the evaluation stage of the anomaly detector, we can then consider two settings. 

The first setting consists of no changes to the intervention $p^{\textit{do}(E=e')}(X)$, that is, there was no distribution shift in the sampling of the data. In this setting, the main surrogate for a model failure is derived from anomalies that are characterized by small changes in $X_a$ from normal to abnormal samples. That is, the style features would be considered the main source of information to characterize new samples, and under these constraints, it is only natural that all test instances would be highly likely to be set as normal. This setting was introduced by both \cite{ming2022impact} and \cite{cohen2023red}, and the underlying features can be referred to as \textit{nuisance features}. 

The second setting consists of a different intervention, $p^{\textit{do}(E=e'')}(X)$ such that it differs from the training density $p^{\textit{do}(E=e')}(X)$. In particular, we can consider an intervention that only changes stylistic features, $X_e$, captured by the model. We are essentially operating under a highly targeted covariate shift that focuses on the shortcut features. Therefore, depending on the extent of the changes in $X_a$ from normal to abnormal samples and how well they are captured by the model, this distribution shift would lead to an anomaly detector that classifies all new samples as anomalies. 

Note that in both settings, as shown in our main presentation of the method, by inducing a partially conditional invariance to different environments, our regularization method also inherently introduces an invariance to the style features $X_e$. As supported by our results in the synthetic covariate shift experiments, we believe this also produces models that are not only robust to covariate shifts, as in the second scenario, but also to shortcut learning, as described in the first scenario. 

\subsection{Data Augmentation}
Although data augmentation is a well-established technique to tackle model robustness to distribution shifts, there are two main problems with solely relying on data augmentation to solve a problem of distribution shift. First, data augmentation relies on both knowing of the existence of a distribution shift, and a way to simulate it. This could be achievable when the distribution shift is characterized by transformations in easily identifiable attributes (e.g. the color of a digit), and which are also easy to simulate through data transformations. However, in real-world settings, this is rarely the case: distribution shifts are complex transformations that are almost impossible to identify, and in many cases impossible to programmatically simulate. For example, in the case of the Camelyon17 dataset that considers histological cuts from different hospitals, the changes between environments that are derived from the change in histological staining are easy to visually inspect, and could even be simulated. But, by changing the hospital, the patient population also changes, and with it several cofounders, such as group age or patient comorbidities. These are both impossible to know without further annotation and unfeasible to simulate. Furthermore, data augmentation does not induce an invariance to a particular distribution shift, giving no additional “guarantees”. It only increases the pool of examples of the data in hopes that the model implicitly captures the additional simulated variance in the images.

Yet, data augmentation could still be beneficial in the context of penalized invariant regularization, by producing meaningful augmentations in the context of a multi-environment setting. It could be used to generate additional data for a single intervention, thus increasing the pool of available samples of a specific environment, or be used to generate new data for a new intervention, leading to an increased number of environments available, and the overall pool of samples in the dataset. This would effectively alleviate the second drawback of solely using data augmentations. 

\subsection{Choice of Distance Metric}
We considered other options aside from MMD, like the Wasserstein distance. MMD is known to be computationally easier than Wasserstein’s distance. This is because MMD can be easily computed using Gaussian kernels, whereas the Wasserstein distance requires computing a supremum over a set of probability measures, which is computationally intractable in general.

Another option we considered was the KL-divergence. However, previous works have demonstrated that this divergence is very unstable for probability distributions supported on low-dimensional manifolds (\cite{pmlr-v70-arjovsky17a}). Note that many of the methods used for anomaly detection and machine learning are trained on samples from such distributions. A similar argument applies for variations of this divergence like Shannon-Jensen divergence and total variation.

Finally, we remark that MMD has a strong theoretical foundation. In spite of its simplicity, it is derived from a supremum of differences of expected values of functions from a reproducible-kernel Hilbert space (\cite{gretton2012kernel}). It has been shown that when this metric is 0 then the two distributions must match, which is precisely what we aim to achieve when learning representations that are invariant to the environments.

\section{Implementation Details}

\subsection{Baselines}
\paragraph{STFPM} The STFPM (\cite{wang2021student_teacher}) algorithm incorporates a pretrained teacher network and a student network that share the same structure. The student network assimilates the distribution of images devoid of anomalies by aligning the features with corresponding features in the teacher network. To boost robustness, the algorithm uses multi-scale feature matching. This multi-tiered feature alignment lets the student network absorb a blend of multi-level insights from the feature pyramid, thereby permitting the detection of anomalies of varying magnitudes.

During the inference stage, the feature pyramids of both teacher and student networks are put into comparison. A larger discrepancy between these pyramids implies a heightened likelihood of the presence of anomalies.

\paragraph{Reverse distillation} The reverse distillation method (\cite{Deng_2022_CVPR}) is assembled from three networks: an initial pretrained feature extractor, $f$, a bottleneck embedding, $\phi$, and the student decoder network, $\nu$. The primary layer, or the backbone, of $f$ is derived from a \textit{ResNet} model that was pretrained on the ImageNet dataset.

During the execution of a forward pass, the model extracts features from three separate ResNet blocks. These features are encoded by amalgamating the three feature maps using the multi-scale feature fusion block of $\phi$, and then transferred to the decoder, $\nu$, which is constructed to mirror the feature extractor, albeit with operations reversed.

Throughout the training process, the output of these mirrored blocks is made to match the outputs from the respective layers of the feature extractor. This is ensured by adopting the cosine distance as the loss metric.

\paragraph{CFA} Feature Adaptation based on CFA (\cite{lee2022cfa}) identifies anomalies utilizing features that are specifically tailored to the target dataset. The CFA model comprises two main elements: firstly, a learnable patch descriptor that learns and assimilates features oriented towards the target, and secondly, a scalable memory bank that remains unimpacted by the size of the target dataset. In conjunction with a pre-trained encoder, CFA applies a patch descriptor and memory bank. By doing so, it makes use of transfer learning to bolster the density of normal features. Consequently, this facilitates an easier distinction between normal and abnormal features.

\paragraph{Mean-shifted} The mean-shifted contrastive learning method 
(\cite{reiss2021mean}) introduces a novel loss function that calculates angular distances using the mean of all feature vectors as a reference point. This is done in contrast to using the origin as a reference and the Euclidean distance. It also combines two loss functions, one involving contrastive terms akin to \cite{chen2020simple}, but these terms are positioned around a hypersphere centred on the mean of all feature vectors. To deter positive samples from repelling themselves, it also incorporates an angular centre loss that encourages samples to gravitate towards the mean of normal samples.

\paragraph{CSI} CSI (\cite{tack2020csi}) is a direct extension of \cite{chen2020simple}, introducing a unique form of data augmentations known as distribution-shifting augmentations. In this setup, distribution-shifted augmentations are treated as negative samples instead of positive ones and are consequently pushed away from all positive samples. These augmentations include manipulations such as rotations and permutations. The augmentation's potential to shift the distribution is assessed through the AUROC, where samples altered by the said augmentation are considered out-of-distribution samples. The underlying notion here is that distinguishability is directly proportional to the shift in distribution.

\paragraph{Red PANDA} The Red PANDA method for anomaly detection \cite{cohen2023red} tackles the particular problem of anomaly detection under nuisance or distracting features. Relying on labels from the nuisance factors, it employs a contrastive disentanglement loss following \cite{kahana2022contrastive}, in conjunction with a  perceptual loss to train a generator function end-to-end with a pretrained encoder. 

\subsection{Anomaly Scoring}
\paragraph{STFPM}
During training, the student feature tries to align the distribution of training dataset with the teacher. During prediction, the input $x$ is fed into both student and teacher feature extractors, where the student outputs $f_{s}(x)$ and teacher outputs $f_t(x)$.  For the anomaly scoring, it relies in a traditional density estimation approach. The assumption would be that the normal samples are mapped to the high density area where the student encoder and the teacher encoder are aligned, and the anomalous samples are mapped to the low density area of the student extractor. Therefore, the anomaly score is computed as the distance between $f_s(x)$ and $f_t(x)$, i.e. $AS(x) = d(f_s(x), f_t(x))$. In this case the distance metric $d$ is the $l_2$-norm.

\paragraph{Reverse distillation}
The anomaly scoring function here is derived from the standard anomaly scoring functions used in reconstruction-based anomaly detection algorithm. In particular, the anomaly score is defined as the distance between the encoded features and the reconstructed features from the decoder. $AS(y) = d(f(y), \nu(\phi(f(y))$, where $\nu(\cdot)$ is the decoder, $\phi(\cdot)$ is the distiller and $f(\cdot)$ is the encoder. The idea behind this anomaly scoring function is that the decoder has learned to reconstruct normal samples due to training dataset, but isn't able to reconstruct anomalous samples. Therefore, anomalous samples would have a higher anomaly score.

\paragraph{CFA}
For CFA we use the standard image-level density estimation approach. In particular, the training samples are mapped to a feature map $f(x)$ and clustered into $k$ clusters using $k$-means. For the prediction, given an input $y$, its final feature $f(y)$ is computed after feeding it into the feature extractor and descriptor. Then the $d$ nearest cluster centers of $f(y)$ would be selected, and the anomaly score for that sample is the mean distance to those $d$ centers. $AS(y) = \Sigma_{f_i \in N_k(x)} d(f_i, f(y))$. In this case, the distance metric $d$ is simply the $l_2$ distance. The idea behind this approach is to assume that the anomalous samples would be mapped to low-density area in feature space, which are far away from all the cluster centers, while the normal samples would be mapped to high density area in feature space, which is close to the normal samples in training dataset. 

\paragraph{Mean-shift and Red PANDA} Both contrastive learning based methods used as baselines, namely, mean-shift  (\cite{reiss2021mean}) and Red Panda (\cite{cohen2023red}) rely on finetuning an encoder (both pre-trained or not) by grouping the set of feature vectors from images in the training data around a sub-region of the hypersphere centered in the origin. During prediction time, the most common approach to classify a new sample as anomaly or not is through the mean distance of the $k$NN normal images. Following the original work, in mean-shift, $k$ was set to 2, and in Red Panda it was set to 1. 

\paragraph{CSI}
CSI (\cite{tack2020csi}) relied on only a vanilla contrastive loss between original samples, and highly augmented samples to serve as a proxy for abnormal samples. Similarly to the previous setting of mean-shift and Red Panda, this leads to a feature space that falls around the hypersphere centered in the origin, but not necessarily in the surface. However, operating with the underlying hypothesis that the highly augmented samples match the anomalies, the feature vectors from images in the training data are already being pushed to the diametrically opposite side of the hypersphere when compared to abnormal samples. Additionally, it was empirically verified that the norm of vectors of abnormal samples is much lower than that of in-distribution samples. This leads to a distance criterion that measures the closest training sample through the cosine similarity and the norm of the feature vector of the sample: $\max_m \text{sim}(f(x_m),f(x))\cdot|f(x)|$, where $f$ is the encoder that maps the input object, $x$, to its feature space, and $\text{sim}$ is the cosine similarity.

\subsection{Hyperparameters}

As this work considered a novel setting where each anomaly detection method was evaluated for the first time, we modestly optimized hyperparameters. Our approach consisted of two primary steps. The first involved scaling up two key factors: (a) batch size, and (b) learning rate. Subsequently, we methodically scanned through an array of distinct parameters for each baseline model. These included the backbones ResNet18, ResNet34, ResNet50 and WideResNet50,  alongside various anomaly scoring methodologies that leverage image-level, density estimation, reconstruction error, and pixel-wise density estimation approaches. An additional aspect of our study was an ablation analysis where the regularization weight was fine-tuned by sweeping through the set of values ${0.001,0.01,0.1,1,10,100}$. 
We adhered to the hyperparameters as depicted in the original works for all other variables and refrained from performing any further optimizations on them.

\begin{table}[ht]
\scriptsize
\centering
\begin{tabular}{lc c c c c c}
\specialrule{.1em}{.05em}{.05em} 
 & STFPM & ReverseDistil & CFA & MeanShift & CSI & Red PANDA \\
 & \tiny{\cite{wang2021student_teacher}} & \tiny{\cite{Deng_2022_CVPR}} & \tiny{\cite{lee2022cfa}} & \tiny{\cite{reiss2021mean}} & \tiny{\cite{tack2020csi}} & \tiny{\cite{cohen2023red}} \\ 
\hline
& & & & & & \\
\multicolumn{7}{l}{\textbf{Camelyon17} (3x96x96)} \\
& & & & & & \\
learning rate & $10^{-2}$ & $10^{-2}$ & $10^{-5}$ & $10^{-3}$ & $10^{-3}$ & $10^{-4}$\\
optimizer & SGD & Adam & AdamW & Adam & Adam & SGD \\
batch size & 32 & 32 & 16 & 32 & 64 & 128 \\
backbone & ResNet18 & WideResNet50 & ResNet18 & ResNet50 & ResNet18 & ResNet50\\
pretaining & True & True & True & True & False & True \\
best reg. weight & 100 & 10 & 100 & 10 & 1 & 10 \\
\multirow{2}{*}{anomaly score}                      & \multirow{2}{*}{\Longunderstack[c]{density estimation}} & \multirow{2}{*}{\Longunderstack[c]{reconstruction error}} & \multirow{2}{*}{\Longunderstack[c]{density estimation}} & \multirow{2}{*}{\Longunderstack[c]{density estimation}} & \multirow{2}{*}{\Longunderstack[c]{density estimation}} & \multirow{2}{*}{\Longunderstack[c]{density estimation}}\\
 & & & & & & \\
\hline
& & & & & & \\
\multicolumn{7}{l}{\textbf{DiagViB-6 (MNIST)} (3x256x256)} \\ 
& & & & & & \\
learning rate & $10^{-1}$ & $10^{-2}$ & - & $10^{-3}$ & $10^{-3}$ & $3\cdot10^{-4}$\\
optimizer & SGD & Adam & - & Adam & Adam & SGD \\
batch size & 32 & 32 & - & 32 & 32 & 32 \\
backbone & ResNet18 & WideResnet50 & - & ResNet50 & ResNet18 & ResNet50 \\
pretaining & True & True & - & True & False & True \\
best reg. weight & 1 & 10 & - & 1 & 1 & 10 \\
\multirow{2}{*}{anomaly score}                    & \multirow{2}{*}{\Longunderstack[c]{density estimation}} & \multirow{2}{*}{\Longunderstack[c]{reconstruction error}} & \multirow{2}{*}{\Longunderstack[c]{-}} & \multirow{2}{*}{\Longunderstack[c]{density estimation}} & \multirow{2}{*}{\Longunderstack[c]{density estimation}} & \multirow{2}{*}{\Longunderstack[c]{density estimation}}\\
 & & & & & & \\
\hline
& & & & & & \\
\multicolumn{7}{l}{\textbf{DiagViB-6 (Fashion-MNIST)} (3x256x256)} \\ 
& & & & & & \\
learning rate & $10^{-1}$ & $10^{-2}$ & - & $10^{-3}$ & $10^{-3}$ & $3\cdot10^{-4}$\\
optimizer & SGD & Adam & - & Adam & Adam & SGD \\
batch size & 32 & 32 & - & 32 & 32 & 32 \\
backbone & ResNet18 & WideRestNet50 & - & ResNet50 & ResNet18 & ResNet50 \\
pretaining & True & True & - & True & False & True \\
best reg. weight & 0.1 & 1 & - & 10 & 1 & 1 \\
\multirow{2}{*}{anomaly score}                     & \multirow{2}{*}{\Longunderstack[c]{density estimation}} & \multirow{2}{*}{\Longunderstack[c]{reconstruction error}} & \multirow{2}{*}{\Longunderstack[c]{-}} & \multirow{2}{*}{\Longunderstack[c]{density estimation}} & \multirow{2}{*}{\Longunderstack[c]{density estimation}} & \multirow{2}{*}{\Longunderstack[c]{density estimation}}\\
 & & & & & & \\ 
\specialrule{.1em}{.05em}{.05em} 
& & & & & & \\
\end{tabular}
\caption{A detailed summary of the hyperparameters used for each evaluated model across three datasets: Camelyon17, DiagViB-6 (MNIST), and DiagViB-6 (Fashion-MNIST). Parameters include learning rate, scheduler, optimizer, batch size, backbone, pretraining, regularization weight, and mmd kernel type, along with the type of anomaly score. Notably, the CFA model could not be successfully implemented for DiagViB-6 based experiments despite trying an extensive range of hyperparameter combinations. Models are referenced by their respective citations.}
\label{tab:hyperparemeters}
\end{table}

\subsection{Backbone choice}
One thing we notice during our experiments is that for models that rely on the pretrained backbones, the choice of backbone matters. For instance, for STFPM, the optimal choice was the simplest feature extractor ResNet18, but for reverse distillation, the optimal choice was WideResNet50. The choices seem to be model dependent more than dataset relevant. For more details on the chosen backbone for each method refer to Tab.~\ref{tab:hyperparemeters}

\subsection{Computational resources}
The complete project required 3400 hours of GPU usage throughout all experiments, covering development, testing, and comparisons. The resources supplied were part of a local custer, and consited of two GPU models: the NVIDIA TITAN RTX and the NVIDIA Tesla V100.

\subsection{Code and Licensing}
The main Python libraries used in our implementation, were Pytorch, which is under a BSD-3 license\footnote{https://github.com/pytorch/pytorch/blob/main/LICENSE}, and Pytorch Lightning, which is under Apache 2.0 license\footnote{https://github.com/Lightning-AI/lightning/blob/master/LICENSE}.

Methods that were derived from the anomalib library (\cite{anomalib}), namely STFPM, reverse distillation, and CFA, were already implemented as a Pytorch Lightning Module, and are all under an Apache 2.0 license\footnote{https://github.com/openvinotoolkit/anomalib/blob/main/LICENSE}. These were incorporated directly in our pipeline. 

DCoDR (\cite{kahana2022contrastive}), which Red Panda is based from, was released under a Software Research License\footnote{https://github.com/jonkahana/DCoDR/blob/main/LICENSE}. Our experiments for Red Panda were derived directly from the official repository. Mean-shifted contrastive learning was released under a Software Research License\footnote{https://github.com/talreiss/Mean-Shifted-Anomaly-Detection/blob/main/LICENSE}. We re-implemented this method as a Pytorch Lightning Module, loosely following its original official implementation. 
DiagViB-6 (\cite{eulig2021diagvib}) and the Camelyon17 (\cite{koh2021wilds}) datasets were also publicly released with a GNU Affero General Public License v3.0\footnote{https://github.com/boschresearch/diagvib-6/blob/main/LICENSE} and a MIT License\footnote{https://github.com/p-lambda/wilds/blob/main/LICENSE}, respectively. Our implementation follows directly from its official repository.

\end{document}